\newtheorem{lemma}{Lemma}
\newcommand{\method}{{STRUCTURE }}
\newcommand{\shortmethod}[1][]{%
  \mathcal{R}_{\mathrm{S}}%
  \ifx\relax#1\relax
  \else
    ^{(#1)}%
  \fi
}
\definecolor{Gray}{gray}{0.90}
\newcolumntype{g}{>{\columncolor{Gray}}c}
\definecolor{ffe1da}{RGB}{255,225,218}
\definecolor{F7E0D5}{RGB}{247,224,213}
\definecolor{darkF7E0D5}{RGB}{209,154,128}
\colorlet{Light}{white!0!F7E0D5}
\lstdefinestyle{neuriPy}{
  language        = Python,
  basicstyle      = \ttfamily\footnotesize,
  keywordstyle    = \color{NavyBlue}\bfseries,
  commentstyle    = \color{OliveGreen}\itshape,
  stringstyle     = \color{BrickRed},
  showstringspaces= false,
  frame           = single,
  frameround      = tttt,
  rulecolor       = \color{black!35},
  backgroundcolor = \color{black!03},
  aboveskip       = 0.7\baselineskip,
  belowskip       = 0.7\baselineskip,
  tabsize         = 2,
  columns         = fullflexible,
  breaklines      = true,
  breakatwhitespace = true,
  captionpos      = b     
}
\title{With Limited Data for Multimodal Alignment,\\ Let the STRUCTURE Guide You}
\author{%
    Fabian Gr\"oger$^{1,2,3,}$\thanks{Joint first authorship}\; , Shuo Wen$^{1,}$\footnotemark[1]\; , Huyen Le$^{1}$, Maria Brbi\'c$^{1}$
    \\[0.2em]
    $^1$EPFL \; $^2$University of Basel \; $^3$HSLU
    \\[0.4em]
    \href{https://brbiclab.epfl.ch/projects/structure/}{\texttt{brbiclab.epfl.ch/projects/structure}} 
}
\begin{document}

\frenchspacing
\maketitle

\begin{abstract}
Multimodal models have demonstrated powerful capabilities in complex tasks requiring multimodal alignment, including zero-shot classification and cross-modal retrieval. However, existing models typically rely on millions of paired multimodal samples, which are prohibitively expensive or infeasible to obtain in many domains.
In this work, we explore the feasibility of building multimodal models with limited amount of paired data by aligning pretrained unimodal foundation models. We show that high-quality alignment is possible with as few as tens of thousands of paired samples---less than $1\%$ of the data typically used in the field. To achieve this, we introduce STRUCTURE, an effective regularization technique that preserves the neighborhood geometry of the latent space of unimodal encoders. 
Additionally, we show that aligning last layers is often suboptimal and demonstrate the benefits of aligning the layers with the highest representational similarity across modalities. These two components can be readily incorporated into existing alignment methods, yielding substantial gains across 24 zero-shot image classification and retrieval benchmarks, with average relative improvement of $51.6\%$ in classification and $91.8\%$ in retrieval tasks. Our results highlight the effectiveness and broad applicability of our framework for limited-sample multimodal learning and offer a promising path forward for resource-constrained domains. 
\end{abstract}

\section{Introduction}
\looseness=-1
Unimodal foundation models (FMs) have achieved remarkable progress in recent years, demonstrating high performance on a variety of complex tasks across diverse domains.
Large language models (LLMs) \citep{gpt4,touvron2023llama} have shown unparalleled abilities across a wide range of natural language understanding and generation tasks, reaching human-level performance on many complex reasoning and comprehension tasks.
In parallel, vision FMs \citep{dinov2,kirillov2023segment} have delivered similar breakthroughs in visual perception, enabling systems to perform tasks such as object recognition and segmentation with minimal supervision.
In scientific domains, models such as AlphaFold \citep{jumper2021highly} and ESM2 \citep{lin2022language} have demonstrated the potential of large-scale models to solve highly specialized and complex problems, including accurate protein structure prediction and sequence-based modeling of biomolecular functions.

While these models are highly effective within their respective modalities, many applications involve multimodal data, where it is important to map different modalities into a shared representation space to enable meaningful cross-modal comparison, retrieval, and classification.
Pioneering multimodal models like CLIP~\cite{radford2021learning} and CLAP~\cite{elizade2022clap} align visual and auditory inputs with language through contrastive learning and achieve strong zero-shot performance across a broad spectrum of multimodal tasks.
However, these powerful multimodal models rely on vast amounts of paired training data (\textit{e.g.}, CLIP uses 400M pairs), which are often unavailable in many domains like healthcare and biology, where collecting high-quality multimodal data is expensive and labor-intensive.

\looseness=-1
A promising direction to address this limitation is to leverage powerful representations from pretrained unimodal FMs for multimodal alignment.
However, existing alignment methods \citep{uni2multi, method_mapping, method_linearmapping} still require large quantities of paired examples, often tens of millions, to learn a shared embedding space effectively.
On the other hand, unsupervised techniques \citep{method_cka, method_cka2} do not use labeled data but primarily perform sample-level matching, thus failing to construct a shared embedding space. 
This opens a fundamental question: 
\textit{Can we align FMs from different modalities into a shared representation space using a limited number of multimodal samples?} 

In this work, we answer this question affirmatively by introducing a modular alignment strategy that requires as few as tens of thousands of paired examples, less than $1\%$ of the data used by existing modalities alignment methods~\citep{uni2multi, method_mapping, method_linearmapping}. 
Specifically, we introduce STRUCTURE, a novel regularization technique designed to preserve the multiscale neighborhood structure of each unimodal latent space during alignment, ensuring that the relationships between samples captured by the unimodal encoders are retained. 
In addition, based on the observation that higher similarity correlates with better alignment performance, we propose to align those layers with the highest representational similarity.
Incorporating both components into existing alignment methods leads to consistent relative performance gains across 22 zero-shot classification and two retrieval benchmarks, averaging $51.6\%$ in classification and $91.8\%$ in retrieval, demonstrating superior performance in low-data settings.
Finally, we show that incorporating just a few labeled examples per class from the target domain into the training set can effectively bridge the performance gap to large-scale multimodal models trained on hundreds of millions of multimodal data, highlighting the importance of domain coverage over sheer data volume in multimodal alignment.

\begin{figure}[t]
    \centering
    \includegraphics[width=\linewidth]{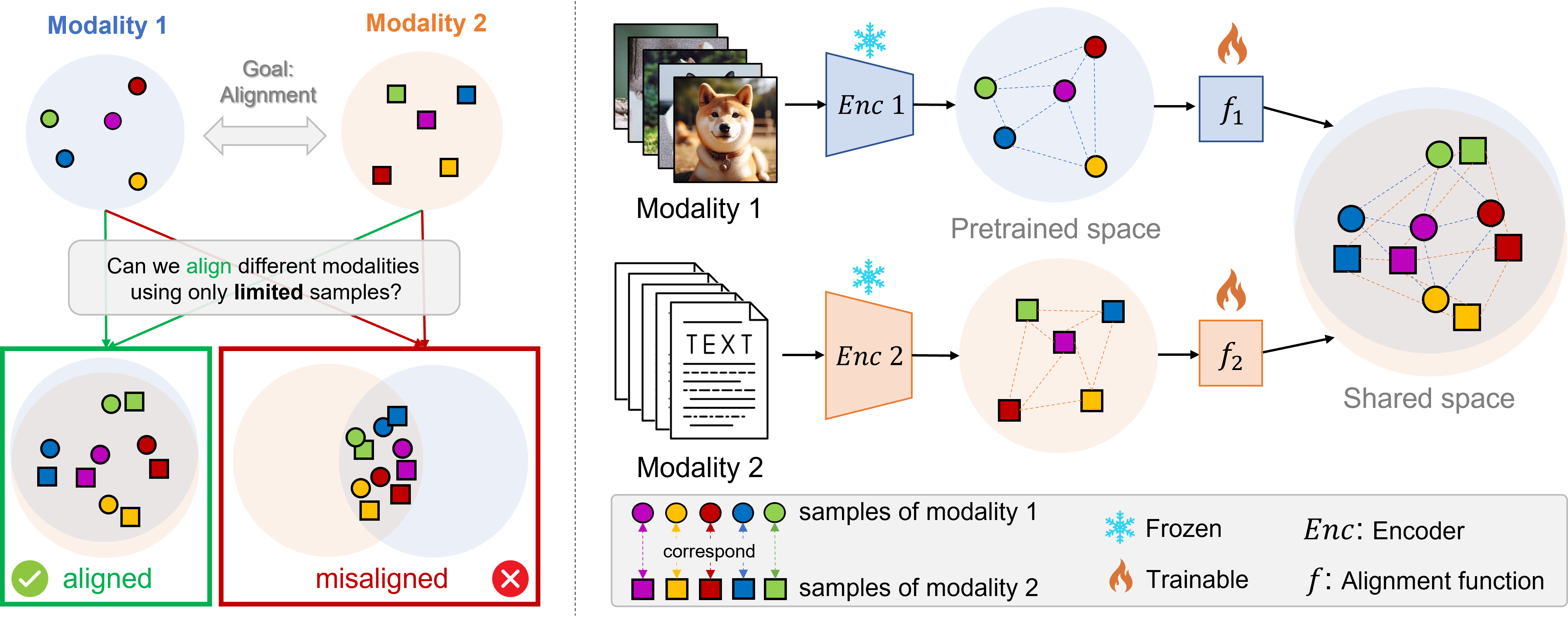}
    \caption{
        Overview of the proposed approach for cross-modal alignment with limited data. The objective is to align representations from two modalities (\textit{e.g.}, images and text) into a shared embedding space. The central challenge is guiding the model toward a well-aligned solution, rather than a misaligned one, when only a small amount of paired data is available. The key idea is to freeze pretrained encoders and learn lightweight alignment functions that preserve each modality's pretrained latent structure during alignment.
    }
    \label{fig:keyidea}
\end{figure}
\vspace{-2mm}

\section{Related work}\label{sec:RelatedWork}
\paragraph{Multimodal foundation models.} 
Recent advances in multimodal FMs have produced unified architectures capable of understanding and generating content across vision and language. Early models such as CLIP~\cite{radford2021learning} and ALIGN~\cite{align} use contrastive learning to align image and text embeddings, achieving strong zero-shot performance in classification and retrieval. In contrast, models like BLIP~\cite{blip} and GIT~\cite{git} adopt generative pretraining to support tasks such as image captioning and visual question answering. More recent efforts, including FLAVA~\cite{flava}, PaLI~\cite{pali}, Kosmos-1~\cite{kosmos}, and Gemini~\cite{gemini}, aim for broad task coverage through unified multimodal training, reflecting a shift toward scalable, general-purpose multimodal systems.
Despite their impressive capabilities, existing multimodal models rely heavily on massive paired datasets, often hundreds of millions of samples. 
In contrast, in this work we explore how to align powerful unimodal models that are readily available in many domains using only limited multimodal supervision, reducing reliance on large-scale paired data and enabling multimodal learning in low-data regimes.

\paragraph{Parameter-frozen modality alignment.}
Parameter-frozen modality alignment is an efficient strategy for building multimodal systems by aligning pretrained unimodal encoders without updating their internal weights. 
\textit{Unsupervised alignment methods} seek to align representations without relying on paired data. Techniques such as Centered Kernel Alignment~\cite{method_cka} focus on maximizing representational similarity between modalities, leveraging shared structural patterns. However, a key limitation of these methods~\cite{method_cka2, method_cka, cca} is that they primarily perform sample-level matching, thus failing
to construct a shared embedding space. Furthermore, they are not able to utilize paired data even when available, missing out on rich alignment signals that could significantly enhance performance.
In contrast, \textit{supervised alignment methods} assume access to abundant paired data, such as millions of paired samples, and train lightweight alignment modules (\textit{e.g.}, linear projections or MLPs) while keeping the encoders frozen. These approaches~\cite{uni2multi, method_mapping} have achieved strong performance in tasks like zero-shot classification and retrieval. However, these methods are limited by their dependence on large-scale multimodal datasets, requiring tens of millions of paired samples to effectively train the models. In contrast, our work explores parameter-frozen alignment in a low-data regime, with tens of thousands of paired samples, which is $0.02\%$ of the paired samples used by previous works.
To address the challenges of low-data regimes, we propose an effective strategy that can be easily integrated into existing methods~\cite{uni2multi, method_mapping} to make them more effective under low-data conditions. 

Other works have also focused on data-efficient alignment. 
FuseMix~\cite{vouitsis2024data} shares our motivation of using limited paired data but focuses on improving training efficiency via latent space augmentations. 
Our geometric regularization and layer selection are complementary to their approach, and as shown in our experiments, combining them yields further gains. 
Learning-free methods like ASIF~\cite{norelli2023asif} also leverage neighborhood structure, but for direct matching without training an alignment function.

\paragraph{Platonic representation hypothesis.}
The Platonic representation hypothesis~\cite{pmlr-v235-huh24a} suggests that models from different modalities can converge to similar internal representations. This insight motivates the possibility of aligning separate representation spaces. While earlier multimodal FMs like CLIP~\cite{radford2021learning} depend on end-to-end training with vast amounts of paired data, the Platonic view offers a more efficient alternative: aligning independently trained unimodal models within a shared embedding space. This approach opens the door to scalable and flexible multimodal learning without the need for joint training, and it has recently spurred interest in supervised and unsupervised alignment techniques~\cite{uni2multi,method_cka}. 
Building on this direction, which is related to the broader field of geometric priors in representation alignment~\cite{moschella2023relative,maiorca2023latent,smith2017offline,artetxe2018robust}, our work explores its potential in the challenging low-data setting.

\section{Problem setup}\label{sec:setup}
We consider the task of aligning representations from independently pretrained encoders across different modalities. As illustrated in Figure~\ref{fig:keyidea}, we keep the encoders frozen and learn lightweight alignment functions that map from each modality’s latent space into a shared space where semantically related samples are close. We next formalize this setup.

\paragraph{Modality alignment with pretrained representations.}
Let $\mathcal{X}_1 \subseteq \mathbb{R}^{d_1}$ and $\mathcal{X}_2 \subseteq \mathbb{R}^{d_2}$  be the latent spaces of two pretrained unimodal encoders, which correspond to outputs from either \textit{the last or intermediate layers} of the encoders, and $d_1$ and $d_2$ be their respective dimensions, which do not need to be equal.
Given a set of $N$ paired multimodal samples $\mathcal{D} = \{(x^i_1, x^i_2)\}_{i=1}^N$, where $x^i_1 \in \mathcal{X}_1$ and $x^i_2 \in \mathcal{X}_2$, our goal is to learn two alignment functions $f_1: \mathbb{R}^{d_1} \rightarrow \mathcal{A}$ and $f_2: \mathbb{R}^{d_2} \rightarrow \mathcal{A}$, which map the modality-specific spaces $\mathcal{X}_1$ and $\mathcal{X}_2$ into a shared embedding space $\mathcal{A} \subseteq \mathbb{R}^{k}$ of dimension $k$. 
Alignment in the shared space is achieved when the paired samples $(x_1^i, x_2^i)$ are closer to each other than to any non-paired sample. 
Specifically,
\begin{equation}\label{eq:AlignmentDefinition}
    \text{sim}(f_1(x_1^i), f_2(x_2^i)) \geq \text{sim}(f_1(x_1^i), f_2(x_2^j))  \quad \forall j \ne i,
\end{equation}
where $\text{sim}(\cdot)$ denotes a similarity function, such as cosine similarity.

\looseness=-1
Different from previous works, we focus on this alignment problem under the challenging condition of \textit{limited paired data}. Specifically, we focus on the scenario when $N$ is relatively small (\textit{i.e.}, tens of thousands of samples) compared to tens of millions of paired samples considered in previous works~\cite{uni2multi, method_mapping}.

We unify different alignment methods under a joint framework that consists of three main components: \textit{(i)} modality-specific latent spaces $\mathcal{X}_1$ and $\mathcal{X}_2$, \textit{(ii)} alignment functions $f_1$ and $f_2$, and \textit{(iii)} the objective function $\mathcal{L}_A$, which guides the construction of the shared space $\mathcal{A}$. Table \ref{tab:setup} summarizes how existing methods instantiate each of these components. 
In our work, we propose a general framework that can work with any alignment function and any objective function by regularizing the objective using the \method regularization $\mathcal{R}_S$, and using as modality-specific latent layers those layers that have highest representational similarity. 
We introduce both components in the following section.


\begin{table}
\centering
\caption{
    Overview of existing methods and our approach within the modality alignment framework. 
    $\mathcal{L}_C$ represents the standard symmetric contrastive losses from CLIP~\citep{radford2021learning}. 
    $\mathcal{R}_S$ denotes STRUCTURE, a regularization term proposed in our work.
}
\resizebox{\linewidth}{!}{%
\begin{tabular}{l ccc}
\toprule
 \textbf{Method} & \textbf{Modality-specific spaces} & \textbf{Alignment functions} $f$ & \textbf{Objective function} $\mathcal L_A$\\
\midrule
 Linear mapping~\citep{method_mapping} & Last layers & Linear & $\mathcal{L}_C$\\
 Non-linear mapping~\citep{uni2multi}  & Last layers & MLP & $\mathcal{L}_C$ \\
 Token alignment~\citep{uni2multi} & Last layers & Token level MLP & $\mathcal{L}_C$ \\
 CSA~\citep{li2025csa} & Last layers & Linear & Reformulated $\mathcal{L}_C$\\
 \midrule
 Our work & Most similar layers & Any & Any +  $\mathcal{R}_S$\\
\bottomrule
\end{tabular}
}
\label{tab:setup}
\end{table}

\section{Multimodal alignment with limited data}\label{sec:Method}
To align pretrained unimodal encoders using a limited number of paired multimodal samples, we propose two key components: \textit{(i)} STRUCTURE, a regularization that preserves the intrinsic geometry of each modality's latent space, and \textit{(ii)} a strategy for selecting modality-specific latent space, those layer pairs that have highest representational similarity. 
Both components can be seamlessly integrated into existing alignment methods~\cite{uni2multi,method_mapping,li2025csa} (see Table \ref{tab:setup}).

\paragraph{Preserving neighborhood via \method regularization.}
With a limited number of paired samples, it is crucial to preserve the latent structure of pretrained unimodal encoders, trained on millions or even billions of examples, as they encode meaningful relationships between samples. To achieve this, we introduce STRUCTURE, a regularization term that preserves the neighborhood relationships of the pretrained space of unimodal encoders within the aligned space.

Given a modality-specific space $\mathcal{X} \subseteq \mathbb R^d$ and its corresponding shared space $\mathcal{A} \subseteq \mathbb R^k$, where $\mathcal{X}$ can be either $\mathcal{X}_1$ or $\mathcal{X}_2$, the regularization term aims to ensure the hierarchical (\textit{i.e.}, multi-scale) consistency between the relationships expressed by $\mathcal{X}$ and $\mathcal{A}$. 
Specifically, each sample $x_i \in \mathcal{X}$ and $a_i \in \mathcal{A}$ (\textit{i.e.}, $a_i = f(x_i)$) is first $\ell_2$-normalised, $\widehat{x}_i= x_i/\| x_i\|_2, \widehat{a}_i= a_i/\| a_i\|_2$,
and then centered to remove any global translation bias:
\begin{equation}
\textstyle
\widetilde{x}_i=\widehat{x}_i-
   \frac1N\sum_{j=1}^N\widehat{x}_j,
\qquad
\widetilde{a}_i=\widehat{a}_i-
   \frac1N\sum_{j=1}^N\widehat{a}_j .
\end{equation}
We denote the normalized and centered matrices as
$\widetilde{X} = [\widetilde{x}_1, \widetilde{x}_2, \cdots, \widetilde{x}_N] \in \mathbb{R}^{N\times d}$ and $\widetilde{A} = [\widetilde{a}_1, \widetilde{a}_2, \cdots, \widetilde{a}_N] \in \mathbb{R}^{N\times k}$.

The (scaled) similarity matrices are computed with temperature $\tau\!>\!0$ as:
\begin{equation}
\textstyle
S_X = \frac{\widetilde{X}\widetilde{X}^\top}{\tau}, 
\quad 
S_A= \frac{\widetilde{A}\widetilde{A}^\top}{\tau}.
\end{equation}

To interpret the similarities as probability distributions, we apply a softmax function row-wise.
Specifically, we define $P_X = \operatorname{softmax}(S_X)$, $P_A = \operatorname{softmax}(S_A)$ as the corresponding matrices with the similarity distributions of the respective spaces.

To capture relationships reachable by exactly $l$ hops on the similarity graph, we define for each hierarchical level $l = 1, \dots, L$, where $L \in \mathbb{N}$ is the total number of levels\footnote{For a square matrix $P$, the power $P^l$ is defined by repeated matrix multiplication, \textit{i.e.}, $\displaystyle P^l = P\,P\cdots P$ (with $l$ factors), and should not be confused with entrywise exponentiation, generally $(P^l)_{ij}\neq P_{ij}^l$.}:
\begin{equation}
\textstyle
    P_X^{(l)} = \left(P_X \right)^l, \quad
    P_A^{(l)} = \left(P_A \right)^l.
\end{equation}
Intuitively, $P_X$ can be seen as the transition matrix of a random walk on the data manifold.
The $l$-hop matrices thus capture multi-step relationships and progressively more global structure.

\looseness=-1
The key idea of our regularization is to enforce consistency between the structural relationships, \textit{i.e.}, the relative positions and neighborhood structure in the embedding space, captured by $\mathcal{X}$ and $\mathcal{A}$. Thus, we employ Jensen-Shannon divergence because of its symmetric nature to measure the divergence between similarity distributions.
Specifically, at each level $l$, we define the level-specific divergence as:
\begin{equation}
    \text{JS}(P_X^{(l)}, P_A^{(l)}) = 
    \frac{1}{2} D_{\mathrm{KL}}\left(P_X^{(l)} \Big\| M^{(l)}\right) + \frac{1}{2} D_{\mathrm{KL}}\left(P_A^{(l)} \Big\| M^{(l)}\right),
\end{equation}
where 
\begin{equation}
    M^{(l)} = \frac{1}{2}\left(P_X^{(l)} + P_A^{(l)}\right),
\end{equation}
and $D_{\mathrm{KL}}$ is the Kullback-Leibler divergence.
In practice, a small constant \(\varepsilon\) is added inside the logarithm for numerical stability, \textit{i.e.}, $P_X^{(l)} + \varepsilon$ and $P_A^{(l)} + \varepsilon$.

The final formulation of the \method regularizer is a weighted average of the divergences across levels, where lower levels are weighted more heavily to counteract the more concentrated distributions of the higher levels: 
\begin{equation}
    \shortmethod[L](X, A) = 
    \frac{1}{L}\sum_{l=1}^{L}\frac{\text{JS}(P_X^{(l)}, P_A^{(l)})}{l}.
\end{equation}

\looseness=-1
We denote $\shortmethod[L]$ as the regularization that operates on $L$ levels, and set it to 1 if not otherwise specified.

Together with any objective function $\mathcal L_A$ used for representation alignment (e.g., $\mathcal L_C$ in work \cite{uni2multi, method_mapping}), the combined loss is defined as

\begin{equation}
    \mathcal L \;=\; \mathcal{L}_{\text{A}} + \lambda\big(\underbrace{\shortmethod[L](X_1,f_1(X_1))}_{\text{Reg. for Modality 1}} + \underbrace{\shortmethod[L](X_2,f_2(X_2))}_{\text{Reg. for Modality 2}}\big),
\end{equation}
where $\lambda$ is the regularization weight.

Each $\text{JS}(\cdot)$ is non-negative and vanishes \textit{iff} its inputs coincide, with equality exactly when $P_X = P_A$ for all $l$, resulting from perfect multi-scale alignment. 
Moreover, by virtue $\shortmethod$ is invariant to global scaling, translations, and orthonormal rotations, depending solely on the intrinsic, hierarchical relational structure of the embeddings.
Since $\shortmethod$ operates in sample space, and thus is influenced by the size of $N$, we investigate the generalization gap between the empirical and expected values of the regularization.
\begin{lemma}[Generalization bound]
The generalization gap between the empirical and the expected values of $\shortmethod$ is bounded by
\begin{equation}
    \bigl|\hat{\mathcal R}_N - \mathcal R^\star\bigr| \leq \mathcal O\left(\frac{1}{\sqrt{N}}\right).
\end{equation}
where $\hat{\mathcal R}_N$ is the empirical STRUCTURE regularizer and 
${\mathcal R}^\star$ is its expectation over the data distribution. This shows that the empirical regularizer faithfully approximates its population expectation as the number of samples increases.
\end{lemma}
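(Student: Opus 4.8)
The plan is to regard $\hat{\mathcal R}_N$ as a deterministic function $\Phi$ of the $N$ i.i.d. pairs $(x_1^i,x_2^i)$, to take $\mathcal R^\star=\mathbb E[\hat{\mathcal R}_N]$, and to obtain the $\mathcal O(1/\sqrt N)$ gap from a bounded-differences argument. Concretely, I would show that replacing a single pair by an independent copy perturbs $\Phi$ by at most $c=\mathcal O(1/N)$, and then invoke McDiarmid's inequality for a high-probability bound, or the Efron--Stein inequality for the in-expectation / $L^2$ version. In either case $\sum_i c_i^2=N\cdot\mathcal O(1/N^2)=\mathcal O(1/N)$, so McDiarmid gives $|\hat{\mathcal R}_N-\mathcal R^\star|\le C\sqrt{\log(2/\delta)/(2N)}$ with probability $1-\delta$, and Efron--Stein gives $\operatorname{Var}(\hat{\mathcal R}_N)=\mathcal O(1/N)$ hence $\mathbb E|\hat{\mathcal R}_N-\mathcal R^\star|=\mathcal O(1/\sqrt N)$. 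All the probabilistic content is standard once the stability constant $c=\mathcal O(1/N)$ is established, so the entire difficulty sits in that estimate.

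To bound the stability constant I would propagate a single-coordinate perturbation through the pipeline. Since $\|\widehat x_j\|_2=1$, the empirical mean moves by at most $2/N$ when one point is swapped, so every centered vector $\widetilde x_j$ with $j$ not the swapped index moves by $\mathcal O(1/N)$ while the swapped one moves by $\mathcal O(1)$; because $\|\widetilde x_j\|_2\le 2$, the logits $S_X[r,k]=\widetilde x_r\!\cdot\!\widetilde x_k/\tau$ lie in the fixed range $[-4/\tau,4/\tau]$, and swapping point $i$ changes row $i$ and column $i$ of $S_X$ by $\mathcal O(1)$ and all remaining entries by $\mathcal O(1/N)$. The crux---and the step I expect to be the main obstacle---is to stop this $\mathcal O(1)$ change in a single column from producing an $\mathcal O(1)$ change in the row-averaged divergence. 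The enabling observation is that bounded logits with a fixed temperature force every softmax entry to be uniformly $\Theta(1/N)$: indeed $P_X[r,k]\in[e^{-B}/N,\,e^{B}/N]$ with $B=8/\tau$, and by induction the same two-sided $\Theta(1/N)$ bound holds for every power $P_X^{(l)}$. Consequently no row places more than $\mathcal O(1/N)$ mass on column $i$, and the total influence of point $i$, namely the column sum $\sum_r P_X^{(l)}[r,i]$, is $\mathcal O(1)$ rather than $\mathcal O(N)$. For the row-wise-averaged Jensen--Shannon term this means row $i$ itself contributes $\tfrac1N\cdot\mathcal O(1)=\mathcal O(1/N)$, while each of the remaining rows shifts by only $\mathcal O(1/N)$ in total variation, so $\tfrac1N\sum_{r\ne i}\mathcal O(1/N)=\mathcal O(1/N)$ in aggregate.

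It then remains to carry this $\mathcal O(1/N)$ control through the matrix powers and the divergence. For the powers I would use the telescoping identity $P^l-Q^l=\sum_{j=0}^{l-1}P^{j}(P-Q)Q^{\,l-1-j}$ together with row-stochasticity: right multiplication by a row-stochastic matrix is a contraction in the entrywise $\ell_1$ norm $\|\cdot\|_{1,1}$, while left multiplication by $P^j$ inflates it by at most the maximal column sum of $P^j$, which the $\Theta(1/N)$ entry bound keeps at $e^{jB}=\mathcal O(1)$ for each fixed $j$. Since the raw perturbation $\Delta=P'_X-P_X$ satisfies $\|\Delta\|_{1,1}=\mathcal O(1)$ (one heavy row plus $N-1$ rows of mass $\mathcal O(1/N)$), each telescoping term is $\mathcal O(1)$ in $\|\cdot\|_{1,1}$, hence $\|(P_X^{(l)})'-P_X^{(l)}\|_{1,1}=\mathcal O(1)$ and the averaged $\ell_1$ perturbation $\tfrac1N\|(P_X^{(l)})'-P_X^{(l)}\|_{1,1}=\mathcal O(1/N)$ for every fixed $l$.

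Finally, the additive constant $\varepsilon$ inside the logarithm bounds the Jensen--Shannon divergence away from its singular boundary, making it Lipschitz in each argument with constant of order $\log(1/\varepsilon)$; applying this in both arguments (recall $a_i=f(x_i)$, so the swap perturbs $P_A^{(l)}$ as well, handled symmetrically) turns the $\mathcal O(1/N)$ distributional perturbation into an $\mathcal O(1/N)$ change in each level-$l$ term, and the weighted average $\tfrac1L\sum_{l}\tfrac1l(\cdot)$ over $l=1,\dots,L$ preserves this. The constants hidden in $c=\mathcal O(1/N)$ depend on $L$, $\tau$ and $\varepsilon$ but not on $N$, which is exactly what the concentration step needs; plugging $c_i=\mathcal O(1/N)$ into McDiarmid (or Efron--Stein) closes the argument.
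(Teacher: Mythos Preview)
Your approach matches the paper's at the top level: bound the per-sample sensitivity by $c_i=\mathcal O(1/N)$ and apply McDiarmid's inequality. The paper's sensitivity lemma is much terser than yours: it argues that swapping one sample perturbs one row and one column of each of $S_X,S_A$, asserts that after the row-wise softmax ``exactly two rows of $P_X$ (and two of $P_A$) can change,'' so the row-averaged TV shift is at most $2/N$; it then invokes a Lipschitz-in-TV bound $|\mathrm{JS}(p,q)-\mathrm{JS}(p',q')|\le\log 2\,[\mathrm{TV}(p,p')+\mathrm{TV}(q,q')]$ to obtain $\Delta_N=4\log 2/N$, remarking that the weighted average over $l=1,\dots,L$ cannot enlarge this.

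Your route is more careful on exactly the steps the paper compresses. You account for the centering (which moves \emph{every} $\widetilde x_j$ by $\mathcal O(1/N)$) and for the fact that a perturbed column of $S_X$ changes every softmax row's normalizer, so in fact all $N$ rows of $P_X$ move rather than two; you control this via the bounded-logit $\Theta(1/N)$ entry bounds and the resulting $\mathcal O(1)$ column sums. For the matrix powers you telescope explicitly and track $\|\cdot\|_{1,1}$, while the paper treats powers in one sentence. For the divergence, you rely on the $\varepsilon$-smoothing (equivalently the $\Theta(1/N)$ lower entry bound) to obtain a genuine $\mathcal O(1)$ Lipschitz constant, whereas the paper uses a TV-Lipschitz inequality for JS directly. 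What the paper's argument buys is brevity and an explicit constant $4\log 2/N$; what your argument buys is robustness to the centering and softmax-normalization effects, at the price of constants depending on $\tau$, $L$, and $\varepsilon$ rather than a clean $4\log 2$.
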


\textit{Proof.} Given in Appendix \ref{app:ProofGeneralization}.

\looseness=-1
\textbf{Similarity-based layer selection.} 
In parameter-frozen alignment, the quality of alignment is closely linked to the representational similarity between the unimodal representation spaces $\mathcal{X}_1$ and $\mathcal{X}_2$. 
Given two unimodal FMs, these spaces typically correspond to different layers of the models. 
Therefore, selecting the appropriate layers for alignment is critical. 
To identify the layers with the greatest potential for effective alignment, a metric is required to quantify the similarity between these representation spaces.
Examples include Centered Kernel Alignment (CKA)~\cite{kornblith2019similarity}, which measures the overall correspondence of pairwise relationships by comparing centered Gram matrices, unbiased CKA \citep{song2012feature}, which further removes the systematic overestimation inherent in standard CKA statistics when sample sizes are small, and mutual $k$-nearest-neighbor (kNN) \citep{pmlr-v235-huh24a}, which identifies the $k$ closest samples in each modality's feature space and records the fraction of neighbors they share.
Prior work has, however, solely relied on aligning models at their last layers~\citep{uni2multi}, ignoring layer-based similarity.
We instead challenge this approach and show in Figure \ref{fig:LayerSimilarityVSPerformance} a strong correlation between the representational similarity measured in terms of mutual kNN of the layers of pretrained unimodal encoders and downstream zero-shot performance once aligned using a single linear layer with our regularization. 

\begin{figure}[ht]
    \centering
    \includegraphics[width=0.98\linewidth]{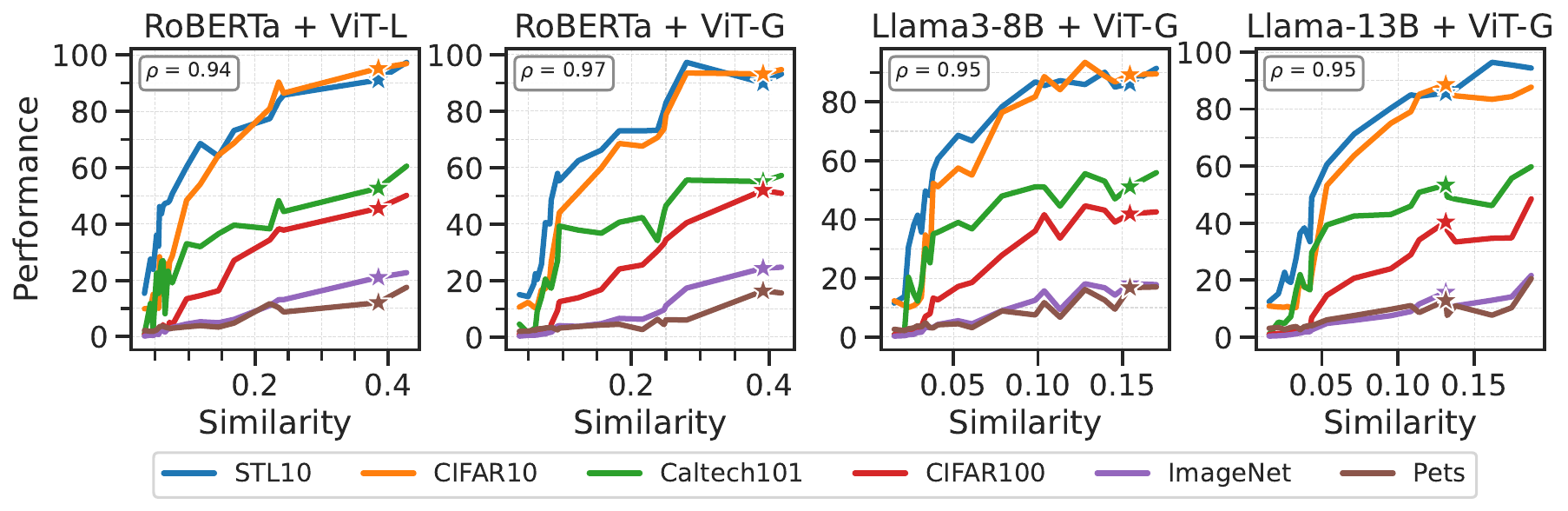}
    \caption{
        Zero-shot performance of different model combinations when aligning different layers as a function of their representational similarity measured in mutual kNN (MkNN).
        Here, the star indicates the performance achieved when aligning the last layers of the models, and $\rho$ is the average Spearman's rank correlation coefficient across different datasets.
    }
    \label{fig:LayerSimilarityVSPerformance}
\end{figure}

\looseness=-1
We thus propose the following procedure for layer selection:
\textit{(i)} compute the representational similarity between all pairs of layers in terms of mutual kNN on a small set of paired samples (in the order of $5,\!000$ pairs), typically randomly selected from the training set, and \textit{(ii)} choose the layers with the highest similarity for alignment.
Throughout our work, we compute representational similarity according to mutual kNN with $k$ chosen according to Rice's criterion\footnotemark, similar to what has been used in prior work \cite{pmlr-v235-huh24a} to measure similarities of different latent spaces.
\footnotetext{$\lceil 2{\sqrt[{3}]{N}}\rceil$, which comes from the literature of choosing the optimal number of histogram bins \citep{terrell1985oversmoothed}.}
We show that this selection procedure leads to consistent results with different subset sizes and repeats in Appendix \ref{appsub:LayerSelectionConsistency}.

\section{Experiments}\label{sec:Experiments}
\paragraph{Experimental setup.}\label{para:ExperimentalSetup}
In our experiments, we align frozen pretrained unimodal encoders using a limited number of image--text pairs. To align models, we use the MS COCO train split consisting of $80,\!000$ paired samples \citep{lin2014microsoft}.
To demonstrate versatility of the STRUCTURE regularization, we apply it to different alignment strategies shown in Table \ref{tab:setup}, including linear mapping (Linear)~\cite{method_mapping}, non-linear mapping (MLP)~\cite{uni2multi}, and CSA~\citep{li2025csa}. 
In addition, to evaluate the benefits of selecting layers based on the similarity of representations rather than the last layer, we trained the models on either the final layers (``\textit{Last}'') or the similarity-based selected layers (``\textit{Similar}'').
Experiments are performed with different combinations of models. 
By default, we consider RoBERTa~\citep{liu2019roberta} and DINOv2 ViT Giant~\citep{dinov2} as our unimodal encoders.

We evaluate model performance on zero-shot classification and cross-modal retrieval tasks. For zero-shot classification, we consider 22 datasets from the CLIP benchmark \cite{radford2021learning}. For cross-modal retrieval, we consider the Flickr30 and MS COCO test splits to evaluate both text-to-image and image-to-text performance. Detailed dataset descriptions and training configurations are described in Appendices \ref{appendix:Hyperparameters} and \ref{appendix:Dataset}, respectively.

\begin{table}[t]
    \centering
    \caption{
        Performance comparison of zero-shot classification and
        cross-modal retrieval tasks across eight datasets and three alignment methods.
        Alignment is performed for a RoBERTa and a ViT-Giant.
        \underline{Underline} shows the best performance in the respective group (\textit{i.e.}, alignment method), \textbf{bold} indicates the overall best performance for the respective dataset, and \textit{italic} indicates our framework.
    }
    \vspace{1mm}
    \label{tab:our_method}
    \resizebox{\linewidth}{!}{%
    \begin{tabular}{l cccc ccccc}%
        \toprule
        & \multicolumn{7}{c}{\textbf{Zero-shot Classification} (Accuracy)} 
        & \multicolumn{2}{c}{\textbf{Retrieval} (R@1)} \\
        
        \cmidrule(lr){2-8}
        \cmidrule(lr){9-10}
        
        \bfseries Method
        & STL10 & CIFAR10 & Caltech101 
        & Food101 & CIFAR100 & ImageNet & Pets
        & Flickr30 I2T & Flickr30 T2I
        \\
        \midrule
        Linear + Last \cite{method_mapping} 
            & 75.6 & 85.5 & 37.9 
            & 14.8 & 34.0 & 9.9 & 7.0 
            & 32.5 & 22.1 \\
        \textit{Linear + Similar}
            & 79.7 & 89.0 & 39.5 
            & 14.6 & 33.1 & 10.5 & 4.9 
            & 35.3 & 24.0 \\
        \textit{Linear + Similar + $\shortmethod$} 
            & \underline{92.6} & \underline{96.3} & \underline{56.0} 
            & \textbf{30.6} & \underline{51.3} & \underline{24.7} & \underline{13.2} 
            & \underline{65.8} & \underline{53.7} \\
        \midrule
        MLP + Last \cite{uni2multi} 
            & 76.6 & 79.2 & 38.2 
            & 15.6 & 35.3 & 10.6 & 5.3 
            & 31.6 & 20.3 \\
        \textit{MLP + Similar} 
            & 84.0 & 81.5 & 38.8
            & 17.1 & 34.5 & 11.4 & 6.1 
            & 36.4 & 25.0 \\
        \textit{MLP + Similar + $\shortmethod$} 
            & \textbf{92.7} & \underline{96.3} & \underline{56.0} 
            & \underline{30.5} & \underline{52.1} & \underline{25.1} & \underline{13.2} 
            & \textbf{65.9} & \textbf{53.8} \\
        \midrule
        CSA + Last \cite{li2025csa} 
            & 77.9 & 78.5 & 31.4 
            & \underline{29.3} & 47.4 & 23.2 & 14.4 
            & 47.0 & 38.3 \\
        \textit{CSA + Similar}
            & 80.0 & 80.8 & 33.6 
            & 28.0 & 47.4 & 23.3 & 14.9 
            & 48.6 & 39.0 \\
        \textit{CSA + Similar + $\shortmethod$}
            & \underline{91.7} & \textbf{97.2} & \textbf{61.5}
            & 28.6 & \textbf{56.4} & \textbf{26.8} & \textbf{17.0} 
            & \underline{56.1} & \underline{43.1} \\
        \bottomrule
    \end{tabular}
    }
\end{table}

\paragraph{Performance comparison.}
\label{sec:exp-method}
We conduct a comprehensive evaluation of zero-shot classification and cross-modal retrieval performance across diverse datasets and alignment techniques.
Table \ref{tab:our_method} shows that adding \method regularization helps improve generalization performance across all three alignment techniques. 
On average, \method yields substantial relative gains in both zero-shot classification and cross-modal retrieval. 
Specifically, for zero-shot classification, the average relative improvement from using \method (compared to the baseline using the most similar layers without regularization) is $74.0\%$ for MLP, $68.4\%$ for Linear, and $26.8\%$ for CSA. In retrieval tasks, the corresponding relative improvements are $137.0\%$, $122.8\%$, and $15.9\%$, respectively. These results highlight that MLP and Linear alignment strategies benefit the most from STRUCTURE, while CSA shows more moderate but still substantial gains, indicating the broad applicability of our \method across different alignment methods.

In addition to regularization, selecting layers based on representational similarity instead of defaulting to the final layer also leads to consistent improvements across all methods. On average, this strategy yields relative gains in classification accuracy of $2.5\%$ for Linear, $4.8\%$ for MLP, and $2.0\%$ for CSA. For retrieval tasks, the relative improvements are even more pronounced, $8.6\%$ for Linear, $18.3\%$ for MLP, and $2.7\%$ for CSA. This strategy alone provides a significant boost in performance and, when combined with STRUCTURE, results in the best overall generalization. 
The complete results across all 24 datasets are shown in Appendix \ref{appsub:DetailDownstream}, showing consistent trends on the extended benchmark: average improvements from baseline to our approach are $65.0\%$ for Linear in classification and $122.7\%$ in retrieval, $61.5\%$ and $136.8\%$ for MLP, and $28.3\%$ and $15.9\%$ for CSA, respectively.
Interestingly, on the  CIFAR10 dataset, our alignment strategy even outperforms CLIP~\cite{radford2021learning} by around $2\%$ while using only $0.02\%$ of the data used to train CLIP.

\begin{figure}[h]
    \centering
    \includegraphics[width=1.0\linewidth]{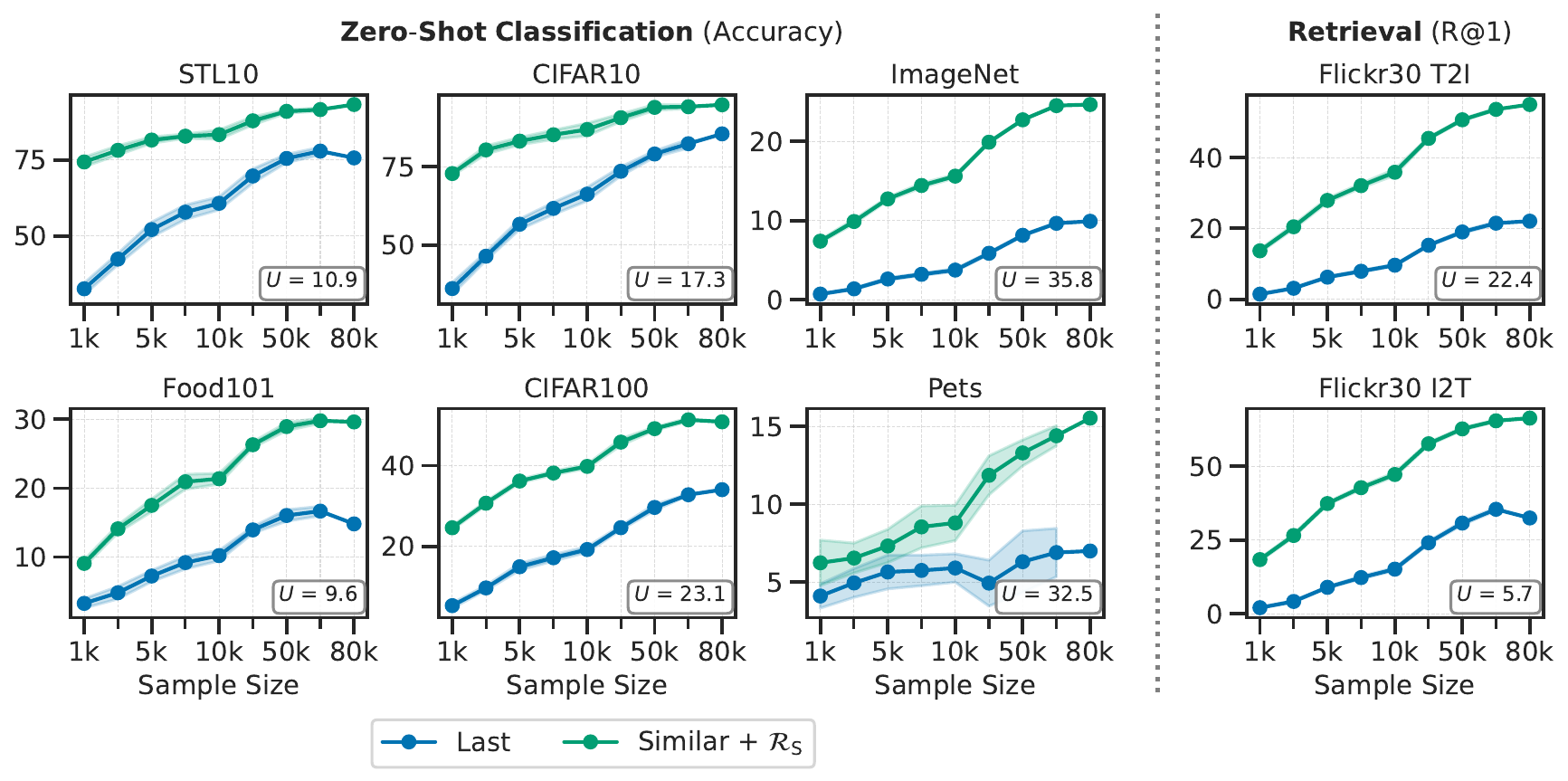}
    \caption{
        Comparison of zero-shot and retrieval performance for linear alignment when scaling down the training data, repeated five times for each sample size.
        Here, $U$ quantifies the proposed method's label efficiency by computing the utility compared to using the last layer.
    }
    \label{fig:LowDataPerformance}
\end{figure}

\paragraph{Scaling down the training data.}
\looseness=-1
To measure the effect of the size of the training dataset on the performance, we subsampled the COCO training set and evaluated the effectiveness of the proposed method across varying dataset sizes for both zero-shot classification and retrieval tasks. 
We observe that across different tasks and sample sizes, \method regularization combined with layer selection significantly improves performance, even in extreme resource-constrained settings of only $1,\!000$ samples (see Figure~\ref{fig:LowDataPerformance}). 
Similarly, we find that the layer selection strategy continues to provide improvements even with less data (Appendix \ref{appsub:LowDataLayerSelection}).
These results demonstrate that combining latent-geometry preservation with strategic layer selection is highly performant under severe data scarcity, making multimodal alignment practical even in the most constrained low-data regimes.

To quantify label efficiency, we measure the utility \citep{Newell_2020_CVPR} $U(N) = (\hat N/N) - 1$ for different-sized subsets and report its average value.
Here, $N$ is the number of labeled examples used by the regularized model, and $\hat N$ is the minimal number of labeled examples required when naively relying on the last layer for alignment without any regularization to match that same accuracy. 
This metric tracks the reduction in samples needed to achieve equivalent performance. 
For example, for zero-shot classification on the CIFAR100 dataset and text-to-image retrieval performance on the Flickr30 dataset, the outlined method yields an average utility of $23.1\times$ and $22.4\times$, respectively, indicating that roughly $23\times$ fewer samples are needed to achieve the same performance as the baseline. 
These substantial label savings demonstrate that our approach significantly reduces annotation requirements while preserving high performance, thereby enhancing practicality for real‐world multimodal applications.

\begin{wrapfigure}[16]{r}{0.56\textwidth} 
    \vspace{-1.5\intextsep}
    \centering
    \includegraphics[width=1.0\linewidth]{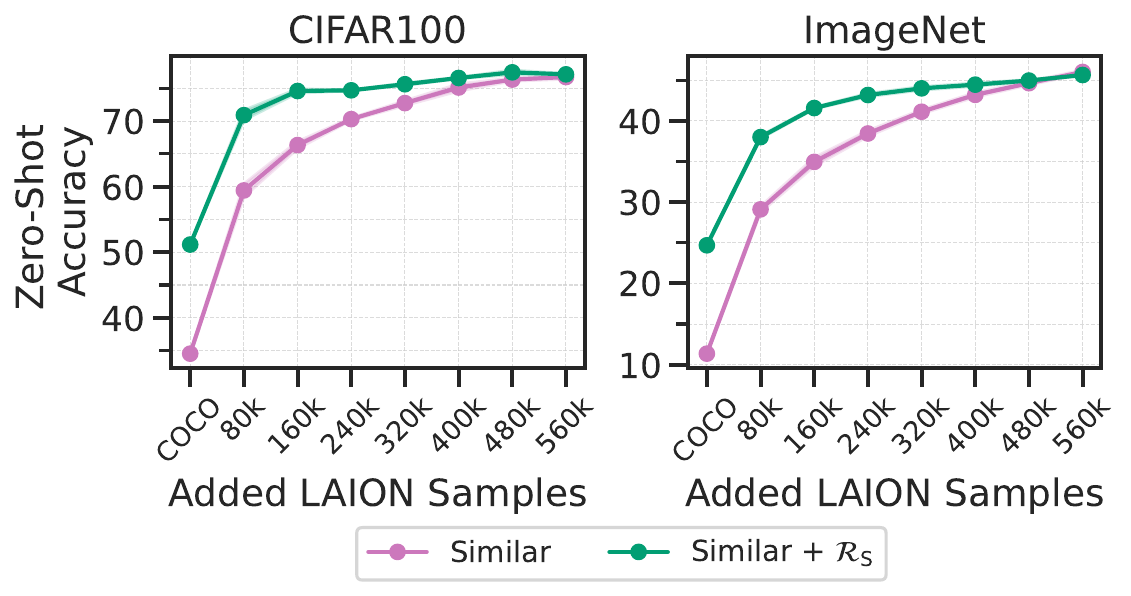}
    \caption{
        Zero-shot performance when randomly adding LAION samples to the COCO training set, repeated three times, when aligning the best layers and adding the regularization.
    }
    \label{fig:LAION-Upscale}
    \vspace{-\intextsep}
\end{wrapfigure}

\paragraph{Scaling up the training data.}
\looseness=-1
We next explored the effect of increasing the size of the multimodal dataset used for alignment. We gradually incorporate increasingly larger subsets of the LAION-$15$M dataset~\cite{laion15m} into the COCO dataset of $80$K samples.
Following the approach of \citet{datacomp}, we filter out low-quality LAION samples by computing CLIP scores~\citep{clipscore} for each image–text pair and discarding those with a score below $0.15$. From the remaining $847$K samples, we randomly draw subsets and incrementally add them to the $80$K COCO training set in steps of $80$K.
Figure \ref{fig:LAION-Upscale} presents the results of incrementally adding different numbers of paired samples for CIFAR100 and ImageNet zero-shot classification, and we compare performance with and without applying \method regularization on layers selected based on representational similarity.
The steepest performance gain occurs with the initial addition of 80K samples, while further increases yield smaller but steady improvements.
Notably, the impact of \method regularization diminishes as more data becomes available, highlighting its particular effectiveness in low-data regimes, where preserving pretrained structure is most beneficial.

\paragraph{Train-test data distribution shift.}
\label{sec:exp-data}

\begin{wrapfigure}[22]{r}{0.38\textwidth}
    \vspace{-1.5\intextsep}
    \centering
    \includegraphics[width=1.0\linewidth]{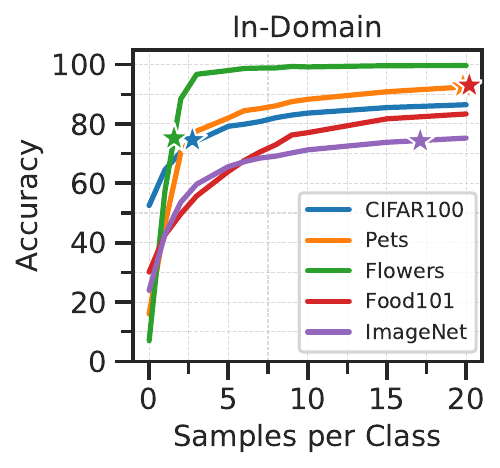}
    \caption{
        Zero-shot classification performance as in-domain samples are added to the training set.
        Performance is evaluated on multiple fine-grained in-domain tasks, where the added and evaluation samples come from the same dataset. 
        Here, the star indicates the CLIP performance.
    }
    \label{fig:ID-test}
    \vspace{-\intextsep}
\end{wrapfigure}

Despite the advantages of our alignment approach in low-data regimes, performance remains low on certain datasets. For example, accuracy on the Pets dataset reaches only around $15\%$.
We attribute this to distribution shifts between the COCO-based training set and the target evaluation domains, including differences in image content, label granularity, and overall dataset coverage and scale.
To understand the impact of this shift, we augmented the training set ($80$K COCO pairs) with a small number of $k$ in-domain examples per class drawn from different image classification benchmarks.
These $k$ samples per class are added directly to the 80,000-sample COCO training set (e.g., $80,000 + 10 \times k$ total samples for CIFAR10).
We find that mixing a small number of samples from the target domain into the training set can effectively close distribution gaps and achieve strong performance.
Figure~\ref{fig:ID-test} plots zero-shot accuracy as a function of number of samples per class $k$ across five datasets, including Flowers, Food101, CIFAR100, Pets, and ImageNet. Remarkably, we can achieve or even outperform CLIP trained on hundreds of millions of multimodal samples by adding a few in-distribution labeled samples.
Specifically, with just three samples per class, accuracy on the Flowers dataset rises from around $24\%$ to over $95\%$, outperforming CLIP’s $93\%$ accuracy. 
Similarly, on the CIFAR100 dataset, using only four samples per class yields a $3\%$ improvement over CLIP. On the ImageNet dataset, our alignment strategy matches CLIP’s performance with only $17$ samples per class. 
On both Pets and Flowers datasets, we achieve comparable results to CLIP using approximately 20 samples per class. 
These results suggest that in a limited data regime, labeling just a few in-domain samples can yield comparable or even superior performance to large models trained on millions of paired samples.

\paragraph{Different pairs of models.}\label{sub:Ablation} 
We explore the performance across different language models, each combined with a DINOv2 ViT-G encoder. Figure~\ref{fig:HeatmapModelCombination} shows that our regularization term and layer selection strategy consistently improve results across all combinations of models. On average, our method yields improvements of $15.2\%$ for RoBERTa, $20.5\%$ for Llama3-8B, and $19.8\%$ for Llama-13B. Notably, the RoBERTa-based combination achieves the highest overall performance, supporting previous findings~\cite{uni2multi} that RoBERTa aligns particularly well with vision FMs.

\paragraph{Neighborhood preservation.} 
\begin{wrapfigure}[20]{r}{0.50\textwidth}
    \vspace{-1.5\intextsep}
    \centering
    \includegraphics[width=1.0\linewidth]{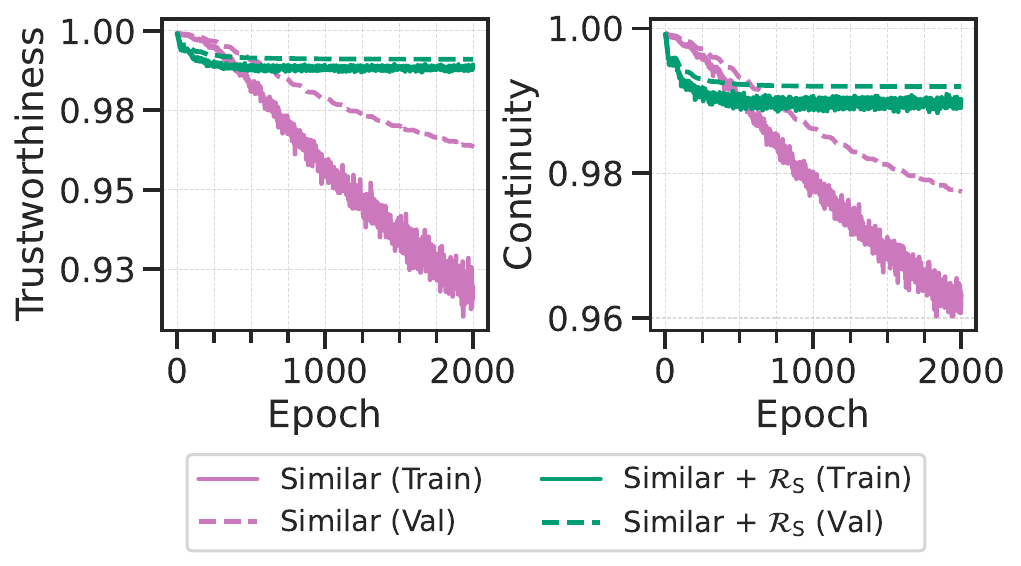}
    \caption{
        \looseness=-1
        Evolution of Trustworthiness$_{100}$ and Continuity$_{100}$ over $2,000$ training epochs on a fixed random subset of $5,000$ embeddings from the training (solid lines) and validation set (dashed lines) when aligning the most similar layers without and with regularization.
        Trustworthiness penalizes introducing new neighbors not present in the original pretrained space, while continuity penalizes losing true (pretrained) neighbors.
    }
    \label{fig:structure}
    \vspace{-\intextsep}
\end{wrapfigure}
To empirically verify that the STRUCTURE regularization preserves pretrained neighborhood, we monitored Trustworthiness$_{100}$ and Continuity$_{100}$~\citep{venna2001neighborhood} on a fixed random subset of $5,\!000$ embeddings from the training and the held-out validation set at the end of each epoch with and without the \method regularization. 
Trustworthiness measures the fraction of $k$-nearest neighbors in the aligned space that were already neighbors in the original pretrained embedding, while Continuity measures the fraction of original neighbors that remain in the aligned space. 
As shown in Figure \ref{fig:structure}, over $2,\!000$ epochs, both the training Trustworthiness and Continuity steadily decline under the standard alignment objective, while the gap between training and validation values steadily widens.
In contrast, with STRUCTURE regularization, Trustworthiness and Continuity curves on training and validation remain between $0.99$ and $1.00$ (gap < $0.002$) with no drift. 
This demonstrates that our regularizer enforces consistent, geometry-respecting alignments throughout optimization, avoiding the over-warping seen in unregularized alignment. 

\begin{figure}[t]
    \centering
    \includegraphics[width=1.00\linewidth]{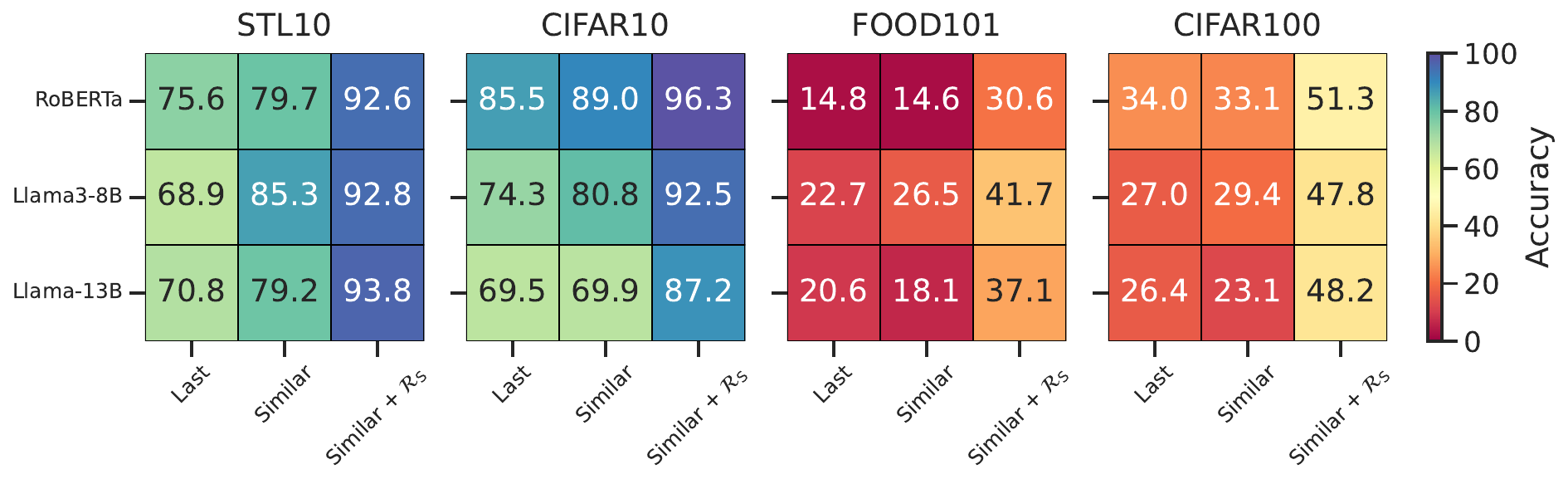}
    \caption{
        Performance of different language models combined with a DINOv2 ViT-G when choosing the last or most similar layer for alignment, and the influence of adding \method regularization when training a single linear alignment layer.
    }
    \label{fig:HeatmapModelCombination}
\end{figure}
\vspace{-2mm}

\looseness=-1
\paragraph{Further experiments.} 
Additional results are provided in the Appendix \ref{appsub:Further_results}, including experiments with text–audio alignment and application to the biological domain (see \ref{appsub:other_modalities}), more ablations such as the robustness to the parameter $\lambda$, number of regularization levels $L$, layer selection metric, normalization schemes, and distance functions (see \ref{appsub:Ablation}), as well as comparisons to unsupervised alignment methods (see \ref{appsub:ComparisonUnsuper}).

\section{Conclusion}
We present a simple, yet effective, framework for parameter-frozen modality alignment in low-data regimes, leveraging two key components: 
\textit{(i)} a novel \method regularizer that preserves the multi-scale neighborhood geometry of each modality's pretrained latent space, and \textit{(ii)} an automatic layer-selection procedure that identifies and aligns the pair of intermediate layers with the highest representation similarity. 
Both components integrate seamlessly with existing alignment pipelines, be they linear projections, MLPs, or advanced matrix-decomposition methods, offering a plug-and-play solution for any modality alignment method.
We evaluate our strategy by incorporating it into three existing modality alignment methods and observe consistent performance improvements across 24 benchmark datasets covering zero-shot classification and retrieval tasks, with average relative improvements of $51.6\%$ in classification and $91.8\%$ in retrieval tasks.
This stems from a faithful preservation of pretrained geometry throughout training, enforced by the proposed regularization.
We further show that injecting only a handful of in-domain examples per class achieves comparable performance to multimodal models trained on hundreds of millions of samples. 

\section*{Limitations}\label{sec:Limitations}
While our method performs competitively in the low-data regime, there remains a performance gap on more challenging tasks compared to models like CLIP that are trained on hundreds of millions of paired samples. 
That said, we show that this gap can be significantly reduced by incorporating only a few in-domain samples in Figure~\ref{fig:ID-test}, which highlights the potential of few-shot adaptation.
Additionally, this prompted us to consider which data is most effective for aligning the spaces of different modalities.
A promising future direction is to systematically investigate how different types of data influence cross-modal alignment quality in a low-data regime.

Currently, we have only investigated aligning two modalities. 
However, our framework can be straightforwardly extended to three or more modalities. 
Since the \method regularization is computed independently for each modality to preserve its own pretrained structure, this extension only requires adding an additional regularization term to the loss function for each new modality. 
For instance, aligning three modalities ($X_1, X_2, X_3$) would involve the following objective:
$$\mathcal{L}=\mathcal{L}_{A}+\lambda(\mathcal{R}_{S}^{(L)}(X_{1},f_{1}(X_{1}))+\mathcal{R}_{S}^{(L)}(X_{2},f_{2}(X_{2}))+\mathcal{R}_{S}^{(L)}(X_{3},f_{3}(X_{3}))).$$
Since aligning more than two modalities is still a very active research topic, we leave these investigations to future efforts.


\section*{Acknowledgements}
We thank Artyom Gadetsky, Ramon Vinas Torné, Myeongho Jeon, Luca Viano, and Simone Lionetti for their valuable suggestions, which helped improve the clarity of the manuscript.
We gratefully acknowledge the support of the Swiss National Science Foundation (SNSF) starting grant TMSGI2\_226252/1, SNSF grant IC00I0\_231922, and the CIFAR MHU Catalyst. 

\medskip
\small
\bibliography{bibliography}

\begin{thebibliography}{65}
\providecommand{\natexlab}[1]{#1}
\providecommand{\url}[1]{\texttt{#1}}
\expandafter\ifx\csname urlstyle\endcsname\relax
  \providecommand{\doi}[1]{doi: #1}\else
  \providecommand{\doi}{doi: \begingroup \urlstyle{rm}\Url}\fi

\bibitem[Achiam et~al.(2023)Achiam, Adler, Agarwal, Ahmad, Akkaya, Aleman, Almeida, Altenschmidt, Altman, Anadkat, et~al.]{gpt4}
Josh Achiam, Steven Adler, Sandhini Agarwal, Lama Ahmad, Ilge Akkaya, Florencia~Leoni Aleman, Diogo Almeida, Janko Altenschmidt, Sam Altman, Shyamal Anadkat, et~al.
\newblock {GPT-4 technical report}.
\newblock \emph{arXiv preprint arXiv:2303.08774}, 2023.

\bibitem[Touvron et~al.(2023)Touvron, Lavril, Izacard, Martinet, Lachaux, Lacroix, Rozi{\`e}re, Goyal, Hambro, Azhar, et~al.]{touvron2023llama}
Hugo Touvron, Thibaut Lavril, Gautier Izacard, Xavier Martinet, Marie-Anne Lachaux, Timoth{\'e}e Lacroix, Baptiste Rozi{\`e}re, Naman Goyal, Eric Hambro, Faisal Azhar, et~al.
\newblock {Llama: Open and efficient foundation language models}.
\newblock \emph{arXiv preprint arXiv:2302.13971}, 2023.

\bibitem[Oquab et~al.(2023)Oquab, Darcet, Moutakanni, Vo, Szafraniec, Khalidov, Fernandez, Haziza, Massa, El-Nouby, et~al.]{dinov2}
Maxime Oquab, Timoth{\'e}e Darcet, Th{\'e}o Moutakanni, Huy~V Vo, Marc Szafraniec, Vasil Khalidov, Pierre Fernandez, Daniel Haziza, Francisco Massa, Alaaeldin El-Nouby, et~al.
\newblock {{DINOv2}: Learning Robust Visual Features without Supervision}.
\newblock \emph{Transactions on Machine Learning Research}, 2023.

\bibitem[Kirillov et~al.(2023)Kirillov, Mintun, Ravi, Mao, Rolland, Gustafson, Xiao, Whitehead, Berg, Lo, et~al.]{kirillov2023segment}
Alexander Kirillov, Eric Mintun, Nikhila Ravi, Hanzi Mao, Chloe Rolland, Laura Gustafson, Tete Xiao, Spencer Whitehead, Alexander~C Berg, Wan-Yen Lo, et~al.
\newblock {Segment Anything}.
\newblock In \emph{International Conference on Computer Vision}, 2023.

\bibitem[Jumper et~al.(2021)Jumper, Evans, Pritzel, Green, Figurnov, Ronneberger, Tunyasuvunakool, Bates, {\v{Z}}{\'\i}dek, Potapenko, et~al.]{jumper2021highly}
John Jumper, Richard Evans, Alexander Pritzel, Tim Green, Michael Figurnov, Olaf Ronneberger, Kathryn Tunyasuvunakool, Russ Bates, Augustin {\v{Z}}{\'\i}dek, Anna Potapenko, et~al.
\newblock {Highly accurate protein structure prediction with AlphaFold}.
\newblock \emph{Nature}, 2021.

\bibitem[Lin et~al.(2022)Lin, Akin, Rao, Hie, Zhu, Lu, dos Santos~Costa, Fazel-Zarandi, Sercu, Candido, et~al.]{lin2022language}
Zeming Lin, Halil Akin, Roshan Rao, Brian Hie, Zhongkai Zhu, Wenting Lu, Allan dos Santos~Costa, Maryam Fazel-Zarandi, Tom Sercu, Sal Candido, et~al.
\newblock {Language models of protein sequences at the scale of evolution enable accurate structure prediction}.
\newblock \emph{BioRxiv}, 2022.

\bibitem[Radford et~al.(2021)Radford, Kim, Hallacy, Ramesh, Goh, Agarwal, Sastry, Askell, Mishkin, Clark, et~al.]{radford2021learning}
Alec Radford, Jong~Wook Kim, Chris Hallacy, Aditya Ramesh, Gabriel Goh, Sandhini Agarwal, Girish Sastry, Amanda Askell, Pamela Mishkin, Jack Clark, et~al.
\newblock {Learning transferable visual models from natural language supervision}.
\newblock In \emph{International Conference on Machine Learning}, 2021.

\bibitem[Elizalde et~al.(2023)Elizalde, Deshmukh, Al~Ismail, and Wang]{elizade2022clap}
Benjamin Elizalde, Soham Deshmukh, Mahmoud Al~Ismail, and Huaming Wang.
\newblock {{CLAP}: {Learning} {Audio} {Concepts} from {Natural} {Language} {Supervision}}.
\newblock In \emph{International Conference on Acoustics, Speech and Signal Processing}, 2023.

\bibitem[Maniparambil et~al.(2024{\natexlab{a}})Maniparambil, Akshulakov, Djilali, Narayan, Singh, and O'Connor]{uni2multi}
Mayug Maniparambil, Raiymbek Akshulakov, Yasser Abdelaziz~Dahou Djilali, Sanath Narayan, Ankit Singh, and Noel~E O'Connor.
\newblock {Harnessing Frozen Unimodal Encoders for Flexible Multimodal Alignment}.
\newblock \emph{Conference on Computer Vision and Pattern Recognition}, 2024{\natexlab{a}}.

\bibitem[Khan and Fu(2023)]{method_mapping}
Zaid Khan and Yun Fu.
\newblock {Contrastive Alignment of Vision to Language Through Parameter-Efficient Transfer Learning}.
\newblock In \emph{International Conference on Learning Representations}, 2023.

\bibitem[Merullo et~al.(2023)Merullo, Castricato, Eickhoff, and Pavlick]{method_linearmapping}
Jack Merullo, Louis Castricato, Carsten Eickhoff, and Ellie Pavlick.
\newblock {Linearly Mapping from Image to Text Space}.
\newblock In \emph{The Eleventh International Conference on Learning Representations}, 2023.

\bibitem[Maniparambil et~al.(2024{\natexlab{b}})Maniparambil, Akshulakov, Djilali, El~Amine~Seddik, Narayan, Mangalam, and O'Connor]{method_cka}
Mayug Maniparambil, Raiymbek Akshulakov, Yasser Abdelaziz~Dahou Djilali, Mohamed El~Amine~Seddik, Sanath Narayan, Karttikeya Mangalam, and Noel~E O'Connor.
\newblock {Do Vision and Language Encoders Represent the World Similarly?}
\newblock In \emph{Conference on Computer Vision and Pattern Recognition}, 2024{\natexlab{b}}.

\bibitem[Ciernik et~al.(2024)Ciernik, Linhardt, Morik, Dippel, Kornblith, and Muttenthaler]{method_cka2}
Laure Ciernik, Lorenz Linhardt, Marco Morik, Jonas Dippel, Simon Kornblith, and Lukas Muttenthaler.
\newblock {Training objective drives the consistency of representational similarity across datasets}.
\newblock \emph{CoRR}, 2024.

\bibitem[Jia et~al.(2021)Jia, Yang, Xia, Chen, Parekh, Pham, Le, Sung, Li, and Duerig]{align}
Chao Jia, Yinfei Yang, Ye~Xia, Yi-Ting Chen, Zarana Parekh, Hieu Pham, Quoc Le, Yun-Hsuan Sung, Zhen Li, and Tom Duerig.
\newblock {Scaling up visual and vision-language representation learning with noisy text supervision}.
\newblock In \emph{International Conference on Machine Learning}, 2021.

\bibitem[Li et~al.(2022)Li, Li, Xiong, and Hoi]{blip}
Junnan Li, Dongxu Li, Caiming Xiong, and Steven Hoi.
\newblock {Blip: Bootstrapping language-image pre-training for unified vision-language understanding and generation}.
\newblock In \emph{International Conference on Machine Learning}, 2022.

\bibitem[Wang et~al.(2022)Wang, Yang, Hu, Li, Lin, Gan, Liu, Liu, and Wang]{git}
Jianfeng Wang, Zhengyuan Yang, Xiaowei Hu, Linjie Li, Kevin Lin, Zhe Gan, Zicheng Liu, Ce~Liu, and Lijuan Wang.
\newblock {GIT: A Generative Image-to-text Transformer for Vision and Language}.
\newblock \emph{Transactions on Machine Learning Research}, 2022.

\bibitem[Singh et~al.(2022)Singh, Hu, Goswami, Couairon, Galuba, Rohrbach, and Kiela]{flava}
Amanpreet Singh, Ronghang Hu, Vedanuj Goswami, Guillaume Couairon, Wojciech Galuba, Marcus Rohrbach, and Douwe Kiela.
\newblock {Flava: A foundational language and vision alignment model}.
\newblock In \emph{Conference on Computer Vision and Pattern Recognition}, 2022.

\bibitem[Chen et~al.(2022)Chen, Wang, Changpinyo, Piergiovanni, Padlewski, Salz, Goodman, Grycner, Mustafa, Beyer, et~al.]{pali}
Xi~Chen, Xiao Wang, Soravit Changpinyo, AJ~Piergiovanni, Piotr Padlewski, Daniel Salz, Sebastian Goodman, Adam Grycner, Basil Mustafa, Lucas Beyer, et~al.
\newblock {PaLI: A Jointly-Scaled Multilingual Language-Image Model}.
\newblock In \emph{International Conference on Learning Representations}, 2022.

\bibitem[Peng et~al.(2023)Peng, Wang, Dong, Hao, Huang, Ma, and Wei]{kosmos}
Zhiliang Peng, Wenhui Wang, Li~Dong, Yaru Hao, Shaohan Huang, Shuming Ma, and Furu Wei.
\newblock {Kosmos-2: Grounding Multimodal Large Language Models to the World}.
\newblock \emph{arXiv e-prints}, pages arXiv--2306, 2023.

\bibitem[Team et~al.(2023)Team, Anil, Borgeaud, Alayrac, Yu, Soricut, Schalkwyk, Dai, Hauth, Millican, et~al.]{gemini}
Gemini Team, Rohan Anil, Sebastian Borgeaud, Jean-Baptiste Alayrac, Jiahui Yu, Radu Soricut, Johan Schalkwyk, Andrew~M Dai, Anja Hauth, Katie Millican, et~al.
\newblock {Gemini: A Family of Highly Capable Multimodal Models}.
\newblock \emph{arXiv e-prints}, pages arXiv--2312, 2023.

\bibitem[Raghu et~al.(2017)Raghu, Gilmer, Yosinski, and Sohl-Dickstein]{cca}
Maithra Raghu, Justin Gilmer, Jason Yosinski, and Jascha Sohl-Dickstein.
\newblock {Svcca: Singular vector canonical correlation analysis for deep learning dynamics and interpretability}.
\newblock \emph{Advances in Neural Information Processing Systems}, 2017.

\bibitem[Vouitsis et~al.(2024)Vouitsis, Liu, Gorti, Villecroze, Cresswell, Yu, Loaiza-Ganem, and Volkovs]{vouitsis2024data}
No{\"e}l Vouitsis, Zhaoyan Liu, Satya~Krishna Gorti, Valentin Villecroze, Jesse~C Cresswell, Guangwei Yu, Gabriel Loaiza-Ganem, and Maksims Volkovs.
\newblock Data-efficient multimodal fusion on a single gpu.
\newblock In \emph{Conference on Computer Vision and Pattern Recognition}, 2024.

\bibitem[Norelli et~al.(2023)Norelli, Fumero, Maiorca, Moschella, Rodola, and Locatello]{norelli2023asif}
Antonio Norelli, Marco Fumero, Valentino Maiorca, Luca Moschella, Emanuele Rodola, and Francesco Locatello.
\newblock Asif: Coupled data turns unimodal models to multimodal without training.
\newblock \emph{Advances in Neural Information Processing Systems}, 2023.

\bibitem[Huh et~al.(2024)Huh, Cheung, Wang, and Isola]{pmlr-v235-huh24a}
Minyoung Huh, Brian Cheung, Tongzhou Wang, and Phillip Isola.
\newblock {Position: The Platonic Representation Hypothesis}.
\newblock In \emph{International Conference on Machine Learning}, 2024.

\bibitem[Moschella et~al.(2023)Moschella, Maiorca, Fumero, Norelli, Locatello, and Rodol{\`a}]{moschella2023relative}
Luca Moschella, Valentino Maiorca, Marco Fumero, Antonio Norelli, Francesco Locatello, and Emanuele Rodol{\`a}.
\newblock Relative representations enable zero-shot latent space communication.
\newblock In \emph{International Conference on Learning Representations}, 2023.

\bibitem[Maiorca et~al.(2023)Maiorca, Moschella, Norelli, Fumero, Locatello, and Rodol{\`a}]{maiorca2023latent}
Valentino Maiorca, Luca Moschella, Antonio Norelli, Marco Fumero, Francesco Locatello, and Emanuele Rodol{\`a}.
\newblock Latent space translation via semantic alignment.
\newblock \emph{Advances in Neural Information Processing Systems}, 2023.

\bibitem[Smith et~al.(2017)Smith, Turban, Hamblin, and Hammerla]{smith2017offline}
Samuel~L Smith, David~HP Turban, Steven Hamblin, and Nils~Y Hammerla.
\newblock Offline bilingual word vectors, orthogonal transformations and the inverted softmax.
\newblock In \emph{International Conference on Learning Representations}, 2017.

\bibitem[Artetxe et~al.(2018)Artetxe, Labaka, and Agirre]{artetxe2018robust}
Mikel Artetxe, Gorka Labaka, and Eneko Agirre.
\newblock A robust self-learning method for fully unsupervised cross-lingual mappings of word embeddings.
\newblock In \emph{Annual Meeting of the Association for Computational Linguistics}, 2018.

\bibitem[Li et~al.(2025)Li, Chinchali, and Topcu]{li2025csa}
Po-han Li, Sandeep~P. Chinchali, and Ufuk Topcu.
\newblock {{CSA}: Data-efficient Mapping of Unimodal Features to Multimodal Features}.
\newblock In \emph{International Conference on Learning Representations}, 2025.

\bibitem[Kornblith et~al.(2019)Kornblith, Norouzi, Lee, and Hinton]{kornblith2019similarity}
Simon Kornblith, Mohammad Norouzi, Honglak Lee, and Geoffrey Hinton.
\newblock {Similarity of neural network representations revisited}.
\newblock In \emph{International Conference on Machine Learning}, 2019.

\bibitem[Song et~al.(2012)Song, Smola, Gretton, Bedo, and Borgwardt]{song2012feature}
Le~Song, Alex Smola, Arthur Gretton, Justin Bedo, and Karsten Borgwardt.
\newblock {Feature selection via dependence maximization}.
\newblock \emph{Journal of Machine Learning Research}, 2012.

\bibitem[Terrell and Scott(1985)]{terrell1985oversmoothed}
George~R Terrell and David~W Scott.
\newblock {Oversmoothed nonparametric density estimates}.
\newblock \emph{Journal of the American Statistical Association}, 1985.

\bibitem[Lin et~al.(2014)Lin, Maire, Belongie, Hays, Perona, Ramanan, Doll{\'a}r, and Zitnick]{lin2014microsoft}
Tsung-Yi Lin, Michael Maire, Serge Belongie, James Hays, Pietro Perona, Deva Ramanan, Piotr Doll{\'a}r, and C~Lawrence Zitnick.
\newblock {Microsoft coco: Common objects in context}.
\newblock In \emph{European Conference on Computer Vision}, 2014.

\bibitem[Liu et~al.(2019)Liu, Ott, Goyal, Du, Joshi, Chen, Levy, Lewis, Zettlemoyer, and Stoyanov]{liu2019roberta}
Yinhan Liu, Myle Ott, Naman Goyal, Jingfei Du, Mandar Joshi, Danqi Chen, Omer Levy, Mike Lewis, Luke Zettlemoyer, and Veselin Stoyanov.
\newblock {Roberta: A robustly optimized bert pretraining approach}.
\newblock \emph{arXiv preprint arXiv:1907.11692}, 2019.

\bibitem[Newell and Deng(2020)]{Newell_2020_CVPR}
Alejandro Newell and Jia Deng.
\newblock {How Useful Is Self-Supervised Pretraining for Visual Tasks?}
\newblock In \emph{Conference on Computer Vision and Pattern Recognition}, June 2020.

\bibitem[Nguyen et~al.(2022)Nguyen, Ilharco, Wortsman, Oh, and Schmidt]{laion15m}
Thao Nguyen, Gabriel Ilharco, Mitchell Wortsman, Sewoong Oh, and Ludwig Schmidt.
\newblock {Quality not quantity: On the interaction between dataset design and robustness of clip}.
\newblock \emph{Advances in Neural Information Processing Systems}, 2022.

\bibitem[Gadre et~al.(2023)Gadre, Ilharco, Fang, Hayase, Smyrnis, Nguyen, Marten, Wortsman, Ghosh, Zhang, Orgad, Entezari, Daras, Pratt, Ramanujan, Bitton, Marathe, Mussmann, Vencu, Cherti, Krishna, Koh, Saukh, Ratner, Song, Hajishirzi, Farhadi, Beaumont, Oh, Dimakis, Jitsev, Carmon, Shankar, and Schmidt]{datacomp}
Samir~Yitzhak Gadre, Gabriel Ilharco, Alex Fang, Jonathan Hayase, Georgios Smyrnis, Thao Nguyen, Ryan Marten, Mitchell Wortsman, Dhruba Ghosh, Jieyu Zhang, Eyal Orgad, Rahim Entezari, Giannis Daras, Sarah Pratt, Vivek Ramanujan, Yonatan Bitton, Kalyani Marathe, Stephen Mussmann, Richard Vencu, Mehdi Cherti, Ranjay Krishna, Pang Wei~W Koh, Olga Saukh, Alexander~J Ratner, Shuran Song, Hannaneh Hajishirzi, Ali Farhadi, Romain Beaumont, Sewoong Oh, Alex Dimakis, Jenia Jitsev, Yair Carmon, Vaishaal Shankar, and Ludwig Schmidt.
\newblock {DataComp: In search of the next generation of multimodal datasets}.
\newblock In \emph{Advances in Neural Information Processing Systems}, 2023.

\bibitem[Hessel et~al.(2021)Hessel, Holtzman, Forbes, Le~Bras, and Choi]{clipscore}
Jack Hessel, Ari Holtzman, Maxwell Forbes, Ronan Le~Bras, and Yejin Choi.
\newblock {CLIPScore: A Reference-free Evaluation Metric for Image Captioning}.
\newblock In \emph{Empirical Methods in Natural Language Processing}, 2021.

\bibitem[Venna and Kaski(2001)]{venna2001neighborhood}
Jarkko Venna and Samuel Kaski.
\newblock {Neighborhood preservation in nonlinear projection methods: An experimental study}.
\newblock In \emph{International Conference on Artificial Neural Networks}, 2001.

\bibitem[Smith(2017)]{smith2017cyclical}
Leslie~N Smith.
\newblock {Cyclical learning rates for training neural networks}.
\newblock In \emph{Winter Conference on Applications of Computer Vision}, 2017.

\bibitem[Krizhevsky et~al.(2009)Krizhevsky, Hinton, et~al.]{krizhevsky2009learning}
Alex Krizhevsky, Geoffrey Hinton, et~al.
\newblock {Learning multiple layers of features from tiny images}.
\newblock \emph{Technical Report, University of Toronto}, 2009.

\bibitem[Coates et~al.(2011)Coates, Ng, and Lee]{coates2011analysis}
Adam Coates, Andrew Ng, and Honglak Lee.
\newblock {An analysis of single-layer networks in unsupervised feature learning}.
\newblock In \emph{International Conference on Artificial Intelligence and Statistics}, 2011.

\bibitem[Deng et~al.(2009)Deng, Dong, Socher, Li, Li, and Fei-Fei]{deng2009imagenet}
Jia Deng, Wei Dong, Richard Socher, Li-Jia Li, Kai Li, and Li~Fei-Fei.
\newblock {Imagenet: {A} {Large-scale} {Hierarchical} {Image} {Database}}.
\newblock In \emph{Computer Vision and Pattern Recognition}, 2009.

\bibitem[Fei-Fei et~al.(2004)Fei-Fei, Fergus, and Perona]{fei2004learning}
Li~Fei-Fei, Rob Fergus, and Pietro Perona.
\newblock {Learning generative visual models from few training examples: An incremental bayesian approach tested on 101 object categories}.
\newblock In \emph{Conference on Computer Vision and Pattern Recognition Workshop}, 2004.

\bibitem[Bossard et~al.(2014)Bossard, Guillaumin, and Van~Gool]{bossard2014food}
Lukas Bossard, Matthieu Guillaumin, and Luc Van~Gool.
\newblock {Food-101--mining discriminative components with random forests}.
\newblock In \emph{European Conference on Computer Vision}, 2014.

\bibitem[Nilsback and Zisserman(2008)]{nilsback2008automated}
Maria-Elena Nilsback and Andrew Zisserman.
\newblock {Automated {Flower} {Classification} over a {Large} {Number} of {Classes}}.
\newblock In \emph{Sixth Indian Conference on Computer Vision, Graphics \& Image Processing}, 2008.

\bibitem[Krause et~al.(2013)Krause, Stark, Deng, and Fei-Fei]{krause20133d}
Jonathan Krause, Michael Stark, Jia Deng, and Li~Fei-Fei.
\newblock {{3D} {Object} {Representations} for {Fine-grained} {Categorization}}.
\newblock In \emph{International Conference on Computer Vision Workshops}, 2013.

\bibitem[Maji et~al.(2013)Maji, Rahtu, Kannala, Blaschko, and Vedaldi]{maji2013fine}
Subhransu Maji, Esa Rahtu, Juho Kannala, Matthew Blaschko, and Andrea Vedaldi.
\newblock {Fine-grained {Visual} {Classification} of {Aircraft}}.
\newblock \emph{arXiv preprint arXiv:1306.5151}, 2013.

\bibitem[Parkhi et~al.(2012)Parkhi, Vedaldi, Zisserman, and Jawahar]{parkhi2012cats}
Omkar~M Parkhi, Andrea Vedaldi, Andrew Zisserman, and CV~Jawahar.
\newblock {Cats and {Dogs}}.
\newblock In \emph{Computer Vision and Pattern Recognition}, 2012.

\bibitem[LeCun et~al.(1998)LeCun, Bottou, Bengio, and Haffner]{lecun1998gradient}
Yann LeCun, L{\'e}on Bottou, Yoshua Bengio, and Patrick Haffner.
\newblock {Gradient-based {Learning} {Applied} to {Document} {Recognition}}.
\newblock In \emph{Proceedings of the IEEE}, 1998.

\bibitem[Cimpoi et~al.(2014)Cimpoi, Maji, Kokkinos, Mohamed, and Vedaldi]{cimpoi2014describing}
Mircea Cimpoi, Subhransu Maji, Iasonas Kokkinos, Sammy Mohamed, and Andrea Vedaldi.
\newblock {Describing {Textures} in the {Wild}}.
\newblock In \emph{Computer Vision and Pattern Recognition}, 2014.

\bibitem[Xiao et~al.(2016)Xiao, Ehinger, Hays, Torralba, and Oliva]{xiao2016sun}
Jianxiong Xiao, Krista~A Ehinger, James Hays, Antonio Torralba, and Aude Oliva.
\newblock {Sun {Database}: {Exploring} a {Large} {Collection} of {Scene} {Categories}}.
\newblock In \emph{International Journal of Computer Vision}, 2016.

\bibitem[Goodfellow et~al.(2015)Goodfellow, Erhan, Carrier, Courville, Mirza, Hamner, Cukierski, Tang, Thaler, Lee, et~al.]{goodfellow2013challenges}
Ian~J Goodfellow, Dumitru Erhan, Pierre~Luc Carrier, Aaron Courville, Mehdi Mirza, Ben Hamner, Will Cukierski, Yichuan Tang, David Thaler, Dong-Hyun Lee, et~al.
\newblock {Challenges in {Representation} {Learning}: {A} {Report} on {Three} {Machine} {Learning} {Contests}}.
\newblock In \emph{Neural Networks}, 2015.

\bibitem[Helber et~al.(2019)Helber, Bischke, Dengel, and Borth]{helber2019eurosat}
Patrick Helber, Benjamin Bischke, Andreas Dengel, and Damian Borth.
\newblock {Eurosat: {A} {Novel} {Dataset} and {Deep} {Learning} {Benchmark} for {Land} {Use} and {Land} {Cover} {Classification}}.
\newblock In \emph{IEEE Journal of Selected Topics in Applied Earth Observations and Remote Sensing}, 2019.

\bibitem[Cheng et~al.(2017)Cheng, Han, and Lu]{cheng2017remote}
Gong Cheng, Junwei Han, and Xiaoqiang Lu.
\newblock {Remote {Sensing} {Image} {Scene} {Classification}: {Benchmark} and {State} of the {Art}}.
\newblock In \emph{Proceedings of the IEEE}, 2017.

\bibitem[Stallkamp et~al.(2012)Stallkamp, Schlipsing, Salmen, and Igel]{stallkamp2012man}
Johannes Stallkamp, Marc Schlipsing, Jan Salmen, and Christian Igel.
\newblock {Man vs. {Computer}: {Benchmarking} {Machine} {Learning} {Algorithms} for {Traffic} {Sign} {Recognition}}.
\newblock In \emph{Neural Networks}, 2012.

\bibitem[Geiger et~al.(2012)Geiger, Lenz, and Urtasun]{geiger2012we}
Andreas Geiger, Philip Lenz, and Raquel Urtasun.
\newblock {{Are} {We} {Ready} for {Autonomous} {Driving}? {The} {KITTI} {Vision} {Benchmark} {Suite}}.
\newblock In \emph{Computer Vision and Pattern Recognition}, 2012.

\bibitem[Veeling et~al.(2018)Veeling, Linmans, Winkens, Cohen, and Welling]{veeling2018rotation}
Bastiaan~S Veeling, Jasper Linmans, Jim Winkens, Taco Cohen, and Max Welling.
\newblock {Rotation {Equivariant} {CNNs} for {Digital} {Pathology}}.
\newblock In \emph{Medical Image Computing and Computer Assisted Intervention}, 2018.

\bibitem[Soomro et~al.(2012)Soomro, Zamir, and Shah]{soomro2012ucf101}
Khurram Soomro, Amir~Roshan Zamir, and Mubarak Shah.
\newblock {{UCF101}: {A} {Dataset} of 101 {Human} {Actions} {Classes} from {Videos} in the {Wild}}.
\newblock \emph{arXiv preprint arXiv:1212.0402}, 2012.

\bibitem[Carreira et~al.(2019)Carreira, Noland, Hillier, and Zisserman]{carreira2019short}
Joao Carreira, Eric Noland, Chloe Hillier, and Andrew Zisserman.
\newblock {A {Short} {Note} on the {Kinetics}-700 {Human} {Action} {Dataset}}.
\newblock \emph{arXiv preprint arXiv:1907.06987}, 2019.

\bibitem[Plummer et~al.(2015)Plummer, Wang, Cervantes, Caicedo, Hockenmaier, and Lazebnik]{plummer2015flickr30k}
Bryan~A Plummer, Liwei Wang, Chris~M Cervantes, Juan~C Caicedo, Julia Hockenmaier, and Svetlana Lazebnik.
\newblock {Flickr30k entities: Collecting region-to-phrase correspondences for richer image-to-sentence models}.
\newblock In \emph{International Conference on Computer Vision}, 2015.

\bibitem[Chen et~al.(2023)Chen, Wu, Wang, Liu, Tompkins, Chen, Che, Yu, and Wei]{Chen2022beats}
Sanyuan Chen, Yu~Wu, Chengyi Wang, Shujie Liu, Daniel Tompkins, Zhuo Chen, Wanxiang Che, Xiangzhan Yu, and Furu Wei.
\newblock {BEATs: Audio Pre-Training with Acoustic Tokenizers}.
\newblock In \emph{International Conference on Machine Learning}, 2023.

\bibitem[Drossos et~al.(2020)Drossos, Lipping, and Virtanen]{drossos2019clothoaudiocaptioningdataset}
Konstantinos Drossos, Samuel Lipping, and Tuomas Virtanen.
\newblock {Clotho: An audio captioning dataset}.
\newblock In \emph{International Conference on Acoustics, Speech and Signal Processing}, 2020.

\bibitem[Rosen et~al.(2023)Rosen, Roohani, Agarwal, Samotor{\v{c}}an, Consortium, Quake, and Leskovec]{rosen2023universal}
Yanay Rosen, Yusuf Roohani, Ayush Agarwal, Leon Samotor{\v{c}}an, Tabula~Sapiens Consortium, Stephen~R Quake, and Jure Leskovec.
\newblock Universal cell embeddings: A foundation model for cell biology.
\newblock \emph{bioRxiv}, 2023.

\bibitem[Liang et~al.(2022)Liang, Zhang, Kwon, Yeung, and Zou]{liang2022mind}
Victor~Weixin Liang, Yuhui Zhang, Yongchan Kwon, Serena Yeung, and James~Y Zou.
\newblock Mind the gap: Understanding the modality gap in multi-modal contrastive representation learning.
\newblock \emph{Advances in Neural Information Processing Systems}, 2022.

\end{thebibliography}

\appendix
\section*{NeurIPS Paper Checklist}

\begin{enumerate}
\item {\bf Claims}
    \item[] Question: Do the main claims made in the abstract and introduction accurately reflect the paper's contributions and scope?
    \item[] Answer: \answerYes{} 
    \item[] Justification: Yes, all claims are backed by careful evaluation of related work (see Section \ref{sec:RelatedWork}) and all claims regarding methods' performance are backed up by empirical results (see Section \ref{sec:Experiments}).
    \item[] Guidelines:
    \begin{itemize}
        \item The answer NA means that the abstract and introduction do not include the claims made in the paper.
        \item The abstract and/or introduction should clearly state the claims made, including the contributions made in the paper and important assumptions and limitations. A No or NA answer to this question will not be perceived well by the reviewers. 
        \item The claims made should match theoretical and experimental results, and reflect how much the results can be expected to generalize to other settings. 
        \item It is fine to include aspirational goals as motivation as long as it is clear that these goals are not attained by the paper. 
    \end{itemize}

\item {\bf Limitations}
    \item[] Question: Does the paper discuss the limitations of the work performed by the authors?
    \item[] Answer: \answerYes{} 
    \item[] Justification: Yes, the paper discusses limitations in Section \ref{sec:Limitations} of the main paper.
    \item[] Guidelines:
    \begin{itemize}
        \item The answer NA means that the paper has no limitation while the answer No means that the paper has limitations, but those are not discussed in the paper. 
        \item The authors are encouraged to create a separate "Limitations" section in their paper.
        \item The paper should point out any strong assumptions and how robust the results are to violations of these assumptions (e.g., independence assumptions, noiseless settings, model well-specification, asymptotic approximations only holding locally). The authors should reflect on how these assumptions might be violated in practice and what the implications would be.
        \item The authors should reflect on the scope of the claims made, e.g., if the approach was only tested on a few datasets or with a few runs. In general, empirical results often depend on implicit assumptions, which should be articulated.
        \item The authors should reflect on the factors that influence the performance of the approach. For example, a facial recognition algorithm may perform poorly when image resolution is low or images are taken in low lighting. Or a speech-to-text system might not be used reliably to provide closed captions for online lectures because it fails to handle technical jargon.
        \item The authors should discuss the computational efficiency of the proposed algorithms and how they scale with dataset size.
        \item If applicable, the authors should discuss possible limitations of their approach to address problems of privacy and fairness.
        \item While the authors might fear that complete honesty about limitations might be used by reviewers as grounds for rejection, a worse outcome might be that reviewers discover limitations that aren't acknowledged in the paper. The authors should use their best judgment and recognize that individual actions in favor of transparency play an important role in developing norms that preserve the integrity of the community. Reviewers will be specifically instructed to not penalize honesty concerning limitations.
    \end{itemize}

\item {\bf Theory assumptions and proofs}
    \item[] Question: For each theoretical result, does the paper provide the full set of assumptions and a complete (and correct) proof?
    \item[] Answer: \answerYes{} 
    \item[] Justification: Yes, for all theoretical results the paper provides a full set of assumptions and complete proof (see Appendix \ref{app:ProofGeneralization}). 
    \item[] Guidelines:
    \begin{itemize}
        \item The answer NA means that the paper does not include theoretical results. 
        \item All the theorems, formulas, and proofs in the paper should be numbered and cross-referenced.
        \item All assumptions should be clearly stated or referenced in the statement of any theorems.
        \item The proofs can either appear in the main paper or the supplemental material, but if they appear in the supplemental material, the authors are encouraged to provide a short proof sketch to provide intuition. 
        \item Inversely, any informal proof provided in the core of the paper should be complemented by formal proofs provided in appendix or supplemental material.
        \item Theorems and Lemmas that the proof relies upon should be properly referenced. 
    \end{itemize}

    \item {\bf Experimental result reproducibility}
    \item[] Question: Does the paper fully disclose all the information needed to reproduce the main experimental results of the paper to the extent that it affects the main claims and/or conclusions of the paper (regardless of whether the code and data are provided or not)?
    \item[] Answer: \answerYes{} 
    \item[] Justification: Yes, experimental details are described in the experimental setup (see Section \ref{para:ExperimentalSetup}) and all hyperparameters are given in Appendix \ref{appendix:Hyperparameters}. Furthermore, Appendix \ref{app:Implementation} provides a PyTorch implementation for the proposed \method regularization.
    \item[] Guidelines:
    \begin{itemize}
        \item The answer NA means that the paper does not include experiments.
        \item If the paper includes experiments, a No answer to this question will not be perceived well by the reviewers: Making the paper reproducible is important, regardless of whether the code and data are provided or not.
        \item If the contribution is a dataset and/or model, the authors should describe the steps taken to make their results reproducible or verifiable. 
        \item Depending on the contribution, reproducibility can be accomplished in various ways. For example, if the contribution is a novel architecture, describing the architecture fully might suffice, or if the contribution is a specific model and empirical evaluation, it may be necessary to either make it possible for others to replicate the model with the same dataset, or provide access to the model. In general. releasing code and data is often one good way to accomplish this, but reproducibility can also be provided via detailed instructions for how to replicate the results, access to a hosted model (e.g., in the case of a large language model), releasing of a model checkpoint, or other means that are appropriate to the research performed.
        \item While NeurIPS does not require releasing code, the conference does require all submissions to provide some reasonable avenue for reproducibility, which may depend on the nature of the contribution. For example
        \begin{enumerate}
            \item If the contribution is primarily a new algorithm, the paper should make it clear how to reproduce that algorithm.
            \item If the contribution is primarily a new model architecture, the paper should describe the architecture clearly and fully.
            \item If the contribution is a new model (e.g., a large language model), then there should either be a way to access this model for reproducing the results or a way to reproduce the model (e.g., with an open-source dataset or instructions for how to construct the dataset).
            \item We recognize that reproducibility may be tricky in some cases, in which case authors are welcome to describe the particular way they provide for reproducibility. In the case of closed-source models, it may be that access to the model is limited in some way (e.g., to registered users), but it should be possible for other researchers to have some path to reproducing or verifying the results.
        \end{enumerate}
    \end{itemize}

\item {\bf Open access to data and code}
    \item[] Question: Does the paper provide open access to the data and code, with sufficient instructions to faithfully reproduce the main experimental results, as described in supplemental material?
    \item[] Answer: \answerYes{} 
    \item[] Justification: Yes, code will be made public once the paper is de-anonymized.
    \item[] Guidelines:
    \begin{itemize}
        \item The answer NA means that paper does not include experiments requiring code.
        \item Please see the NeurIPS code and data submission guidelines (\url{https://nips.cc/public/guides/CodeSubmissionPolicy}) for more details.
        \item While we encourage the release of code and data, we understand that this might not be possible, so “No” is an acceptable answer. Papers cannot be rejected simply for not including code, unless this is central to the contribution (e.g., for a new open-source benchmark).
        \item The instructions should contain the exact command and environment needed to run to reproduce the results. See the NeurIPS code and data submission guidelines (\url{https://nips.cc/public/guides/CodeSubmissionPolicy}) for more details.
        \item The authors should provide instructions on data access and preparation, including how to access the raw data, preprocessed data, intermediate data, and generated data, etc.
        \item The authors should provide scripts to reproduce all experimental results for the new proposed method and baselines. If only a subset of experiments are reproducible, they should state which ones are omitted from the script and why.
        \item At submission time, to preserve anonymity, the authors should release anonymized versions (if applicable).
        \item Providing as much information as possible in supplemental material (appended to the paper) is recommended, but including URLs to data and code is permitted.
    \end{itemize}

\item {\bf Experimental setting/details}
    \item[] Question: Does the paper specify all the training and test details (e.g., data splits, hyperparameters, how they were chosen, type of optimizer, etc.) necessary to understand the results?
    \item[] Answer: \answerYes{} 
    \item[] Justification: Yes, experimental details are described in the experimental setup (see Section \ref{para:ExperimentalSetup}) and all hyperparameters are given in Appendix \ref{appendix:Hyperparameters}.
    \item[] Guidelines:
    \begin{itemize}
        \item The answer NA means that the paper does not include experiments.
        \item The experimental setting should be presented in the core of the paper to a level of detail that is necessary to appreciate the results and make sense of them.
        \item The full details can be provided either with the code, in appendix, or as supplemental material.
    \end{itemize}

\item {\bf Experiment statistical significance}
    \item[] Question: Does the paper report error bars suitably and correctly defined or other appropriate information about the statistical significance of the experiments?
    \item[] Answer: \answerYes{} 
    \item[] Justification: Yes, the paper reports error bars whenever training was not fully deterministic, such as in Figures \ref{fig:LowDataPerformance}, \ref{fig:LAION-Upscale}, \ref{fig:LowDataPerformanceLastVSSimilar}. However, since our default training and evaluation setup are fully deterministic (backbones frozen, identity-initialized projections, fixed seed and data order), repeating the experiment yields identical results. All standard deviations are therefore zero and have been omitted, as can be seen in Figures \ref{fig:LowDataPerformance}, \ref{fig:LowDataPerformanceLastVSSimilar} where the last dot corresponds to 5 repeats of the training, leading to zero standard deviation.
    \item[] Guidelines:
    \begin{itemize}
        \item The answer NA means that the paper does not include experiments.
        \item The authors should answer "Yes" if the results are accompanied by error bars, confidence intervals, or statistical significance tests, at least for the experiments that support the main claims of the paper.
        \item The factors of variability that the error bars are capturing should be clearly stated (for example, train/test split, initialization, random drawing of some parameter, or overall run with given experimental conditions).
        \item The method for calculating the error bars should be explained (closed form formula, call to a library function, bootstrap, etc.)
        \item The assumptions made should be given (e.g., Normally distributed errors).
        \item It should be clear whether the error bar is the standard deviation or the standard error of the mean.
        \item It is OK to report 1-sigma error bars, but one should state it. The authors should preferably report a 2-sigma error bar than state that they have a 96\% CI, if the hypothesis of Normality of errors is not verified.
        \item For asymmetric distributions, the authors should be careful not to show in tables or figures symmetric error bars that would yield results that are out of range (e.g. negative error rates).
        \item If error bars are reported in tables or plots, The authors should explain in the text how they were calculated and reference the corresponding figures or tables in the text.
    \end{itemize}

\item {\bf Experiments compute resources}
    \item[] Question: For each experiment, does the paper provide sufficient information on the computer resources (type of compute workers, memory, time of execution) needed to reproduce the experiments?
    \item[] Answer: \answerYes{} 
    \item[] Justification: Yes, we specify the computational resources of the paper in Appendix \ref{app:ComputationalResources}.
    \item[] Guidelines:
    \begin{itemize}
        \item The answer NA means that the paper does not include experiments.
        \item The paper should indicate the type of compute workers CPU or GPU, internal cluster, or cloud provider, including relevant memory and storage.
        \item The paper should provide the amount of compute required for each of the individual experimental runs as well as estimate the total compute. 
        \item The paper should disclose whether the full research project required more compute than the experiments reported in the paper (e.g., preliminary or failed experiments that didn't make it into the paper). 
    \end{itemize}
    
\item {\bf Code of ethics}
    \item[] Question: Does the research conducted in the paper conform, in every respect, with the NeurIPS Code of Ethics \url{https://neurips.cc/public/EthicsGuidelines}?
    \item[] Answer: \answerYes{} 
    \item[] Justification: Yes, we have carefully reviewed and adhered to the NeurIPS Code of Ethics.
    \item[] Guidelines:
    \begin{itemize}
        \item The answer NA means that the authors have not reviewed the NeurIPS Code of Ethics.
        \item If the authors answer No, they should explain the special circumstances that require a deviation from the Code of Ethics.
        \item The authors should make sure to preserve anonymity (e.g., if there is a special consideration due to laws or regulations in their jurisdiction).
    \end{itemize}

\item {\bf Broader impacts}
    \item[] Question: Does the paper discuss both potential positive societal impacts and negative societal impacts of the work performed?
    \item[] Answer: \answerYes{} 
    \item[] Justification: Yes, the positive societal impact of the paper is discussed in the introduction and conclusion. Negative societal impacts are not expected as the outlined methodology enables multimodal alignment even for limited data scenarios while being computationally efficient.
    \item[] Guidelines:
    \begin{itemize}
        \item The answer NA means that there is no societal impact of the work performed.
        \item If the authors answer NA or No, they should explain why their work has no societal impact or why the paper does not address societal impact.
        \item Examples of negative societal impacts include potential malicious or unintended uses (e.g., disinformation, generating fake profiles, surveillance), fairness considerations (e.g., deployment of technologies that could make decisions that unfairly impact specific groups), privacy considerations, and security considerations.
        \item The conference expects that many papers will be foundational research and not tied to particular applications, let alone deployments. However, if there is a direct path to any negative applications, the authors should point it out. For example, it is legitimate to point out that an improvement in the quality of generative models could be used to generate deepfakes for disinformation. On the other hand, it is not needed to point out that a generic algorithm for optimizing neural networks could enable people to train models that generate Deepfakes faster.
        \item The authors should consider possible harms that could arise when the technology is being used as intended and functioning correctly, harms that could arise when the technology is being used as intended but gives incorrect results, and harms following from (intentional or unintentional) misuse of the technology.
        \item If there are negative societal impacts, the authors could also discuss possible mitigation strategies (e.g., gated release of models, providing defenses in addition to attacks, mechanisms for monitoring misuse, mechanisms to monitor how a system learns from feedback over time, improving the efficiency and accessibility of ML).
    \end{itemize}
    
\item {\bf Safeguards}
    \item[] Question: Does the paper describe safeguards that have been put in place for responsible release of data or models that have a high risk for misuse (e.g., pretrained language models, image generators, or scraped datasets)?
    \item[] Answer: \answerNA{} 
    \item[] Justification: This paper poses no such risks as beyond code we are not releasing any new artifacts such as data or models.
    \item[] Guidelines:
    \begin{itemize}
        \item The answer NA means that the paper poses no such risks.
        \item Released models that have a high risk for misuse or dual-use should be released with necessary safeguards to allow for controlled use of the model, for example by requiring that users adhere to usage guidelines or restrictions to access the model or implementing safety filters. 
        \item Datasets that have been scraped from the Internet could pose safety risks. The authors should describe how they avoided releasing unsafe images.
        \item We recognize that providing effective safeguards is challenging, and many papers do not require this, but we encourage authors to take this into account and make a best faith effort.
    \end{itemize}

\item {\bf Licenses for existing assets}
    \item[] Question: Are the creators or original owners of assets (e.g., code, data, models), used in the paper, properly credited and are the license and terms of use explicitly mentioned and properly respected?
    \item[] Answer: \answerYes{} 
    \item[] Justification: Yes, the authors are the original owners of assets and have specified the referenced assets in the experimental details (see Section \ref{para:ExperimentalSetup}), dataset descriptions (see Appendix \ref{appendix:Dataset}), and model descriptions (see Appendix \ref{appendix:Encoders}).
    \item[] Guidelines:
    \begin{itemize}
        \item The answer NA means that the paper does not use existing assets.
        \item The authors should cite the original paper that produced the code package or dataset.
        \item The authors should state which version of the asset is used and, if possible, include a URL.
        \item The name of the license (e.g., CC-BY 4.0) should be included for each asset.
        \item For scraped data from a particular source (e.g., website), the copyright and terms of service of that source should be provided.
        \item If assets are released, the license, copyright information, and terms of use in the package should be provided. For popular datasets, \url{paperswithcode.com/datasets} has curated licenses for some datasets. Their licensing guide can help determine the license of a dataset.
        \item For existing datasets that are re-packaged, both the original license and the license of the derived asset (if it has changed) should be provided.
        \item If this information is not available online, the authors are encouraged to reach out to the asset's creators.
    \end{itemize}

\item {\bf New assets}
    \item[] Question: Are new assets introduced in the paper well documented and is the documentation provided alongside the assets?
    \item[] Answer: \answerNA{} 
    \item[] Justification: This paper does not release any new assets beyond code, which is adequately documented in order to reproduce all results from the paper.
    \item[] Guidelines:
    \begin{itemize}
        \item The answer NA means that the paper does not release new assets.
        \item Researchers should communicate the details of the dataset/code/model as part of their submissions via structured templates. This includes details about training, license, limitations, etc. 
        \item The paper should discuss whether and how consent was obtained from people whose asset is used.
        \item At submission time, remember to anonymize your assets (if applicable). You can either create an anonymized URL or include an anonymized zip file.
    \end{itemize}

\item {\bf Crowdsourcing and research with human subjects}
    \item[] Question: For crowdsourcing experiments and research with human subjects, does the paper include the full text of instructions given to participants and screenshots, if applicable, as well as details about compensation (if any)? 
    \item[] Answer: \answerNA{} 
    \item[] Justification: The paper does not involve crowdsourcing nor research with human subjects
    \item[] Guidelines:
    \begin{itemize}
        \item The answer NA means that the paper does not involve crowdsourcing nor research with human subjects.
        \item Including this information in the supplemental material is fine, but if the main contribution of the paper involves human subjects, then as much detail as possible should be included in the main paper. 
        \item According to the NeurIPS Code of Ethics, workers involved in data collection, curation, or other labor should be paid at least the minimum wage in the country of the data collector. 
    \end{itemize}

\item {\bf Institutional review board (IRB) approvals or equivalent for research with human subjects}
    \item[] Question: Does the paper describe potential risks incurred by study participants, whether such risks were disclosed to the subjects, and whether Institutional Review Board (IRB) approvals (or an equivalent approval/review based on the requirements of your country or institution) were obtained?
    \item[] Answer: \answerNA{} 
    \item[] Justification: The paper does not involve crowdsourcing nor research with human subjects.
    \item[] Guidelines:
    \begin{itemize}
        \item The answer NA means that the paper does not involve crowdsourcing nor research with human subjects.
        \item Depending on the country in which research is conducted, IRB approval (or equivalent) may be required for any human subjects research. If you obtained IRB approval, you should clearly state this in the paper. 
        \item We recognize that the procedures for this may vary significantly between institutions and locations, and we expect authors to adhere to the NeurIPS Code of Ethics and the guidelines for their institution. 
        \item For initial submissions, do not include any information that would break anonymity (if applicable), such as the institution conducting the review.
    \end{itemize}

\item {\bf Declaration of LLM usage}
    \item[] Question: Does the paper describe the usage of LLMs if it is an important, original, or non-standard component of the core methods in this research? Note that if the LLM is used only for writing, editing, or formatting purposes and does not impact the core methodology, scientific rigorousness, or originality of the research, declaration is not required.
    \item[] Answer: \answerNA{} 
    \item[] Justification: The paper does not involve the development of any LLMs, nor their use beyond minor editing.
    \item[] Guidelines:
    \begin{itemize}
        \item The answer NA means that the core method development in this research does not involve LLMs as any important, original, or non-standard components.
        \item Please refer to our LLM policy (\url{https://neurips.cc/Conferences/2025/LLM}) for what should or should not be described.
    \end{itemize}

\end{enumerate}

\clearpage
\section{Computational Resources}\label{app:ComputationalResources}
Our experiments used a cluster of 8 NVIDIA GeForce RTX 3090 GPUs, but each individual training run required only a single GPU and less than 4GB of VRAM and took at most 2 hours. 
The most computationally demanding part was obtaining the embeddings from the pretrained unimodal encoders, which on a single GPU with a batch size of 16 has a speed of 3 sec/iter.

\section{Implementation}\label{app:Implementation}

\begin{minipage}{\linewidth}
\begin{lstlisting}[style=neuriPy,
    caption={PyTorch reference implementation of the \method regularizer $\shortmethod(X,A)$},
    label={lst:rhc}]
import torch
import torch.nn.functional as F

def reg_structure(X, A, L=1, tau=.05, eps=1e-8):
    X_hat = F.normalize(X, p=2, dim=1, eps=eps)
    A_hat = F.normalize(A, p=2, dim=1, eps=eps)
    
    X_tilde = X_hat - X_hat.mean(dim=0, keepdim=True)
    A_tilde = A_hat - A_hat.mean(dim=0, keepdim=True)

    Sx = (X_tilde @ X_tilde.T) / tau
    Sa = (A_tilde @ A_tilde.T) / tau

    Px = F.softmax(Sx, dim=1)
    Pa = F.softmax(Sa, dim=1)

    r_S = 0.0
    for l in range(1, L + 1):
        Px_l, Pa_l = Px.matrix_power(l), Pa.matrix_power(l)
        M_l = 0.5 * (Px_l + Pa_l)
        d_js  = 0.5 * ((Pa_l * (torch.log(Pa_l + eps) - torch.log(M_l + eps))).sum()
                     + (Px_l * (torch.log(Px_l + eps) - torch.log(M_l + eps))).sum())
        r_S  += d_js / l
    return r_S / L
\end{lstlisting}
\end{minipage}

\section{Proof of the generalization gap of \method regularization}\label{app:ProofGeneralization}
Unlike classical weight‐space penalties such as the $\ell_1$ norm, which impose sparsity directly on the model parameters and are agnostic to the number of training examples available, the proposed \method regularizer is constructed in the \emph{sample} space, via pairwise similarities among the $N$ input embeddings. 
Consequently, its numerical value and statistical behavior depend explicitly on $N$, the size of the dataset or batch size, respectively. 
To ensure that this penalty remains well‐behaved as the number of samples changes, we introduce its \emph{population counterpart}, defined as the expectation of the empirical similarity‐based regularizer under the true data distribution. 
In the following section, we will show that the empirical estimator is unbiased and concentrates around its population expectation at a rate of $\mathcal O(1/{\sqrt{N}})$, thereby providing a computationally efficient yet statistically reliable approximation to the ideal regularization term.

For a fixed number of hierarchical levels $L$, we define the
\emph{\method regularizer}
\begin{equation}
    \shortmethod[L](X, A) = 
    \frac{1}{L}\sum_{l=1}^{L}\frac{\text{JS}(P_X^{(l)}, P_A^{(l)})}{l},
  \label{eq:hc‑reg}
\end{equation}
where $P_X=\text{softmax}\bigl(S_X\bigr) = \text{softmax}\bigl(\frac{\tilde X\tilde X^{\top}}{\tau}\bigr)$,
$P_A$ is defined analogously, and
$P_X^{(l)}=(P_X)^l$,
$P_A^{(l)}=(P_A)^l$.

We define the empirical and expected values as
\begin{equation*}
  \hat{\mathcal R}_N = \shortmethod(X,A),
  \quad
  \mathcal R^\star = \mathbb E_{(x_1,x_2)\sim \mathcal D}\bigl[\hat{\mathcal R}_N\bigr].
\end{equation*}

In order to prove the generalization gap of the regularization, we investigate how much each sample pair can influence the output and use this to derive the bound using McDiarmid's inequality.

\begin{lemma}[Per‑sample sensitivity]
Replacing a single pair $(x_1^i,x_2^i)$ by an arbitrary $(\tilde{x}_1^{i},\tilde{x}_2^{i})$, while keeping the other $N\!-\!1$ pairs fixed, changes the value of~\eqref{eq:hc‑reg} by at most
\begin{equation}
  \Delta_N =\frac{4\log 2}{N}.
  \label{eq:delta‑n}
\end{equation}
\end{lemma}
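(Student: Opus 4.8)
The plan is to establish the bounded-differences constant directly, so that the outer generalization bound follows by a routine application of McDiarmid's inequality. The single analytic fact driving the entire argument is the uniform boundedness of the Jensen--Shannon divergence: for any two probability vectors $p,q$ one has $0 \le \mathrm{JS}(p,q) \le \log 2$ (with the natural logarithm used in the implementation), because each Kullback--Leibler term against the mixture $\tfrac12(p+q)$ is at most $\log 2$. Since every row of $P_X$ and $P_A$ is a probability distribution and powers of row-stochastic matrices remain row-stochastic, each row of $P_X^{(l)}$ and $P_A^{(l)}$ is a distribution and the corresponding per-row JS contribution lies in $[0,\log 2]$. Throughout I read the level-$l$ term as the per-sample--normalized (row-averaged) Jensen--Shannon divergence, which is the only normalization consistent with the claimed $1/N$ rate. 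The task is then to control how replacing one pair $(x_1^i,x_2^i)$ perturbs these row distributions across all levels.

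First I would record that the level weights $\tfrac{1}{Ll}$ satisfy $\tfrac{1}{L}\sum_{l=1}^{L}\tfrac{1}{l}\le 1$, so it suffices to bound the change of the row-averaged JS uniformly over $l$ and carry that constant through the weighted average. Next I would split the effect of the swap into a \emph{direct} and an \emph{indirect} part. The direct part is row $i$ itself: both $x_i$ and $a_i=f(x_i)$ change, so the $i$-th row distributions of $P_X^{(l)}$ and $P_A^{(l)}$ may change arbitrarily, yet their JS contribution is confined to $[0,\log 2]$ and hence changes by at most $\log 2$. The indirect part is the effect on the remaining $N-1$ rows, which enters only through (i) the re-centering step, whose mean shifts by $\tfrac1N(\widehat{x}_i^{\mathrm{new}}-\widehat{x}_i^{\mathrm{old}})$, of norm $O(1/N)$ since normalized embeddings have unit norm, and (ii) the single shared $i$-th coordinate appearing in each such row.

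To handle the indirect part I would invoke a Pinsker/Lipschitz-type estimate bounding the change of each row's JS by its total-variation perturbation, and show that each untouched row moves only $O(1/N)$ in total variation, so that the \emph{sum} over the $N-1$ untouched rows is $O(1)$ rather than $O(N)$. A key structural point is that the map $P\mapsto P^{l}$ is a composition of multiplications by row-stochastic matrices and is therefore non-expansive on each row in $\ell_1$ (a Markov/data-processing contraction); consequently the per-row perturbation does not amplify with the hop count $l$, and the $l$-sum stays controlled by the $\tfrac1l$ weights. Collecting the direct contribution with the aggregated indirect contribution arising from the dual role of index $i$ as both the replaced row and a shared coordinate across both the $X$ and $A$ normalizations yields a total change of at most $4\log 2$ before the $N$-normalization, giving $\Delta_N=\tfrac{4\log 2}{N}$; the exact value of the constant $4$ is a matter of this final bookkeeping.

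The main obstacle is precisely the indirect term: a naive bound that treats each of the $N-1$ untouched rows as changing by up to $\log 2$ would give a useless $O(1)$ sensitivity. The crux is to prove that their cumulative JS change is $O(1)$ by quantifying that re-centering and the lone shared coordinate each perturb a row's distribution by only $O(1/N)$ in total variation, and that this smallness survives the passage through $P^{l}$ thanks to the contraction property above. Establishing this quantitative $O(1/N)$-per-row bound, uniformly in $l$, is the technical heart of the lemma; everything else is routine accounting over the two modalities and the $L$ levels, after which the constant feeds directly into McDiarmid's inequality to yield the $\mathcal{O}(1/\sqrt{N})$ generalization gap.
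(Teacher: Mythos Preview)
Your approach is correct and follows the same essential strategy as the paper---bounding the per-row Jensen--Shannon contribution by $\log 2$, reading $\mathrm{JS}(P_X^{(l)},P_A^{(l)})$ as a row-averaged quantity (a normalization you rightly flag as necessary for the $1/N$ rate), and then controlling how many rows are effectively perturbed when one sample is swapped. The paper's proof is considerably terser: it simply asserts that ``exactly two rows of $P_X$ (and two of $P_A$) can change,'' bounds the resulting row-averaged total variation by $2/N$ per matrix, invokes a Lipschitz-type inequality $|\mathrm{JS}(p,q)-\mathrm{JS}(p',q')|\le (\log 2)\bigl[\mathrm{TV}(p,p')+\mathrm{TV}(q,q')\bigr]$, and concludes with $\log 2\cdot(2/N+2/N)=4\log 2/N$; for the powers $P^l$ it just remarks that the weighted average over $l$ cannot increase the bound. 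Your direct/indirect decomposition is more honest: the paper's ``two rows'' claim silently ignores both the re-centering step and the fact that a column change in $S_X$ propagates through the row-wise softmax to \emph{every} row of $P_X$, which are precisely the effects you isolate and bound by $O(1/N)$ per row. Your $\ell_1$-contraction argument for $P\mapsto P^l$ likewise supplies a step the paper elides. So the two arguments share the same skeleton and arrive at the same constant, but yours fills in gaps the paper leaves; the paper's version is shorter but takes its central counting claim on faith.
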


\begin{proof}
Replacing a single sample $(x_1^i,x_2^i)$ alters exactly one row and one column of each similarity matrix $S_X, S_A$, which after row‐wise softmax means exactly two rows of $P_X$ (and two of $P_A$) can change.
We measure the matrix‐level deviation by averaging the total variation (TV) distance of corresponding rows,
\[
  d(P,P') = \frac1N \sum_{j=1}^N\text{TV}\left(P(j,\cdot),P'(j,\cdot)\right),
  \quad
  \text{TV}(p,p')=\tfrac12\sum_i|p_i-p'_i|\le1,
\]
so changing two rows gives $d(P_X,P_X') \le \frac2N$ and similarly $d(P_A,P_A') \le \frac2N$. 
Next, we check the influence of the sample perturbation for the Jensen–Shannon divergence
\[
\text{JS}(P_X, P_A)
=\tfrac12\,D_{\mathrm{KL}}\!\bigl(P_X\|M\bigr)
+\tfrac12\,D_{\mathrm{KL}}\!\bigl(P_A\|M\bigr), 
\]
where $M = \frac12(P_X+P_A)$.
Each KL term is bounded by $\log2$, and more generally if $\text{TV}(p,p') \le \delta$ and $\text{TV}(q,q') \le \delta$ then $\bigl|\text{JS}(p,q)-\text{JS}(p',q')\bigr| \le \log2[\text{TV}(p,p') + \text{TV}(q,q')]$.
Hence, at any fixed level $L$, the two-matrix, two-row perturbation shifts the JS divergence by at most $\log2(\frac2N + \frac2N)=\frac{4\log2}{N}$, and averaging over $l=1,\dots,L$ with weights $1/l$ cannot increase this bound.
\end{proof}

\begin{lemma}[Generalization bound]
The generalization gap between the empirical and the expected values of $\shortmethod$ is bounded by
\begin{equation}
    \bigl|\hat{\mathcal R}_N - \mathcal R^\star\bigr| \leq \mathcal O\left(\frac{1}{\sqrt{N}}\right).
\end{equation}
\end{lemma}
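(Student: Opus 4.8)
The plan is to treat $\hat{\mathcal R}_N = \shortmethod(X,A)$ as a function of the $N$ independent paired samples $(x_1^i,x_2^i)\sim\mathcal D$ and to convert the per-sample sensitivity bound of the previous lemma directly into a concentration statement via McDiarmid's bounded-differences inequality. Since $f_1,f_2$ are fixed maps, each $a_i=f(x_i)$ is a deterministic function of $x_i$, so $\hat{\mathcal R}_N$ depends on the data only through the $N$ pairs, and the samples enter independently. This is precisely the setting McDiarmid requires.

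First I would record the bounded-differences constant. The preceding lemma shows that replacing any single pair while fixing the others perturbs $\hat{\mathcal R}_N$ by at most $\Delta_N=\tfrac{4\log 2}{N}$, uniformly over the replacement and over which coordinate is changed. Hence $\hat{\mathcal R}_N$ satisfies the bounded-differences property with all $N$ coefficients equal to $c_i=\Delta_N$. Next I would apply McDiarmid's inequality, for which the relevant quantity is $\sum_{i=1}^N c_i^2 = N\,\Delta_N^2 = \tfrac{16(\log 2)^2}{N}$; note the crucial cancellation, where summing $N$ terms each of order $1/N^2$ leaves order $1/N$. McDiarmid then gives, for every $t>0$,
\[
  \Pr\!\bigl[\,\lvert \hat{\mathcal R}_N - \mathcal R^\star\rvert \ge t\,\bigr]
  \;\le\; 2\exp\!\Bigl(-\tfrac{2t^2}{\sum_i c_i^2}\Bigr)
  \;=\; 2\exp\!\Bigl(-\tfrac{t^2 N}{8(\log 2)^2}\Bigr).
\]
Solving $2\exp\!\bigl(-t^2 N/(8(\log 2)^2)\bigr)=\delta$ for $t$ yields, with probability at least $1-\delta$,
\[
  \lvert \hat{\mathcal R}_N - \mathcal R^\star\rvert
  \;\le\; 2\sqrt{2}\,\log 2\,\sqrt{\tfrac{\log(2/\delta)}{N}}
  \;=\; \mathcal O\!\bigl(1/\sqrt N\bigr),
\]
which is the claimed rate. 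Equivalently, integrating the sub-Gaussian tail or invoking the bounded-differences variance bound $\mathrm{Var}(\hat{\mathcal R}_N)\le\tfrac14\sum_i c_i^2=\tfrac{4(\log 2)^2}{N}$ gives $\mathbb E\lvert \hat{\mathcal R}_N-\mathcal R^\star\rvert\le \tfrac{2\log 2}{\sqrt N}=\mathcal O(1/\sqrt N)$, an in-expectation version of the same bound.

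The genuine content has already been absorbed into the per-sample sensitivity lemma, so the remaining step is essentially mechanical; the main thing to get right is the interpretation of the statement. The bound as written looks deterministic, but concentration can only be asserted either with high probability or in expectation, so I would state precisely which of the two I mean — both follow, as above. A secondary point I would verify is that the independence hypothesis is legitimate, i.e.\ that $f_1,f_2$ are held fixed rather than fitted on the same $N$ samples; otherwise a uniform-convergence argument over the hypothesis class would be required and the constant would pick up a complexity term. Under the fixed-map reading used throughout the setup, no such term appears and the $\mathcal O(1/\sqrt N)$ rate follows immediately.
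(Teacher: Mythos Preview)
Your proposal is correct and follows essentially the same route as the paper: invoke the per-sample sensitivity bound $\Delta_N=\tfrac{4\log 2}{N}$, plug $c_i=\Delta_N$ into McDiarmid to obtain $\Pr\bigl[\lvert\hat{\mathcal R}_N-\mathcal R^\star\rvert\ge t\bigr]\le 2\exp\!\bigl(-t^2N/(8\log^2 2)\bigr)$, and solve for $t$ to get the $2\sqrt{2}\log 2\sqrt{\log(2/\delta)/N}$ high-probability bound. Your additional remarks on the in-expectation variant and on the fixed-$f_1,f_2$ independence assumption go slightly beyond what the paper states but are accurate and helpful.
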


\begin{proof}
McDiarmid's bounded‑difference inequality states that for any function
$f$ of independent variables $X_{1},X_{2},\dots ,X_{N}$,
\begin{equation*}
    \Pr\Bigl(\bigl|f(X_{1},X_{2},\ldots ,X_{N})-\mathbb {E} [f(X_{1},X_{2},\ldots ,X_{N})]\bigr|\geq \varepsilon \Bigr)\leq 2\exp \left(-{\frac {2\varepsilon ^{2}}{\sum _{i=1}^{N}c_{i}^{2}}}\right),
\end{equation*}
where $c_i$ bounds the change in $f$ when only the $i^{\mathrm{th}}$ input is perturbed.
Using $c_i=\Delta_N$ from \eqref{eq:delta‑n},
\begin{equation*}
\sum_{i=1}^{N} c_i^{2}
  = N\Delta_N^{2}
  = N\Bigl(\frac{4\log 2}{N}\Bigr)^{2}
  = \frac{16\log^{2}2}{N}.
\end{equation*}
Hence, for every $\varepsilon>0$,
\begin{equation}
  \Pr\Bigl(\bigl|\hat{\mathcal R}_N - \mathcal R^\star\bigr|\geq\varepsilon\Bigr)
  \le
  2\exp\Bigl(-\frac{\varepsilon^{2} N}{8\log^{2}2}\Bigr).
  \label{eq:mcdiarmid‑hc}
\end{equation}

Next, we fix a confidence level $1-\delta$ with $0<\delta<1$ and set the right‑hand side of~\eqref{eq:mcdiarmid‑hc} to $\delta$.  
Solving for $\varepsilon$ gives
\begin{equation*}
\varepsilon = 2\sqrt{2}\log 2 \sqrt{\frac{\log(2/\delta)}{N}}.
\end{equation*}
Thus, with probability at least $1-\delta$,
\begin{equation}
    \bigl|\hat{\mathcal{R}}_N - \mathcal{R}^\star\bigr|
    \le
    2\sqrt{2} \log 2 \sqrt{\frac{\log(2/\delta)}{N}}
  \label{eq:final‑bound}
\end{equation}
showing $\mathcal O\left(\frac{1}{\sqrt{N}}\right)$ convergence.
\end{proof}

\section{Hyperparameters}
\label{appendix:Hyperparameters}
Table~\ref{tab:Hyperparameters} lists all default hyperparameters that were used throughout the paper if not otherwise specified.

\begin{table}[ht]
\centering
\caption{
    List of default hyperparameters used throughout the paper.
}
\vspace{6pt}
\begin{tabular}{l l r}
\toprule
 \textbf{Category} & \textbf{Hyperparameter} & \textbf{Value} \\
\midrule
 \multirow{2}{*}{Layer selection} 
 & Validation size & 5,000 \\
 & Metric & Mutual kNN (k=rice) \\
 \midrule
 \multirow{8}{*}{Alignment training}
 & Epochs & 1,000 \\
 & Batch size & 4,096 \\
 & Learning rate scheduler & Cosine \\
 & Auto learning rate finder & \citep{smith2017cyclical} \\
 & Gradient clipping & 1.0 \\
 & Early stopping epochs & 200 \\
 & Optimizer & AdamW \\
 & Weight decay & 0.0001 \\
 \midrule
 \multirow{5}{*}{Alignment objective}
 & Temperature $\tau$ & 0.05 \\
 & $\shortmethod$ levels $L$ & 1 \\
 & $\lambda$ & 10.0 \\
 & $\lambda$ warmup & Linear \\
 & $\lambda$ warmup steps & 1,000 \\
 \midrule
 \multirow{1}{*}{Alignment layer}
 & Output dimension & 512 \\
\bottomrule
\end{tabular}
\label{tab:Hyperparameters}
\end{table}

\section{Description of evaluation datasets}
\label{appendix:Dataset}
We use 22 vision datasets for evaluation of zero-shot classification and two retrieval datasets for evaluation on retrieval tasks, similar to those studied in~\citet{radford2021learning}, except for four tasks because of either unavailability of the dataset or lack of diversity in the modality-specific pretrained spaces, leading to random performance.
These datasets cover a wide range of vision tasks, including general object classification datasets CIFAR10~\cite{krizhevsky2009learning}, CIFAR100~\cite{krizhevsky2009learning}, STL10~\cite{coates2011analysis}, ImageNet~\cite{deng2009imagenet}, Caltech101~\cite{fei2004learning}; 
fine-grained object classification datasets Food101~\cite{bossard2014food}, Flowers~\cite{nilsback2008automated}, Cars~\cite{krause20133d}, FGVC Aircraft~\cite{maji2013fine}, Pets~\cite{parkhi2012cats}; 
handwritten digits classification dataset MNIST~\cite{lecun1998gradient}; 
texture classification dataset DTD~\cite{cimpoi2014describing}; 
scene classification dataset SUN397~\cite{xiao2016sun}; 
the facial emotion recognition dataset FER2013~\cite{goodfellow2013challenges}; 
the satellite image classification datasets EuroSAT~\cite{helber2019eurosat}, Resisc45~\cite{cheng2017remote}; 
the German Traffic Sign Recognition Benchmark (GTSRB)~\cite{stallkamp2012man}; 
the KITTI Distance dataset~\cite{geiger2012we}; the metastatic tissue classification dataset PatchCamelyon (PCam)~\cite{veeling2018rotation}; 
action recognition datasets UCF101~\cite{soomro2012ucf101}, Kinetics700~\cite{carreira2019short}; 
the country classification dataset Country211~\cite{radford2021learning}.
Furthermore, we include two standard image\textendash text retrieval benchmarks: MS COCO~\cite{lin2014microsoft} and Flickr30~\cite{plummer2015flickr30k}.
For the two video datasets, UCF101 and Kinetics700, we take the middle frame of each video clip as the input of the pre-trained models.
We use \textit{accuracy} for zero-shot classification evaluation and \textit{recall@1} (R@1) for retrieval evaluation.
For zero-shot classification evaluation, we follow the same setup and use the same prompts as in~\citet{radford2021learning}.

\begin{table}[ht]
\centering
\caption{
    List of benchmarks we used for both zero-shot classification and retrieval evaluation.
}
\vspace{6pt}
\begin{tabular}{l l rrr}
\toprule
 \textbf{Task} & \textbf{Dataset} & \textbf{Number of Classes} & \textbf{Train size} & \textbf{Test size}\\
\midrule
 \multirow{22}{*}{Classification} 
 & Food101~\cite{bossard2014food} & 101 & 75,750 & 25,250\\
 & CIFAR10~\cite{krizhevsky2009learning} & 10 & 50,000 & 10,000\\
 & CIFAR100~\cite{krizhevsky2009learning} & 100 & 50,000 & 10,000\\
 & SUN397~\cite{xiao2016sun} & 397 & 19,850 & 19,850\\
 & Cars~\cite{krause20133d} & 196 & 8,144 & 8,041\\
 & FGVC Aircraft~\cite{maji2013fine} & 100 & 6,667 & 3,333\\
 & DTD~\cite{cimpoi2014describing} & 47 & 3,760 & 1,880\\
 & OxfordPets~\cite{parkhi2012cats} & 37 & 3,680 & 3,669\\
 & Caltech101~\cite{fei2004learning} & 102 & 3,060 & 6,084\\
 & Flowers~\cite{nilsback2008automated} & 102 & 2,040 & 6,149\\
 & MNIST~\cite{lecun1998gradient} & 10 & 60,000 & 10,000\\
 & FER2013~\cite{goodfellow2013challenges} & 7 & 28,709 & 3,589\\
 & STL10~\cite{coates2011analysis} & 10 & 5,000 & 8,000\\
 & EuroSAT~\cite{helber2019eurosat} & 10 & 10,000 & 5,000\\
 & Resisc45~\cite{cheng2017remote} & 45 & 25,200 & 6,300\\
 & GTSRB~\cite{stallkamp2012man} & 43 & 26,640 & 12,630\\
 & KITTI Distance~\cite{geiger2012we} & 4 & 5,985 & 1,496\\
 & Country211~\cite{radford2021learning} & 211 & 42,200 & 21,100\\
 & PatchCamelyon~\cite{veeling2018rotation} & 2 & 294,912 & 32,768\\
 & UCF101~\cite{soomro2012ucf101} & 101 & 9,537 & 3,783\\
 & Kinetics700~\cite{carreira2019short} & 700 & 536,485 & 33,966\\
 & ImageNet~\cite{deng2009imagenet} & 1000 & 1,281,167 & 50,000\\
 \midrule
 \multirow{2}{*}{Retrieval} 
 & MS COCO~\cite{lin2014microsoft} & N/A & 82,783 & 40,504\\
 & Flickr30~\cite{plummer2015flickr30k} & N/A & 29,783 & 5,000\\
\bottomrule
\end{tabular}
\label{tab:EvalDataset}
\end{table}

\section{Description of pretrained unimodal encoders}\label{appendix:Encoders}
We compare performance when using three self-supervised vision encoders from the DINOv2 family for alignment. 
Namely, a Vision Transformer Base (ViT-B), Large (ViT-L), and Giant (ViT-G), where each is pretrained on massive unlabeled image corpora using distilled masked prediction objectives to produce rich, high-dimensional patch-level embeddings \citep{dinov2}. 
For language, we compare RoBERTa, a transformer-based encoder optimized on large-scale English text, which outputs contextualized token representations that are aggregated into a fixed-length sentence embedding \citep{liu2019roberta}. 
In addition, we leverage two sizes of the Llama3 decoder-only transformer family, namely an 8B and 13B parameter version, which are pretrained on web-scale text data for generative tasks \citep{touvron2023llama}. 
In all experiments, we freeze the unimodal encoders and learn only lightweight projection layers atop their latent outputs to align them into a shared multimodal embedding space.

\section{Layers with the highest similarity}
Table~\ref{tab:HighestLayerSimilarity} lists the model combinations in this work along with the layers of each encoder that have the highest representational similarity measured in terms of mutual kNN with k chosen according to Rice's criterion.

\begin{table}[h]
    \centering
    \caption{
        Layer combination with the highest representational similarity in terms of mutual kNN, where k is chosen according to Rice's criterion.
    }
    \label{tab:HighestLayerSimilarity}
    \begin{tabular}{ll cc}
    \toprule
    \textbf{Language Model} & \textbf{Vision Model} & \textbf{Language Index} & \textbf{Vision Index} \\
    \midrule
    \multirow{2}{*}{RoBERTa}
    & \multirow{1}{*}{ViT-L}
    & 24 & 22 \\
    \cmidrule{2-4}
    & \multirow{1}{*}{ViT-G}
    & 24 & 38 \\
    \midrule
    \multirow{1}{*}{Llama3-8B}
    & \multirow{1}{*}{ViT-G}
    & 25 & 36 \\
    \midrule
    \multirow{1}{*}{Llama13B}
    & \multirow{1}{*}{ViT-G}
    & 29 & 36 \\
    \bottomrule
    \end{tabular}
\end{table}

\section{Further results}
\label{appsub:Further_results}
In this section, we provide a detailed results of the experiments presented in the main paper.
In Section \ref{appsub:ComparisonUnsuper}, we compare our supervised alignment method to an unsupervised approach, Section \ref{appsub:Ablation} presents details on the ablation study that isolates the effects of regularization and layer selection, Section \ref{appsub:LayerSelectionConsistency} analyzes the consistency of our layer selection procedure across different validation set sizes, and Section \ref{appsub:DetailDownstream} presents detailed results of the downstream performance on zero-shot classification and retrieval tasks.

\subsection{Layer-wise performance}

In Section \ref{sec:Method}, we propose selecting layers for alignment based on their representational similarity rather than defaulting to the final layers. 
To provide a more comprehensive view, Figure \ref{fig:LayerwisePerformance} shows a detailed layer-wise analysis between the last five layers of a RoBERTa encoder and a DINOv2 ViT-L encoder in terms of the respective downstream performance these combinations achieve after alignment.
Results show that the most suitable layers (here 24/22 for the language and vision encoder pair) yield strong performance across all datasets.
Additionally, we can see that for CIFAR10 and UCF101, there is a tendency that later vision layers lead to better performance while the choice of the language layers is less influential.

\begin{figure}[h]
    \centering
    \begin{subfigure}[b]{0.4\textwidth}
         \centering
         \includegraphics[width=\textwidth]{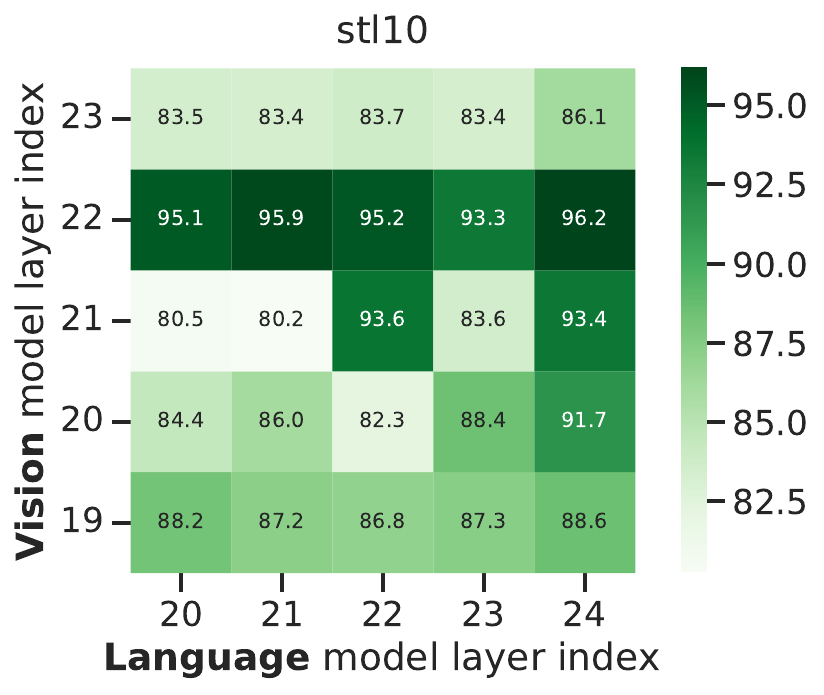}
    \end{subfigure}
    \begin{subfigure}[b]{0.4\textwidth}
         \centering
         \includegraphics[width=\textwidth]{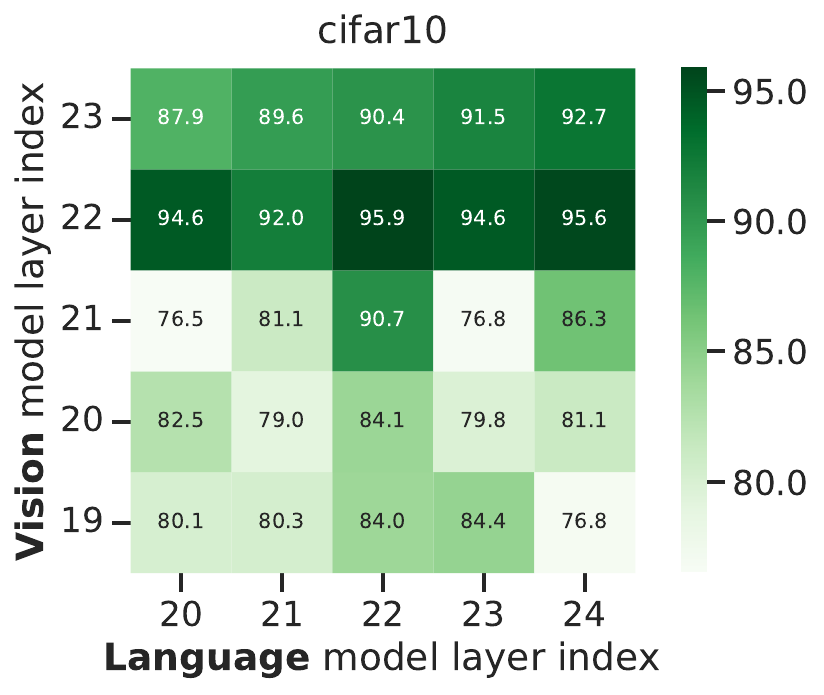}
    \end{subfigure}
    
    \begin{subfigure}[b]{0.4\textwidth}
         \centering
         \includegraphics[width=\textwidth]{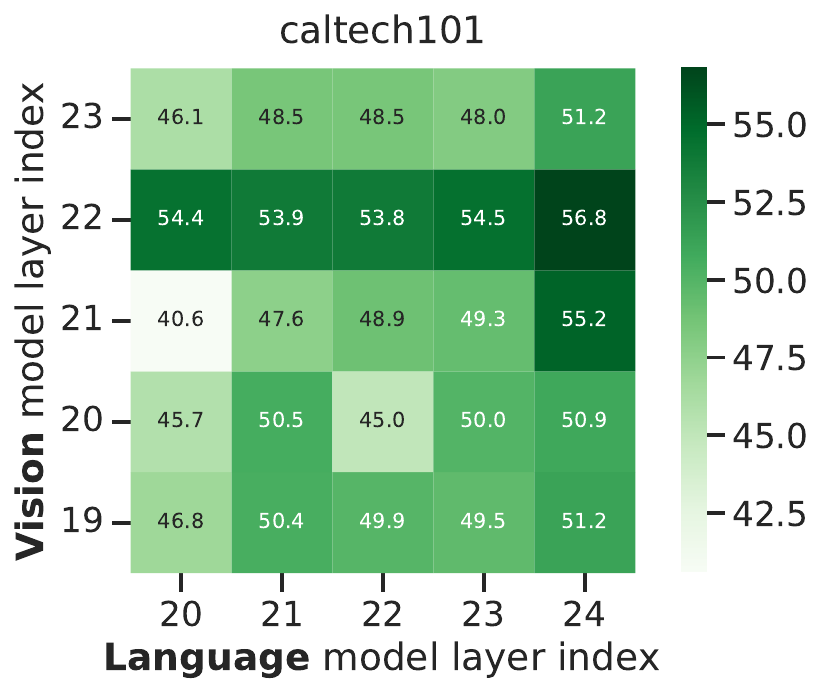}
     \end{subfigure}
     \begin{subfigure}[b]{0.4\textwidth}
         \centering
         \includegraphics[width=\textwidth]{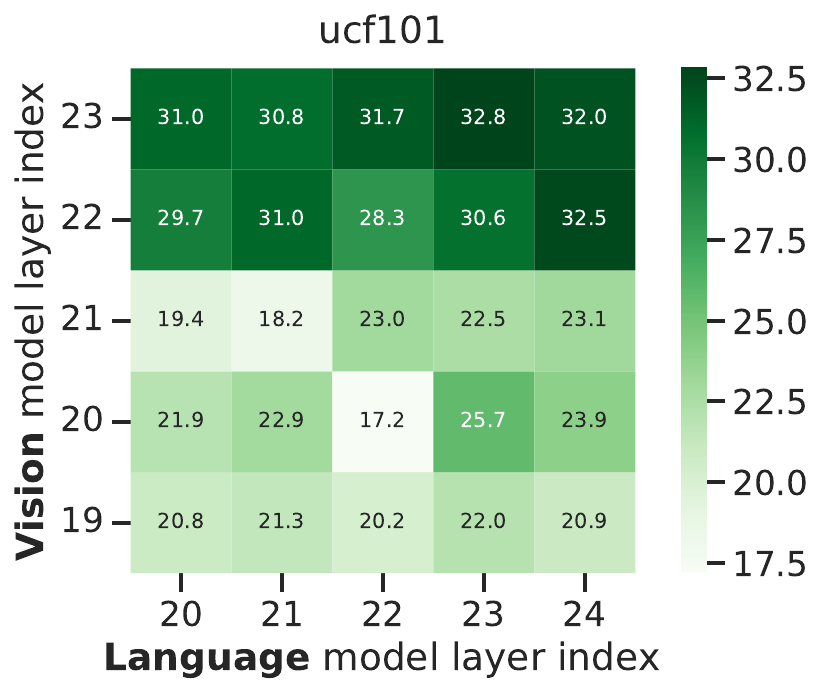}
     \end{subfigure}
    \caption{
        Downstream performance of the last five layers for a RoBERTa encoder and a DINOv2 ViT-L encoder.
    }
    \label{fig:LayerwisePerformance}
\end{figure}

\subsection{Layer selection consistency}\label{appsub:LayerSelectionConsistency}
Figure \ref{fig:LayerConsistency} evaluates the stability of layer selection by repeating the process 100 times on random validation subsets of varying sizes ($N=100$, $500$, $1000$, and $5000$) for mutual kNN with k=rice. 
All metrics illustrate high robustness against repeated selection on different subsets, justifying our choice of a 5000-sample set to ensure reliable layer pairing across runs.


\begin{figure}[h]
    \centering
    \begin{subfigure}[b]{0.24\textwidth}
         \centering
         \includegraphics[width=\textwidth]{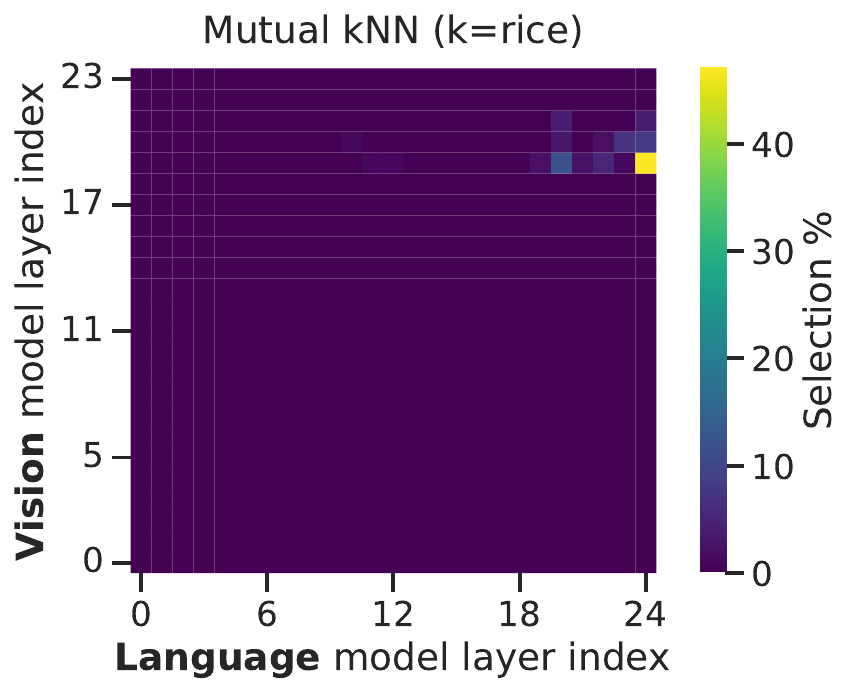}
         \caption{$N=100$}
    \end{subfigure}
    \begin{subfigure}[b]{0.24\textwidth}
         \centering
         \includegraphics[width=\textwidth]{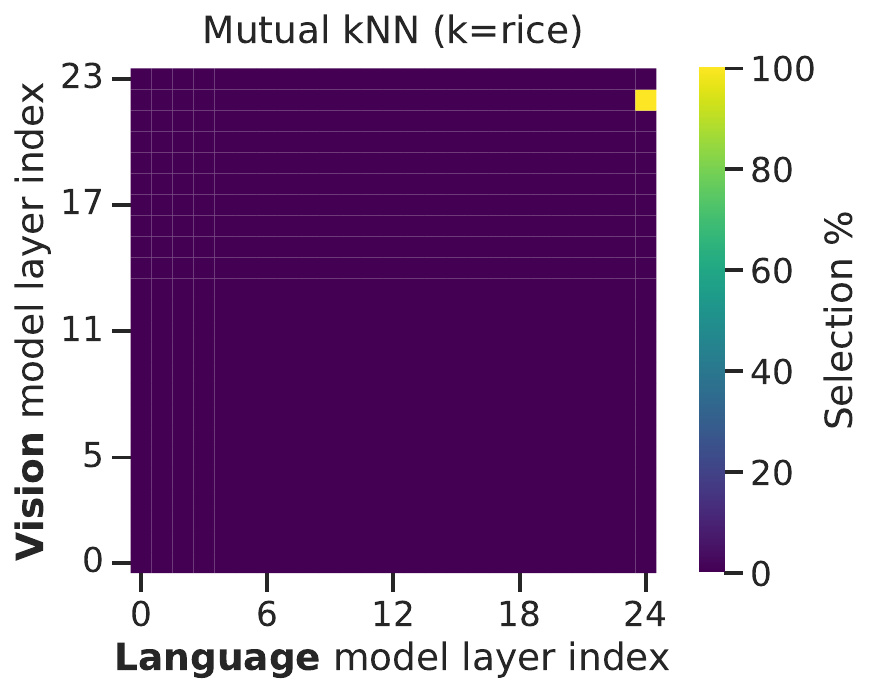}
         \caption{$N=500$}
    \end{subfigure}
    \begin{subfigure}[b]{0.24\textwidth}
         \centering
         \includegraphics[width=\textwidth]{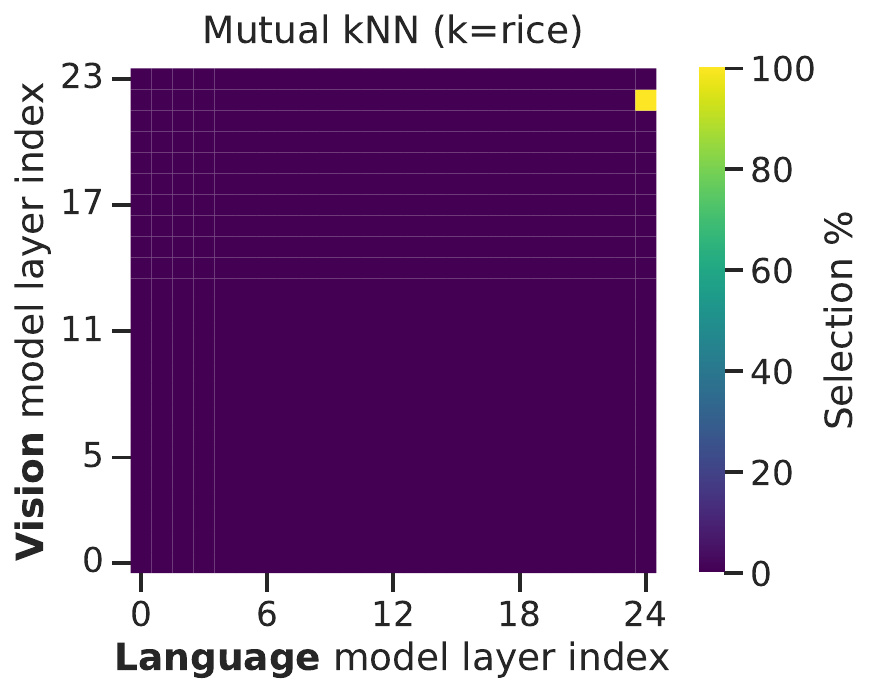}
         \caption{$N=1,000$}
     \end{subfigure}
     \begin{subfigure}[b]{0.24\textwidth}
         \centering
         \includegraphics[width=\textwidth]{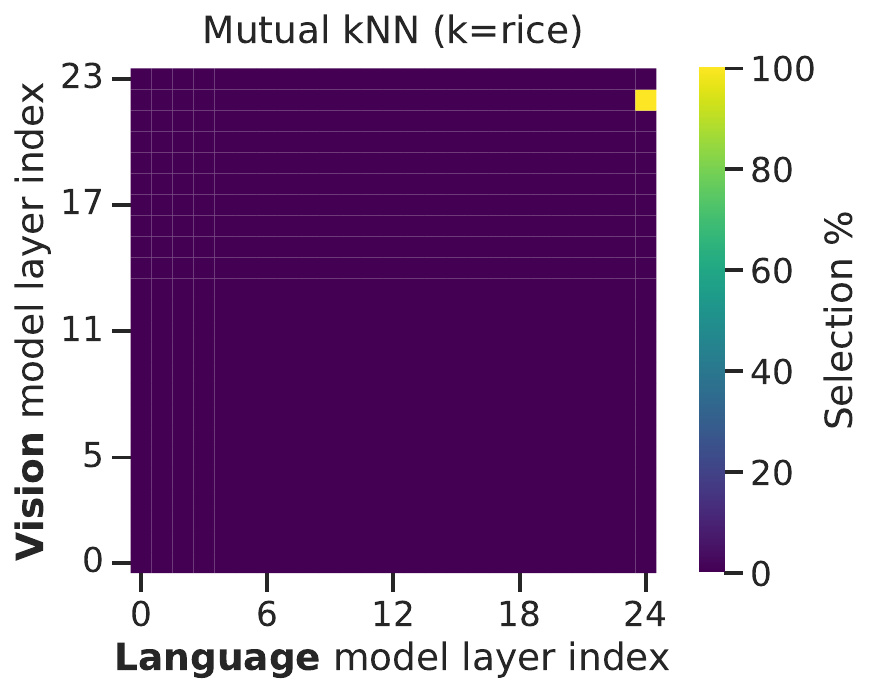}
         \caption{$N=5,000$}
     \end{subfigure}
    \caption{
        Consistency of different mutual kNN (k=rice) when repeatedly selecting $N$ samples and selecting the best layer combination.
    }
    \label{fig:LayerConsistency}
\end{figure}

\subsection{Low-data comparison layer selection}\label{appsub:LowDataLayerSelection}
Figure~\ref{fig:LowDataPerformanceLastVSSimilar} compares the layer selection strategy when scaling down the training data using the same setup as for Figure~\ref{fig:LowDataPerformance} of the main paper.
Results show a minor difference in performance and utility when using the most similar layers for alignment.
However, results show that the proposed layer selection strategy is a simple addition to existing alignment methods as the method includes choosing the last layers to align if they indeed are the most similar.
This can be clearly seen as the performance across tasks is positive.
Thus the selection process can additionally boost performance at minimal additional overhead.

\begin{figure}[h]
    \centering
    \hspace*{-0.65cm}
    \includegraphics[width=1.05\linewidth]{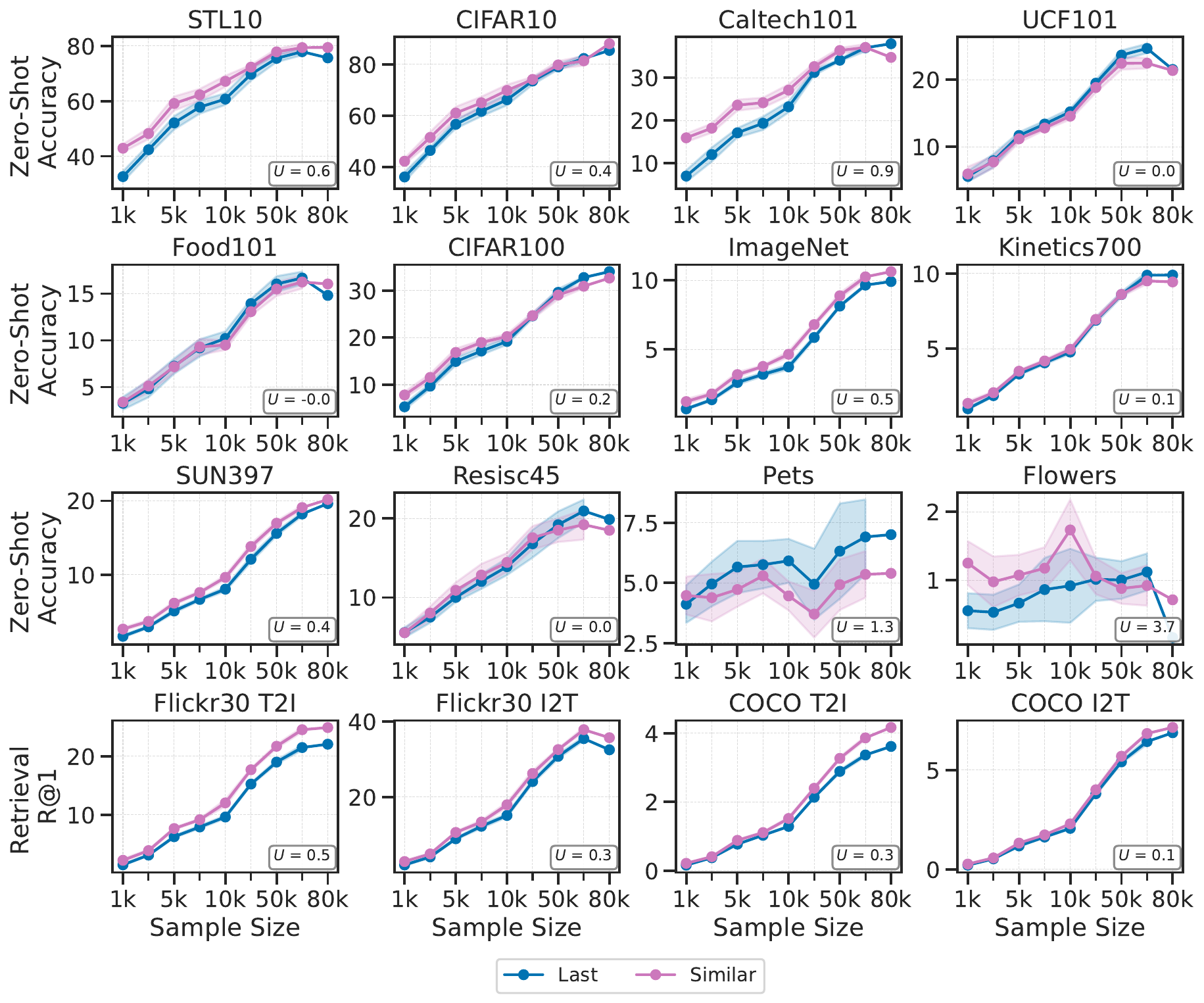}
    \caption{
        Zero-shot and retrieval performance for training linear alignment using the last or most similar layer. 
        Here, $U$ quantifies the proposed method's label efficiency by computing the utility compared to using the last layer.
    }
    \label{fig:LowDataPerformanceLastVSSimilar}
\end{figure}

\subsection{Application to other modalities}
\label{appsub:other_modalities}

\paragraph{Audio-text.}
We further evaluate the proposed method for audio-language alignment. 
Here, as a pretrained unimodal encoder, we use BEATs-iter3 \cite{Chen2022beats} for audios, and extract text embeddings using RoBERTa \citep{liu2019roberta}. 
We train a linear alignment on $3,\!839$ audio-caption pairs from the Clotho dataset~\cite{drossos2019clothoaudiocaptioningdataset}.

To evaluate cross-modal alignment, we perform audio-to-text and text-to-audio retrieval on a held-out set of $1,\!045$ pairs from the same dataset. 
As shown in Table \ref{tab:Audio-text}, our \method regularization significantly improves the retrieval performance compared to the baseline without regularization. 
This demonstrates the generalizability of our regularization method across modalities beyond vision–language alignment.

\begin{table}[h]
\centering
\caption{Retrieval performance (R@k) for audio--text alignment.}
\begin{tabular}{lcccc}
\toprule
& \multicolumn{2}{c}{\textbf{Audio-to-Text}} & \multicolumn{2}{c}{\textbf{Text-to-Audio}} \\
\cmidrule(r){2-3} \cmidrule(l){4-5}
\textbf{Method} & \textbf{R@1} & \textbf{R@5} & \textbf{R@1} & \textbf{R@5} \\
\midrule
Linear + Last \citep{method_mapping} & 0.2 & 1.2 & 0.4 & 2.9 \\
Linear + Similar & 0.6 & 2.0 & 0.0 & 1.4 \\
Linear + Similar + $\shortmethod$ & \textbf{7.9} & \textbf{23.6} & \textbf{8.0} & \textbf{24.6} \\
\bottomrule
\label{tab:Audio-text}
\end{tabular}
\end{table}

\paragraph{Single-cell image-transcriptome.}
In addition to the audio application, we have conducted additional experiments on the biological domain using only 19,900 paired human single-cell image-transcriptome data. 
In this experiment, we linearly align a UCE~\cite{rosen2023universal} encoder for the transcriptome data and a masked autoencoder (MAE) trained on single-cell images for the image data. 
We split the 19,900 paired samples into 100 training samples and 19,800 test samples to simulate the low data setting. 
Results below show that our regularization also helps in the biological domain, yielding relative improvements of 43.1\% on the retrieval task and 30.3\% on the cell type classification task, even if the Platonic representation hypothesis is not guaranteed in the biological domain.

\begin{table}[h]
\centering
\caption{Retrieval and classification performance for single-cell image-transcriptome alignment}
\begin{tabular}{lcc}
\toprule
\textbf{Method} & \textbf{Retrieval (R@5)} & \textbf{Classification} \\
\midrule
Linear + Last \citep{method_mapping} & 4.2 & 40.3 \\
Linear + Similar + $\shortmethod$ & \textbf{6.0} & \textbf{52.5} \\
\bottomrule
\label{tab:Biology}
\end{tabular}
\end{table}

\subsection{Modality Gap Analysis}\label{appsub:ModalityGap}
We measured the modality gap~\cite{liang2022mind} for three different alignment setups for a ViT-Giant and RoBERTa Large, trained on COCO with a linear alignment function. 
The modality gap was measured for the two image-text datasets considered in our submission (i.e., COCO and Flickr30) to ensure that the comparison is influenced solely by the data itself and not by any other artifacts introduced by the evaluation, such as a zero-shot template, as would be necessary for the classification datasets. 
Interestingly, the results given in Table \ref{tab:ModalityGap} show that the modality gap indeed decreases when first selecting the most similar layers and then further when applying \method regularization. 
This aligns with the findings from the respective paper \cite{liang2022mind}, that the modality gap indeed has a relationship with downstream performance.

\begin{table}[h]
\centering
\caption{
Modality gap for different alignment setups on the COCO and Flickr30 test sets. 
The gap was measured as the Euclidean distance between the mean of the image embeddings and the mean of the text embeddings after $L_2$ normalization.
}
\begin{tabular}{lcc}
\toprule
\textbf{Method} & \textbf{COCO} & \textbf{Flickr30} \\
\midrule
Linear + Last \citep{method_mapping} & 6.6 & 6.8 \\
Linear + Similar & 6.0 & 6.3 \\
Linear + Similar + $\shortmethod$ & 4.5 & 3.8 \\
\bottomrule
\label{tab:ModalityGap}
\end{tabular}
\end{table}

\subsection{Ablation study}\label{appsub:Ablation}
Table \ref{tab:Ablation} examines the influence of different hyperparameters of the proposed method.
Specifically the number of hierarchical levels in the \method regularizer, regularization strength $\lambda$, layer selection strategy, normalization schemes, and distance functions. 
Varying the number of levels from one to five yields virtually identical accuracy across datasets, indicating that a single-level penalty suffices to preserve pretrained geometry. 
A strong regularization weight $\lambda=10$ gives the best performance (\textit{e.g.}, STL10 at 92.6\%, CIFAR10 at 96.3\%), whereas $\lambda \in \{0.1, 1\}$ under-regularizes leading to worse results. 
Without regularization, choosing the last layer yields 75.6\% on STL10 and 85.5\% on CIFAR10, but using mutual kNN layer selection boosts these to 79.7\% and 89.0\%, respectively. 
These results confirm that both strategic layer choice and sufficient regularization are critical for maximally leveraging low-data regimes.
Additionally, Table \ref{tab:ModelCombination} and Figure \ref{fig:HeatmapModelCombination} from the main paper confirm that best-layer alignment and \method regularization consistently increase performance across four language-vision model pairs. 
For normalization schemes and distance functions, \method demonstrates robustness to these changes, highlighting that the primary benefit stems from the idea of regularizing hierarchical relationships.

The variable $N$ in the default setting of \method represents the size of the batch, typically set to $4,096$ random samples (here approximately 6\% of the entire dataset), which allows for a sufficient estimate of the true data generation distribution while maintaining computational efficiency. 
In Table \ref{tab:Ablation}, we also conducted experiments to demonstrate how the choice of sample size affects performance.
Results show that the benefit of the regularization increases with increased sample size, but also show its efficiency, since even with 128 samples, there is a clear improvement.
Thus a practical choice is to use bigger batch sizes if possible.
Additionally we investigated the choice of randomly selecting the subset of samples used to compute the regularization.
Results show that the regularization yields very similar results regardless of whether it uses a fixed (same-sized) or a random subset of the data.

\begin{table}[h]
    \centering
    \caption{
        Ablation of components for two coarse-grained (STL10, CIFAR10) and two fine-grained tasks (Food101, CIFAR100), when applying \method on different layers $l$, changing the contribution of $\shortmethod$, choosing the optimal layers according different metrics, changing the size and the type of subset used to compute the regularization, changing normalization, and using different distance functions.
        All experiments are performed for linear alignment.
    }
    \label{tab:Ablation}
    \begin{tabular}{l rrrr}
        \toprule
        \textbf{Ablation} & \textbf{STL10} & \textbf{CIFAR10} & \textbf{Food101} & \textbf{CIFAR100} \\
        \midrule
        \multicolumn{5}{l}{\textit{Regularization levels}}\\
        $\shortmethod[1]$ 
            & \textbf{92.6} & 96.3 & \textbf{30.6} & \textbf{51.3} \\
        $\shortmethod[2]$ 
            & 92.5 & \textbf{96.4} & \textbf{30.6} & \textbf{51.3} \\
        $\shortmethod[3]$ 
            & \textbf{92.6} & 96.3 & \textbf{30.6} & 51.1 \\
        $\shortmethod[5]$ 
            & 92.5 & 96.1 & \textbf{30.6} & 51.0 \\
        \midrule
        \multicolumn{5}{l}{\textit{Regularization strength}}\\
        $\lambda = 0.1$ 
            & 90.8 & 95.1 & \textbf{30.9} & 50.2 \\
        $\lambda = 1.0$ 
            & 91.4 & 95.9 & 30.8 & 50.8 \\
        $\lambda = 10.0$ 
            & \textbf{92.6} & \textbf{96.3} & 30.6 & \textbf{51.3} \\
        \midrule
        \multicolumn{5}{l}{\textit{Layer selection (w/o $\shortmethod$)}}\\
        Last 
            & 75.6 & 85.5 & 14.8 & 34.0 \\
        CKA \cite{kornblith2019similarity} 
            & \textbf{87.9} & 74.5 & 11.5 & 23.3 \\
        Unbiased CKA \cite{song2012feature} 
            & \textbf{87.9} & 74.5 & 11.5 & 23.3 \\
        Mutual kNN (k=10) \cite{pmlr-v235-huh24a} 
            & 79.4 & 88.1 & \textbf{16.0} & 32.7 \\
        Mutual kNN (k=rice) 
            & 79.7 & \textbf{89.0} & 14.6 & \textbf{33.1} \\
        \midrule
        \multicolumn{5}{l}{\textit{Batch size}}\\
        $N = 128$ 
            & 90.3 & 96.5 & 29.2 & 48.6 \\
        $N = 1,024$ 
            & 92.1 & \textbf{96.8} & 30.2 & 50.4 \\
        $N = 4,096$ 
            & \textbf{92.6} & 96.3 & 30.6 & 51.3 \\
        $N = 8,192$ 
            & 92.5 & 96.5 & \textbf{31.0} & \textbf{52.1} \\
        \midrule
        \multicolumn{5}{l}{\textit{Subset type}}\\
        random 
            & \textbf{92.6} & 96.3 & 30.6 & 51.3 \\
        fixed
            & \textbf{92.6} & \textbf{96.5} & \textbf{30.7} & \textbf{51.5} \\
        \midrule
        \multicolumn{5}{l}{\textit{Normalization schemes}}\\
        Normalize + centering	 
            & 92.6 & 96.3 & 30.6 & 51.3 \\
        Centering + normalize	 
            & 92.1 & 96.4 & 30.8 & 51.1 \\
        Normalize	 
            & \textbf{92.7} & 96.1 & 30.5 & \textbf{51.5} \\
        Standard scaling
            & 90.8 & \textbf{95.1} & \textbf{31.0} & 50.3 \\
        \midrule
        \multicolumn{5}{l}{\textit{Distance functions}}\\
        Cosine 
            & \textbf{92.6} & \textbf{96.3} & \textbf{30.6} & \textbf{51.3} \\
        RBF	
            & 90.8 & 94.9 & 30.9 & 50.3 \\
        Spearman
            & 90.4 & 91.6 & 30.0 & 50.0 \\
        \bottomrule
    \end{tabular}
\end{table}

\subsection{Comparison to unsupervised methods}\label{appsub:ComparisonUnsuper}
Table \ref{tab:vs_unspervised} compares our supervised alignment method (Linear + Similar + $\shortmethod$) with unsupervised Kernel Local CKA~\cite{method_cka} and ASIF~\cite{norelli2023asif}. 
Across all three evaluation tasks (COCO, CIFAR-10, and CIFAR-100), our method yields significant improvement in both top-1 and top-5 accuracy while relying on only 80,000 paired samples. 
For instance, for COCO retrieval, Kernel Local CKA attains only 13.1\% R@1, whereas the proposed method reaches 32.7\% (a 19.6 pp. improvement) and improves R@5 from 40.0\% to 56.2\%. 
Similarly, on CIFAR10 our approach jumps from 0.06\% to 96.8\% top-1, and on CIFAR-100 from 1.13\% to 46.9\%. 
Compared against ASIF, results demonstrate that our method again substantially outperforms it, achieving relative improvements of 32.9\% on the COCO retrieval task, 32.9\% on the CIFAR-100 fine-grained classification task, and 5.6\% on the CIFAR-10 coarse-grained classification task.

\begin{table}[h]
    \centering
    \caption{
        Top-1 and Top-5 accuracy of unsupervised approaches compared against our framework on different datasets.
    }
    \label{tab:vs_unspervised}
    \begin{tabular}{lcccccc}
    \toprule
    & \multicolumn{2}{c}{\textbf{COCO}} & \multicolumn{2}{c}{\textbf{CIFAR10}} & \multicolumn{2}{c}{\textbf{CIFAR100}} \\
    \cmidrule(lr){2-3} \cmidrule(lr){4-5} \cmidrule(lr){6-7}
    \textbf{Method} 
        & R@1 & R@5 
        & Top-1 & Top-5 
        & Top-1 & Top-5 \\
    \midrule
    Kernel Local CKA \cite{method_cka} 
        & 13.1 & 40.0 & 0.06 & 3.2 & 1.13 & 4.0 \\
    ASIF \cite{norelli2023asif} 
        & 24.6 & 45.4 & 91.7 & 94.5 & 23.8 & 38.2 \\
    Linear + Similar + $\shortmethod$  
        & \textbf{32.7} & \textbf{56.2} & \textbf{96.8} & \textbf{96.8} & \textbf{46.9} & \textbf{46.9} \\
    \bottomrule
    \end{tabular}
\end{table}

\subsection{Extended downstream results}\label{appsub:DetailDownstream}
Table~\ref{tab:DetailResults} expands on the results in Table \ref{tab:our_method} from the main paper, detailing results on all evaluation tasks, including 22 zero-shot classification and two retrieval datasets with an additional fourth alignment technique (FuseMix~\cite{vouitsis2024data}).
Figure~\ref{fig:LowDataPerformanceLastVSReg} expands on Figure~\ref{fig:LowDataPerformance} with more zero-shot classification and retrieval tasks.
Figure~\ref{fig:HeatmapModelCombinationVision} illustrates the performance of model combinations across different vision models, analogous to Figure~\ref{fig:HeatmapModelCombination} from the main paper, but with more combinations.
Table~\ref{tab:ModelCombination} expands on Figure \ref{fig:HeatmapModelCombination} from the main paper, detailing results on all evaluation tasks, including 22 zero-shot classification and two retrieval datasets.

\begin{table}[h]
    \centering
    \caption{
        Performance for multiple evaluation tasks and alignment methods.
        \underline{Underline} shows the best performance in the respective group (\textit{i.e.}, alignment method), and \textbf{bold} indicates the overall best performance for the respective dataset.
        This table expands on table \ref{tab:our_method} from the main paper, detailing results on 22 zero-shot classification and two retrieval datasets.
    }
    \label{tab:DetailResults}
    \resizebox{1.0\linewidth}{!}{%
    \begin{tabular}{l rrrrrrrrrrr rr}
        \toprule
        & \multicolumn{11}{c}{\textbf{Zero-shot Classification} (Accuracy)} & \multicolumn{2}{c}{\textbf{Retrieval} (R@1)} \\
        \cmidrule(lr){2-12}
        \cmidrule(lr){13-14}
        
        \textbf{Method} & 
            \textbf{STL10} & 
            \textbf{Caltech101} & 
            \textbf{Food101} & 
            \textbf{CIFAR10} & 
            \textbf{CIFAR100} & 
            \textbf{ImageNet} &
            \textbf{UCF101} & 
            \textbf{Kinetics700} & 
            \textbf{SUN397} & 
            \textbf{EuroSAT} & 
            \textbf{Resisc45} &

            \textbf{Flickr30 I2T} &
            \textbf{Flickr30 T2I} \\
        \midrule
        CLIP (OpenAI) \cite{radford2021learning}  
            & 99.4 & 83.5 & 93.0 & 95.0 & 74.6 & 74.3 & 75.0 & 44.5 & 67.6 & 57.5 & 65.2 
            & 88.1 & 71.5 \\
        \midrule
        Linear + Last \citep{method_mapping}
            & 75.6 & 37.9 & 14.8 & 85.5 & 34.0 & 9.9 & 21.5 & 9.9 & 19.6 & 19.9 & 19.9 
            & 32.5 & 22.1 \\
        Linear + Similar
            & 79.7 & 39.5 & 14.6 & 89.0 & 33.1 & 10.5 & 20.4 & 10.0 & 19.9 & 22.4 & 20.3 
            & 35.3 & 24.0 \\
            
        Linear + Similar + $\shortmethod$ 
            & \underline{92.6} & \underline{56.0} & \textbf{30.6} & \underline{96.3} & \underline{51.3} & \underline{24.7} & \underline{41.8} & \underline{20.4} & \underline{37.9} & \textbf{29.2} & \underline{36.5}
            & \underline{65.8} & \underline{53.7} \\
        \midrule
            
        MLP + Last \citep{uni2multi}
            & 76.6 & 38.2 & 15.6 & 79.2 & 35.3 & 10.6 & 27.7 & 10.0 & 18.1 & \underline{25.4} & 25.0 
            & 31.6 & 20.3 \\
        MLP + Similar
            & 84.0 & 38.8 & 17.1 & 81.5 & 34.5 & 11.4 & 25.7 & 10.9 & 20.1 & 22.2 & 21.3 
            & 36.4 & 25.0 \\
            
        MLP + Similar + $\shortmethod$ 
            & \textbf{92.7} & \underline{56.0} & \underline{30.5} & \underline{96.3} & \underline{52.1} & \underline{25.1} & \underline{42.3} & \underline{20.8} & \textbf{38.3} & 25.0 & \underline{34.7} 
            & \textbf{65.9} & \textbf{53.8} \\
        \midrule
        
        CSA + Last \citep{li2025csa}
            & 77.9 & 31.4 & \underline{29.3} & 78.5 & 47.4 & 23.2 & 35.5 & 18.4 & 33.4 & 26.4 & 28.3 
            & 47.0 & 38.3 \\
        CSA + Similar
            & 80.0 & 33.6 & 28.0 & 80.8 & 47.4 & 23.3 & 34.4 & 17.9 & 33.6 & 22.5 & 29.6 
            & 48.6 & 39.0 \\
            
        CSA + Similar + $\shortmethod$ 
            & \underline{91.7} & \textbf{61.5} & 28.6 & \textbf{97.2} & \textbf{56.4} & \textbf{26.8} & \textbf{44.0} & \textbf{21.6} & \underline{37.6} & \underline{28.4} & \textbf{38.8}
            & \underline{56.1} & \underline{43.1} \\
        \midrule
            
        FuseMix + Last \citep{vouitsis2024data}
            & 81.8 & 37.2 & 16.4 & 86.6 & 34.6 & 10.0 & 23.1 & 9.9 & 17.7 & 16.0 & 23.9 
            & 32.5 & 21.9 \\
        FuseMix + Similar
            & 81.8 & 37.2 & 16.4 & 86.6 & 34.6 & 10.0 & 23.1 & 9.9 & 17.7 & 16.0 & 23.9 
            & 32.5 & 21.9 \\
        FuseMix + Similar + $\shortmethod$ 
            & \underline{91.7} & \underline{55.8} & \underline{29.3} & \underline{96.2} & \underline{46.3} & \underline{21.1} & \underline{35.6} & \underline{19.1} & \underline{34.4} & \underline{27.4} & \underline{36.6} 
            & \underline{58.2} & \underline{47.6} \\
            
        \bottomrule
        \bottomrule

        & \multicolumn{11}{c}{\textbf{Zero-shot Classification} (Accuracy)} & \multicolumn{2}{c}{\textbf{Retrieval} (R@1)} \\

        \cmidrule(lr){2-12}
        \cmidrule(lr){13-14}
        
        \textbf{Method} & 
            \textbf{Gtsrb} & 
            \textbf{Kitti} & 
            \textbf{Country211} & 
            \textbf{FER2013} &
            
            \textbf{PCam} & 
            \textbf{Cars} & 
            \textbf{Aircraft} & 
            \textbf{Pets} & 
            \textbf{Flowers} &
            \textbf{MNIST} & 
            \textbf{DTD} &
            
            \textbf{COCO I2T} &
            \textbf{COCO T2I} \\
        \midrule
        CLIP (OpenAI) \cite{radford2021learning}
            & 45.3 & 25.2 & 28.8 & 52.5 & 61.5 & 50.96 & 32.2 & 92.7 & 75.2 & 59.7 & 52.4 
            & 34.6 & 18.5\\
        \midrule
        Linear + Last \citep{method_mapping}
            & 3.4 & 26.8 & 1.4 & 19.7 & 51.2 & 1.0 & 1.3 & 7.0 & 0.2 & \underline{13.7} & 9.0
            & 6.9 & 3.6 \\
        Linear + Similar
            & 4.0 & \underline{29.5} & 1.1 & 21.9 & \textbf{53.0} & 0.9 & 1.5 & 4.9 & 0.1 & 7.9 & 8.2 
            & 7.1 & 4.0\\
            
        Linear + Similar + $\shortmethod$ 
            & \underline{7.8} & 28.5 & \underline{2.3} & \textbf{30.9} & 43.8 & \textbf{2.5} & \underline{2.6} & \underline{13.2} & \textbf{6.2} & 12.7 & \underline{18.4} 
            & \underline{18.2} & \underline{13.3} \\
        \midrule
            
        MLP + Last \citep{uni2multi}
            & 7.8 & 24.1 & 1.2 & 18.9 & \underline{50.6} & 0.9 & 2.1 & 5.3 & 0.5 & \underline{14.4} & 8.4 
            & 6.0 & 2.7\\
        MLP + Similar
            & 7.4 & 25.4 & 1.3 & 17.9 & 50.3 & 1.2 & 1.6 & 6.1 & 0.4 & 9.9 & 9.4 
            & 7.1 & 3.8\\
            
        MLP + Similar + $\shortmethod$ 
            & \textbf{7.9} & \textbf{32.7} & \textbf{2.5} & \underline{28.2} & 47.0 & \underline{2.2} & \textbf{3.3} & \underline{15.3} & \underline{5.6} & 13.6 & \textbf{18.7} 
            & \textbf{18.3} & \textbf{13.4} \\
        \midrule
        
        CSA + Last \citep{li2025csa}
            & \underline{7.0} & 27.1 & 1.6 & 28.1 & 42.9 & \underline{1.8} & 1.5 & 14.4 & \textbf{6.2} & 10.2 & 12.6 
            & 9.2 & 8.0\\
        CSA + Similar
            & 6.0 & 22.3 & 1.7 & 27.6 & \underline{51.4} & 1.7 & 1.9 & 14.9 & 4.9 & 11.6 & 13.0 
            & 9.6 & \underline{8.1} \\
            
        CSA + Similar + $\shortmethod$ 
            & 4.9 & \underline{31.6} & \underline{2.2} & \textbf{30.9} & 45.0 & 1.7 & \underline{2.1} & \textbf{17.0} & 4.1 & \underline{11.8} & \underline{14.8} 
            & \underline{10.3} & \underline{8.1} \\
        \midrule
            
        FuseMix + Last \citep{vouitsis2024data}
            & 3.3 & \underline{32.0} & 1.1 & 22.5 & \underline{52.3} & 1.2 & \underline{1.4} & 8.0 & 2.0 & 10.7 & 9.9 
            & 5.8 & 4.0 \\
        FuseMix + Similar
            & 3.3 & \underline{32.0} & 1.1 & 22.5 & \underline{52.3} & 1.2 & \underline{1.4} & 8.0 & 2.0 & 10.7 & 9.9 
            & 5.8 & 4.0 \\
        FuseMix + Similar + $\shortmethod$ 
            & \underline{5.3} & 28.9 & \underline{2.0} & \underline{24.3} & 50.2 & \underline{1.5} & \underline{1.4} & \underline{11.3} & \underline{2.9} & \textbf{15.0} & \underline{15.7} 
            & \underline{15.0} & \underline{10.6} \\
        \bottomrule
    \end{tabular}%
    }
\end{table}

\begin{figure}[h]
    \centering
    \hspace*{-0.65cm}
    \includegraphics[width=1.05\linewidth]{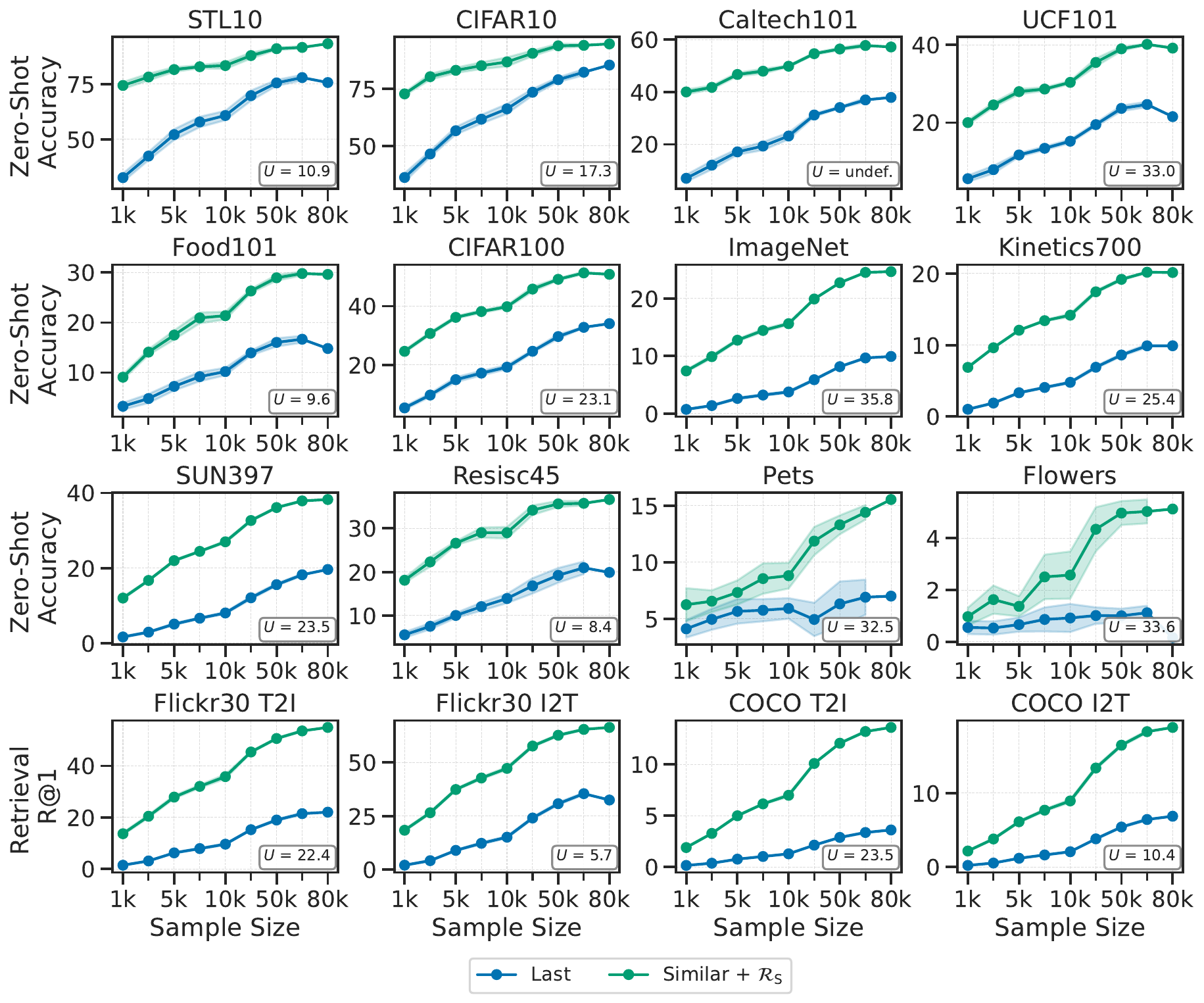}
    \caption{
        Zero-shot and retrieval performance for training linear alignment using the last or most similar layer. 
        Here, $U$ quantifies the proposed method's label efficiency by computing the utility compared to using the last layer.
        This figure expands on Figure~\ref{fig:LowDataPerformance} with more evaluation tasks.
    }
    \label{fig:LowDataPerformanceLastVSReg}
\end{figure}

\begin{figure}[h]
    \centering
    \hspace*{-0.65cm}
    \includegraphics[width=1.05\linewidth]{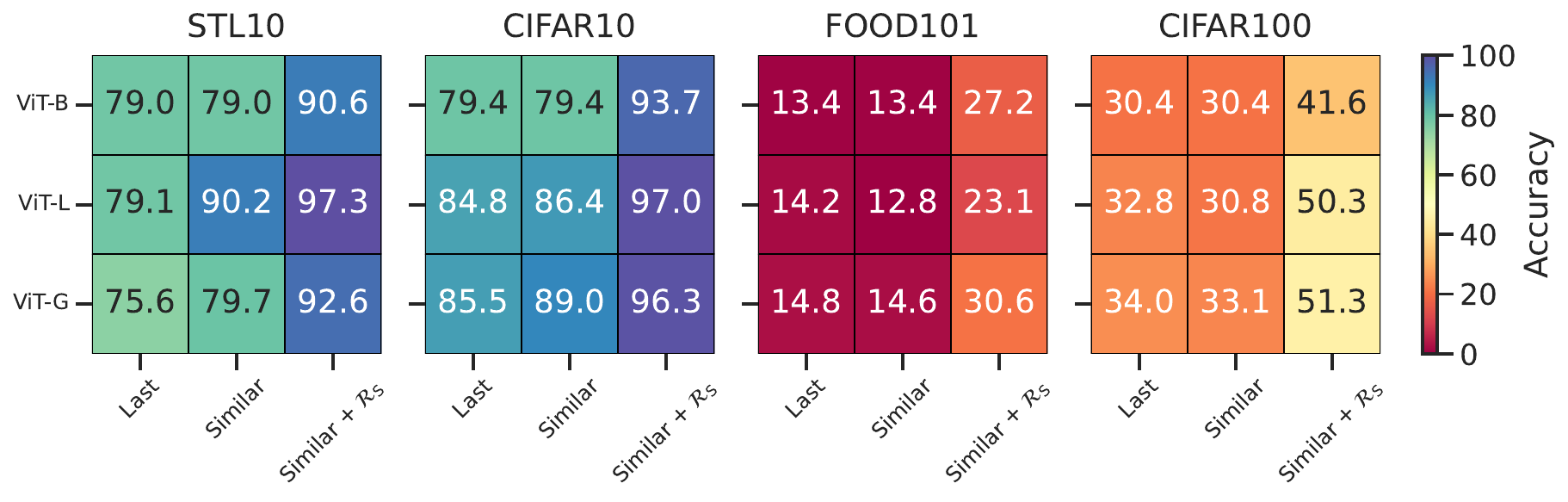}
    \caption{
        Performance of different vision model combinations when choosing the last or most similar layer for alignment and the influence of adding \method regularization when training a single linear alignment layer.
    }
    \label{fig:HeatmapModelCombinationVision}
\end{figure}

\begin{table}[h]
    \centering
    \caption{
        Performance for different model combinations for linear alignment when aligning the last or the most similar layer and when adding $\shortmethod$.
        This table expands on Figure \ref{fig:HeatmapModelCombination} from the main paper, detailing results on 22 zero-shot classification and two retrieval datasets.
    }
    \label{tab:ModelCombination}
    \resizebox{1.0\linewidth}{!}{%
    \begin{tabular}{lll rrrrrrrrrrr rr}
    \toprule
    & & & \multicolumn{11}{c}{\textbf{Zero-shot Classification} (Accuracy)} & \multicolumn{2}{c}{\textbf{Retrieval} (R@1)} \\
        \cmidrule(lr){4-14}
        \cmidrule(lr){15-16}
    \textbf{Language Model} & \textbf{Vision Model} & \textbf{Method} &
        \textbf{STL10} & 
        \textbf{Caltech101} & 
        \textbf{Food101} & 
        \textbf{CIFAR10} & 
        \textbf{CIFAR100} & 
        \textbf{ImageNet} &
        \textbf{UCF101} & 
        \textbf{Kinetics700} & 
        \textbf{SUN397} & 
        \textbf{EuroSAT} & 
        \textbf{Resisc45} &

        \textbf{Flickr30 I2T} &
        \textbf{Flickr30 T2I} \\
    \midrule
    \multirow{6}{*}{RoBERTa}
    & \multirow{3}{*}{ViT-L}
    & Last \cite{method_mapping}
        & 79.1 & 34.2 & 14.2 & 84.8 & 32.8 & 8.9 & 21.9 & 9.0 & 16.1 & 12.8 & 23.2 
        & 28.2 & 19.7\\
    & & Similar
        & 90.2 & 39.8 & 12.8 & 86.4 & 30.8 & 10.3 & 24.2 & 8.4 & 18.4 & 16.4 & 15.0 
        & 36.3 & 24.3 \\
    & & Similar + $\shortmethod$
        & \textbf{97.3} & \textbf{60.4} & \underline{23.1} & \textbf{97.0} & \underline{50.3} & \underline{22.7} & \underline{33.9} & \underline{16.8} & \underline{35.5} & \underline{25.8} & \underline{26.2} 
        & 58.8 & 47.4 \\
    \cmidrule{2-16}
    & \multirow{3}{*}{ViT-G}
    & Last \cite{method_mapping}
        & 75.6 & 37.9 & 14.8 & 85.5 & 34.0 & 9.9 & 21.5 & 9.9 & 19.6 & 19.9 & 19.9 
        & 32.5 & 22.1\\
    & & Similar
        & 79.7 & 39.5 & 14.6 & 89.0 & 33.1 & 10.5 & 20.4 & 10.0 & 19.9 & 22.4 & 20.3 
        & 35.3 & 24.0 \\
    & & Similar + $\shortmethod$
        & \underline{92.6} & \underline{56.0} & \underline{30.6} & \underline{96.3} & \textbf{51.3} & \textbf{24.7} & \textbf{41.8} & \textbf{20.4} & \textbf{37.9} & \underline{29.2} & \textbf{36.5} 
        & \textbf{65.8} & \textbf{53.7} \\
    \midrule
    \multirow{3}{*}{Llama3-8B}
    & \multirow{3}{*}{ViT-G}
    & Last \cite{method_mapping}
        & 68.9 & 27.5 & 22.7 & 74.3 & 27.0 & 6.1 & 12.8 & 5.7 & 15.5 & 13.9 & 12.4 
        & 21.8 & 11.5\\
    & & Similar
        & 85.3 & 24.4 & 26.5 & 80.8 & 29.4 & 7.3 & 17.2 & 6.7 & 18.6 & 8.5 & 11.1 
        & 30.7 & 21.5 \\
    & & Similar + $\shortmethod$
        & \underline{91.3} & \underline{53.9} & \underline{28.8} & \underline{89.4} & \underline{41.1} & \underline{17.6} & \underline{32.5} & \underline{14.0} & \underline{31.7} & \underline{24.6} & \underline{29.7} 
        & \underline{56.1} & \underline{44.0} \\
    \midrule
    \multirow{3}{*}{Llama13B}
    & \multirow{3}{*}{ViT-G}
    & Last \cite{method_mapping}
        & 70.8 & 33.5 & 20.6 & 69.5 & 26.4 & 5.4 & 8.6 & 5.7 & 12.7 & 20.7 & 13.0 
        & 22.3 & 12.4 \\
    & & Similar
        & 79.2 & 31.9 & 18.1 & 69.9 & 23.1 & 4.9 & 13.9 & 5.2 & 14.2 & 11.2 & 9.9 
        & 26.3 & 17.1 \\
    & & Similar + $\shortmethod$
        & \underline{93.6} & \underline{59.7} & \textbf{36.5} & \underline{87.3} & \underline{48.4} & \underline{21.5} & \underline{42.7} & \underline{15.8} & \underline{32.6} & \underline{23.4} & \underline{26.2} 
        & \underline{56.7} & \underline{48.5} \\
    \midrule
    \multirow{3}{*}{BGE-Small}
    & \multirow{3}{*}{ViT-G}
    & Last \cite{method_mapping}
        & 75.9 & 39.0 & 14.5 & 82.7 & 30.4 & 7.5 & 24.4 & 8.0 & 14.6 & 14.2 & 18.1 
            & 31.6 & 19.1 \\
    & & Similar
        & 81.9 & 37.8 & 14.2 & 87.3 & 29.5 & 8.2 & 25.7 & 7.3 & 16.1 & 12.7 & 15.2 
            & 35.9 & 23.0 \\
    & & Similar + $\shortmethod$
        & \underline{91.3} & \underline{52.2} & \underline{22.1} & \underline{92.8} & \underline{44.8} & \underline{17.2} & \underline{35.3} & \underline{14.9} & \underline{25.8} & \textbf{30.5} & \underline{29.4} 
            & \underline{56.9} & \underline{45.6} \\
    \midrule
    \multirow{3}{*}{BGE-Base}
    & \multirow{3}{*}{ViT-G}
    & Last \cite{method_mapping}
        & 82.5 & 36.7 & 16.5 & 77.2 & 36.0 & 9.4 & 28.3 & 9.4 & 17.2 & 23.7 & 21.4 
            & 31.7 & 18.0 \\
    & & Similar
        & 81.2 & 38.6 & 15.3 & 77.1 & 34.1 & 9.3 & 27.9 & 9.2 & 17.7 & 14.2 & 19.4 
            & 32.2 & 20.9 \\
    & & Similar + $\shortmethod$
        & \underline{87.0} & \underline{54.7} & \underline{23.4} & \underline{89.7} & \underline{46.7} & \underline{20.8} & \underline{42.6} & \underline{17.7} & \underline{34.8} & \underline{25.9} & \underline{34.3} 
            & \underline{60.0} & \underline{48.0} \\
    \midrule
    \multirow{3}{*}{all-MiniLM-L6-v2}
    & \multirow{3}{*}{ViT-T}
    & Last \cite{method_mapping}
        & 86.6 & 40.8 & 5.2 & 66.0 & 17.3 & 5.3 & 12.1 & 3.6 & 9.4 & \underline{22.5} & 7.4 
            & 16.9 & 14.1 \\
    & & Similar
        & 86.6 & 40.8 & 5.2 & 66.0 & 17.3 & 5.3 & 12.1 & 3.6 & 9.4 & \underline{22.5} & 7.4 
            & 16.9 & 14.1 \\
    & & Similar + $\shortmethod$
        & \underline{89.2} & \underline{45.6} & \underline{6.7} & \underline{68.9} & \underline{23.5} & \underline{8.9} & \underline{17.7} & \underline{6.0} & \underline{13.8} & 21.9 & \underline{9.9} 
            & \underline{23.6} & \underline{17.9} \\
    \midrule
    \multirow{3}{*}{all-mpnet-base-v2}
    & \multirow{3}{*}{ViT-T}
    & Last \cite{method_mapping}
        & \underline{85.7} & 42.9 & 5.9 & 65.2 & 22.9 & 8.9 & 18.5 & 5.9 & 16.0 & 18.7 & 9.3 
            & 22.6 & 14.8 \\
    & & Similar
        & \underline{85.7} & 42.9 & 5.9 & 65.2 & 22.9 & 8.9 & 18.5 & 5.9 & 16.0 & 18.7 & 9.3 
            & 22.6 & 14.8 \\
    & & Similar + $\shortmethod$
        & 85.1 & \underline{52.1} & \underline{8.7} & \underline{68.0} & \underline{28.2} & \underline{12.1} & \underline{21.6} & \underline{7.8} & \underline{19.7} & \underline{26.4} & \underline{12.6} 
            & \underline{28.9} & \underline{22.3} \\

    \bottomrule
    \bottomrule

    & & & \multicolumn{11}{c}{\textbf{Zero-shot Classification} (Accuracy)} & \multicolumn{2}{c}{\textbf{Retrieval} (R@1)} \\

    \cmidrule(lr){4-14}
    \cmidrule(lr){15-16}
    
    \textbf{Language Model} & \textbf{Vision Model} & \textbf{Method} &
        \textbf{Gtsrb} & 
        \textbf{Kitti} & 
        \textbf{Country211} & 
        \textbf{FER2013} &
        
        \textbf{PCam} & 
        \textbf{Cars} & 
        \textbf{Aircraft} & 
        \textbf{Pets} & 
        \textbf{Flowers} &
        \textbf{MNIST} & 
        \textbf{DTD} &
        
        \textbf{COCO I2T} &
        \textbf{COCO T2I} \\
    \midrule
    
    \multirow{6}{*}{RoBERTa}
    & \multirow{3}{*}{ViT-L}
    & Last \cite{method_mapping}
        & 3.3 & \underline{30.2} & 1.0 & 22.4 & \underline{50.8} & 1.3 & 1.4 & 7.9 & 2.0 & 10.9 & 9.7 
        & 5.1 & 3.4 \\
    & & Similar
        & 4.2 & 23.7 & 1.0 & 22.2 & 50.2 & 1.4 & 1.0 & 10.3 & 1.4 & \underline{11.6} & 10.0 
        & 6.1 & 3.9\\
    & & Similar + $\shortmethod$
        & \underline{4.6} & 28.2 & \underline{2.0} & \underline{25.3} & 50.0 & \underline{1.9} & \underline{3.1} & \underline{17.5} & \underline{6.5} & \underline{11.6} & \underline{17.1} 
        & \underline{15.5} & \underline{10.6} \\
    \cmidrule{2-16}
    & \multirow{3}{*}{ViT-G}
    & Last \cite{method_mapping}
        & 3.4 & 26.8 & 1.4 & 19.7 & 51.2 & 1.0 & 1.3 & 7.0 & 0.2 & \underline{13.7} & 9.0 
        & 6.9 & 3.6\\
    & & Similar
        & 4.0 & \underline{29.5} & 1.1 & 21.9 & \underline{53.0} & 0.9 & 1.5 & 4.9 & 0.1 & 7.9 & 8.2 
        & 7.1 & 4.0\\
    & & Similar + $\shortmethod$
        & \underline{7.8} & 28.5 & \textbf{2.3} & \underline{30.9} & 43.8 & \underline{2.5} & \underline{2.6} & \underline{13.2} & \underline{6.2} & 12.7 & \textbf{18.4} 
        & \textbf{18.2} & \textbf{13.3} \\
    \midrule
    \multirow{3}{*}{Llama3-8B}
    & \multirow{3}{*}{ViT-G}
    & Last \cite{method_mapping}
        & \underline{5.0} & 27.8 & 0.7 & 18.0 & \underline{54.6} & 0.4 & 2.2 & 3.2 & 3.4 & 13.2 & 4.9 
        & 3.5 & 2.8\\
    & & Similar
        & 2.7 & 26.1 & 0.9 & \underline{23.8} & 47.4 & 1.4 & 4.3 & 10.6 & 5.6 & 7.7 & 11.9 
        & 5.4 & 5.2\\
    & & Similar + $\shortmethod$
         & 3.4 & \underline{28.9} & \underline{2.1} & 19.3 & 53.8 & \underline{1.8} & \textbf{8.0} & \underline{15.9} & \textbf{10.0} & \textbf{14.5} & \underline{17.9} 
         & \underline{11.8} & \underline{10.5} \\
    \midrule
    \multirow{3}{*}{Llama13B}
    & \multirow{3}{*}{ViT-G}
    & Last \cite{method_mapping}
        & 4.5 & \textbf{37.4} & 1.0 & 13.7 & \textbf{58.2} & 1.9 & 2.7 & 6.7 & 1.9 & \textbf{14.5} & 6.3 
        & 4.0 & 3.0\\
    & & Similar
        & 5.7 & 35.7 & 0.7 & 13.4 & 49.6 & 1.8 & 2.8 & 7.3 & 3.4 & 12.4 & 8.6 
        & 4.9 & 4.1\\
    & & Similar + $\shortmethod$
        & \textbf{9.9} & 33.5 & \underline{1.6} & \underline{30.0} & 52.2 & \textbf{2.8} & \underline{6.6} & \textbf{19.9} & \underline{7.8} & 13.0 & \underline{17.3} 
        & \underline{13.9} & \underline{12.8} \\
    \midrule
    \multirow{3}{*}{BGE-Small}
    & \multirow{3}{*}{ViT-G}
    & Last \cite{method_mapping}
        & 5.9 & 26.4 & 0.9 & 21.5 & 50.3 & 0.7 & 0.8 & 6.3 & 1.8 & 8.0 & 8.6 
            & 6.0 & 3.2 \\
    & & Similar
        & 4.2 & 25.5 & 1.0 & 19.5 & \underline{51.2} & 0.9 & 1.1 & 9.5 & 5.2 & 4.6 & 8.5 
            & 7.1 & 4.4 \\
    & & Similar + $\shortmethod$
        & \underline{7.4} & \underline{40.7} & \underline{1.5} & \textbf{36.9} & 50.4 & \underline{1.6} & \underline{1.4} & \underline{12.2} & \underline{6.1} & \underline{9.8} & \underline{12.5} 
            & \underline{12.8} & \underline{10.3} \\
    \midrule
    \multirow{3}{*}{BGE-Base}
    & \multirow{3}{*}{ViT-G}
    & Last \cite{method_mapping}
        & \underline{6.6} & 36.2 & 0.9 & 20.6 & \underline{50.3} & 0.9 & 2.0 & 7.5 & 1.6 & \underline{11.7} & 5.6 
            & 5.5 & 2.7 \\
    & & Similar
        & 6.2 & 35.5 & 0.9 & 22.9 & 50.2 & 0.7 & \underline{2.5} & 7.9 & 2.6 & 9.0 & 7.3 
            & 6.0 & 3.7 \\
    & & Similar + $\shortmethod$
        & 2.9 & \underline{42.8} & \underline{2.2} & \underline{29.9} & 50.0 & \underline{1.5} & 2.3 & \underline{10.0} & \underline{3.9} & 11.0 & \underline{15.1} 
            & \underline{15.2} & \underline{11.4} \\
    \midrule
    \multirow{3}{*}{all-MiniLM-L6-v2}
    & \multirow{3}{*}{ViT-T}
    & Last \cite{method_mapping}
        & \underline{5.6} & 11.7 & 0.7 & \underline{23.1} & \underline{50.1} & 0.3 & 1.2 & \underline{7.8} & \underline{3.5} & 9.3 & 5.7 
            & 2.8 & 2.5 \\
    & & Similar
        & \underline{5.6} & 11.7 & 0.7 & \underline{23.1} & \underline{50.1} & 0.3 & 1.2 & \underline{7.8} & \underline{3.5} & 9.3 & 5.7 
            & 2.8 & 2.5 \\
    & & Similar + $\shortmethod$
        & 3.5 & \underline{14.7} & \underline{1.0} & 13.3 & 48.3 & \underline{0.9} & \underline{1.3} & 6.4 & 0.8 & \underline{9.6} & \underline{6.3} 
            & \underline{4.5} & \underline{3.2} \\
    \midrule
    \multirow{3}{*}{all-mpnet-base-v2}
    & \multirow{3}{*}{ViT-T}
    & Last \cite{method_mapping}
        & 1.4 & \underline{26.7} & 1.1 & 14.8 & 45.2 & 1.1 & \underline{2.4} & 5.3 & \underline{1.1} & 9.4 & 10.2 
            & 4.3 & 2.9 \\
    & & Similar
        & 1.4 & \underline{26.7} & 1.1 & 14.8 & 45.2 & 1.1 & \underline{2.4} & 5.3 & \underline{1.1} & 9.4 & 10.2 
            & 4.3 & 2.9 \\
    & & Similar + $\shortmethod$
        & \underline{2.8} & 23.0 & \underline{1.4} & \underline{14.9} & \underline{45.3} & \underline{1.2} & 1.5 & \underline{6.1} & 1.0 & \underline{10.6} & \underline{12.1} 
            & \underline{6.3} & \underline{4.5} \\
    \bottomrule
    \end{tabular}
    }
\end{table}

\end{document}